\pgfplotsset{compat=newest} 
\title{On Robustness of Linear Classifiers to Targeted Data Poisoning}
\author {
    Nakshatra Gupta\textsuperscript{\rm 1},
    Sumanth Prabhu S\textsuperscript{\rm 1}\thanks{Now at Relyance AI, Bangalore, India},
    Supratik Chakraborty\textsuperscript{\rm 2},
    R Venkatesh\textsuperscript{\rm 1}
}
\newtheorem{theorem}{Theorem}
\newcommand{\dimension}{\mathit{d}}
\newcommand{\features}{\mathcal{X}}
\newcommand{\labels}{\mathcal{Y}}
\newcommand{\model}{\mathit{f}}
\newcommand{\weight}{w}
\newcommand{\weights}{\mathbf{\weight}}
\newcommand{\bias}{\mathit{b}}
\newcommand{\linmodel}{\mathit{f_{\weights, \bias}}}
\newcommand{\hypo}{\mathcal{H}}
\newcommand{\sgnfn}{\operatorname{sign}}
\newcommand{\tdata}{\mathcal{D}}
\newcommand{\ctdata}{\mathcal{D}_c}
\newcommand{\ptdata}{\mathcal{D}_p}
\newcommand{\robustness}{\mathit{r}}
\newcommand{\uprob}{\hat{\robustness}}
\newcommand{\lowrob}{\check{\robustness}}
\newcommand{\bigM}{\mathit{M}}
\newcommand{\nbigM}{\mathit{-M}}
\newcommand{\tsize}{\mathit{m}}
\newcommand{\lossfn}{\mathit{l}}
\newcommand{\robint}{\textsc{RobustRange}\xspace}
\newcommand{\ipr}{IPr\xspace}
\newcommand{\iprtool}{\textsc{IP-relabel}\xspace}
\begin{document}

\maketitle

\begin{abstract}
Data poisoning is a training-time attack that undermines the trustworthiness of learned models. In a targeted data poisoning attack, an adversary manipulates the training dataset to alter the classification of a targeted test point. Given the typically large size of training dataset, manual detection of poisoning is difficult. An alternative is to automatically measure a dataset's robustness against such an attack, which is the focus of this paper. We consider a threat model wherein an adversary can only perturb the labels of the training dataset, with knowledge limited to the hypothesis space of the victim's model. In this setting, we prove that finding the robustness is an NP-Complete problem, even when hypotheses are linear classifiers. To overcome this, we present a technique that finds lower and upper bounds of robustness. Our implementation of the technique computes these bounds efficiently in practice for many publicly available datasets. We experimentally demonstrate the effectiveness of our approach. Specifically, a poisoning exceeding the identified robustness bounds significantly impacts test point classification. We are also able to compute these bounds in many more cases where state-of-the-art techniques fail.
\end{abstract}

\section{Introduction}
\label{sec:intro}

\emph{Data poisoning} is a training-time attack wherein an adversary perturbs the training dataset to alter the predictions of the trained models~\cite{biggio2012poisoning}. Here, the perturbation is called poisoning. Manually determining whether a training dataset is poisoned or not is challenging due its size. Consequently, this attack affects the trustworthiness of machine learning models and hinders their deployment in industries~\cite{kumar2020adversarial}.

\emph{Targeted data poisoning} refers to an adversary manipulating the training data to alter the prediction of a single (or a small fixed subset of) test data. While targeted data poisoning can be sophisticated, such as backdoor attacks that use patterns called triggers, we focus on trigger-less attack. In such an attack, an adversary targets a specific test point and aims for the victim's classifier to classify this point as a desired class. An example is a loan applicant adversary who poisons the training data so that the model trained on the poisoned data approves the adversary's loan.

 We consider a practical adversary who can poison the training dataset only by contaminating labels. While contaminating the features of a dataset can be difficult, labels can be easily manipulated as label errors are common, especially when they are collected from external sources~\cite{adebayo2023quantifying}. Furthermore, we assume that the adversary has knowledge of only the hypothesis space of the victim's classifier, but not the training procedure or the actual classifier. In other words, the victim's training procedure is treated as a black-box. In this setting, we define robustness as:
\begin{quote}
\itshape
The minimum number of label perturbations in the training dataset required for a classifier from the hypothesis space to classify the test point as desired.    
\end{quote}

For example, consider the dataset Pen-based Handwritten Digit Recognition~\cite{pen-based_recognition_of_handwritten_digits_81}. In the dataset, a test point shown in Figure~\ref{fig:digit_attack_plot} represents the handwritten digit `0'. However, our tool finds that it can be labeled as the digit `4' by changing the labels of just two points in the training dataset, out of thousands. 

The above notion of robustness is not only useful to measure confidence in the training data, but also in contesting a model's predictions. When a model produces an undesirable decision, the user may want to identify the minimal set of training point labels that potentially influenced the outcome. If any of these labels are incorrect, the user can flag and contest the decision. This application is directly addressed by a solution to the robustness problem defined above. 

In our setting of targeted black-box poisoning by label perturbation, we focus on linear classifiers in this work. While existing data poisoning work typically considers linear classifiers~\cite{biggio2011support, biggio2012poisoning, xiao2012adversarial, xiao2015support,suya2024distributions, zhao2017efficient, paudice2019label, yang2023relabeling}, our setting considers a weaker threat model. Moreover, linear models are still relevant as they perform well on many tasks ~\cite{ferrari2019we,tramer2020differentially,chen2021learning,chen2023continuous} and are subject of several recent related works~\cite{yang2023relabeling,yang2023many,suya2024distributions}.

We make five contributions in this paper. We first establish a theoretical result showing that the problem of finding robustness is NP-Complete. To address this challenge, we propose techniques that approximate robustness. In this direction, our second contribution is a partition-based method that computes a lower bound for robustness. Third, we introduce an augmented learning procedure designed to find an upper bound for robustness. Our fourth contribution is a prototype tool \robint, which implements our novel techniques. Finally, we evaluate \robint on 15 publicly available datasets of different sizes. Our experiments reveal several interesting facts: the average robustness can be as low as 3\%, \robint performs better than a  SOTA tool~\cite{yang2023relabeling}, even when the SOTA tool knows the victim's classifier and learning procedure. 

\begin{figure}[t]  
    \centering  
    \includegraphics[width=0.45\textwidth]{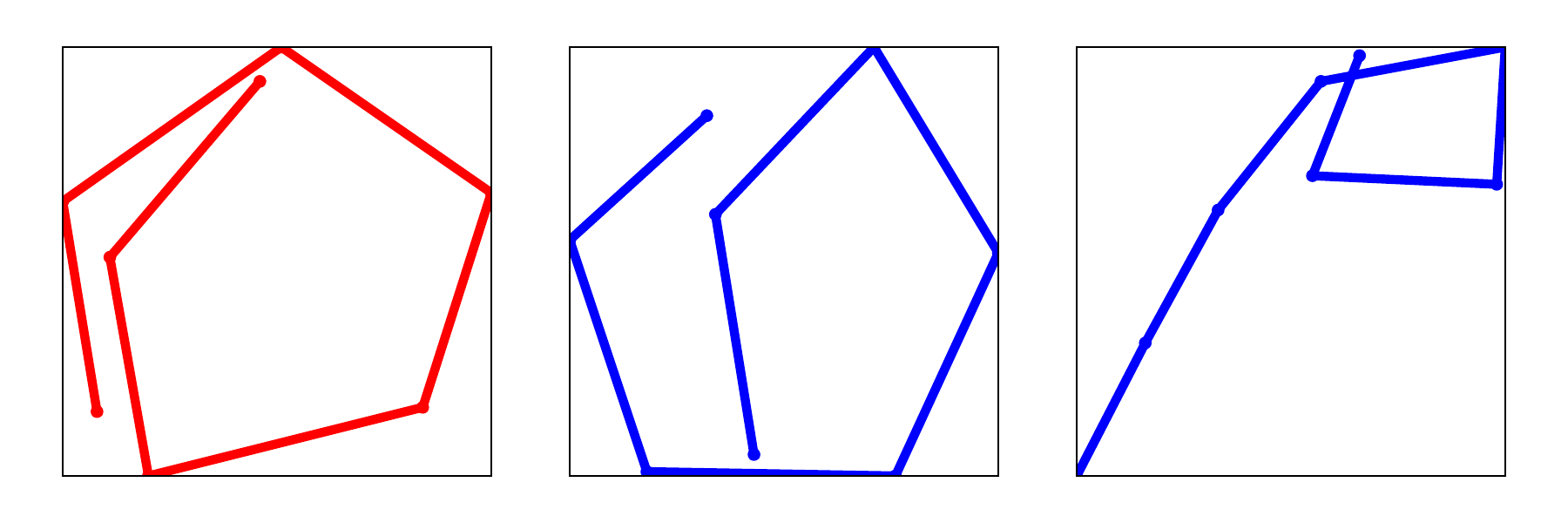}  
    \caption{
    %A test point (leftmost plot) and two training points from the dataset Pen-based Handwritten Digit Recognition~\cite{pen-based_recognition_of_handwritten_digits_81} are shown, all originally labeled as digit 0. The robustness of the test point as determined by our tool is 2, as relabeling only the two training points as digit 4, changes the test point's label to digit 4. 
    A test point (leftmost plot) and two training points all originally labeled as digit `0'. Our tools finds that modifying only these two training points as digit `4' can change the test point's label to digit `4'. 
    }  
    \label{fig:digit_attack_plot}  
\end{figure}

\section{Related Work}
Data poisoning attacks have received considerable attention as a primary security threat during the training stage of machine learning~\cite{barreno2010security,tian2022comprehensive}. Depending on the adversary's goals, these attacks can be either indiscriminate or targeted. In an indiscriminate attack, the adversary aims to maximize the overall loss of victim's classifier, which was the focus of many works~\cite{biggio2011support, biggio2012poisoning, xiao2012adversarial, xiao2015support,suya2024distributions}. In contrast, a targeted attack aims to alter the prediction of a specific test point while minimally affecting overall loss, which is the focus of this paper.  

A targeted attack can be achieved by assuming different capabilities of the adversary. For instance, an attacker can add/modify/remove training points without changing labels~\cite{shafahi2018poison, suciu2018does, zhu2019transferable,yang2023many}, posses full or partial knowledge of the victim's learning process~\cite{cina2021hammer,csuvak2022design,paudice2019label}, or trigger poisoning when the test point is embedded by certain patterns~\cite{chen2017targeted,saha2020hidden,gu2019badnets}. We assume a more constrained attacker who can only perturb labels, lacks knowledge of the victim's learning process (black-box), and targets a specific test point. 

There are techniques that construct a poisoned dataset with stronger threat model.  In particular, ~\cite{zhao2017efficient} also considers label perturbations and a black-box victim model, but requires a bound on robustness to compute the poisoned dataset, along with an objective model. The work in ~\cite{paudice2019label} also computes a perturbation set, but it similarly requires a bound and the victim's loss function. The work closest to ours is ~\cite{yang2023relabeling}, which aims to find the minimal label-perturbed set. However, it only computes an upper bound, and, more importantly, requires knowledge of the victim's model and loss function to determine the \emph{influence function}~\cite{koh2017understanding}. 

 ~\cite{gao2021learning} gives a relationship between the dataset size and maximum allowed perturbations, whose accuracy has been subsequently improved in ~\cite{wang2022lethal}. The work in ~\cite{hanneke2022optimal} characterizes the optimal error rate when the dataset is poisoned. In these works, data points are either removed or inserted. Furthermore, these works characterizes bounds on targeted data poison, while we compute upper and lower bounds. An orthogonal line of work is that of certified robustness~\cite{rosenfeld2020certified,levine2020deep,wang2022improved,jia2022certified}, which gives a model that predicts with guaranteed robustness.

\section{Problem Setting}
\subsection{Preliminaries}
Consider the task of binary classification from $\dimension$ dimensional input features $\features \subseteq \mathbb{R}^{\dimension}$ to binary output labels $\labels = \{+1 , -1\}$. The goal of the classification task is to find a function (called classifier) $\model : \features \to \labels$ that has a small generalization error. In machine learning, $\model$ is derived from a given set of hypotheses (called hypothesis space) $\hypo$ and a set of labeled training data $\tdata$. We assume $\hypo$ is a set of linear functions $\linmodel$ of the form $\linmodel(x) = \sgnfn(\weights^T x + b)$, where $\weights \in \mathbb{R}^{\dimension}$ is the weight parameter, and $b \in \mathbb{R}$ is the bias parameter. The labeled training data $\tdata$ is generated from an i.i.d over $\features \times \labels$. Given $\hypo$ and $\tdata$, the classification task is to find a model $\linmodel \in \hypo$ that minimizes a loss function (\text{e.g.}, hinge loss) over $\tdata$.

\subsection{Threat Model}
We model the overall workflow via \emph{causative targeted attack} ~\cite{barreno2010security}, which is a game between a \emph{victim} and an \emph{attacker}, that proceeds as follows:
\begin{enumerate}
    \item The victim generates a clean training data $\ctdata = \{(x_i, y_i) \mid i \in [\tsize]\}$ from an i.i.d over $\features \times \labels$.
    \item The attacker \emph{poisons} $\ctdata$ by perturbing (a subset of) labels to get $\ptdata = \{(x_i, y_i') \mid i \in [\tsize],\; y_i'=y_i \text{ or } y_i' = -y_i\}$.
    \item The victim finds a model $\linmodel \in \hypo$ that minimizes a loss function over $\ptdata$.
    \item The attacker has a target test point with label $(x_t, y_t) \in \features \times \labels$, where $(x_t, y)$ is not in $\ctdata$ for any $y \in \labels$.  The attacker wins the game if the victim learns a model $\linmodel$ from $\ptdata$ such that $\linmodel(x_t) = y_t$. Otherwise, victim wins the game.     
\end{enumerate}

We assume an attacker who can only perturb labels in $\ctdata$, and is aware of the victim's hypothesis space, but not the training procedure, the loss function, or the model's weights. 

\subsection{Robustness}
The robustness $\robustness$ of a dataset $\ctdata = \{(x_i, y_i) \mid i \in [\tsize]\}$ with respect to a target test point $(x_t, y_t)$ is the minimum number of label perturbations required for the attacker to win the above game.  Sans knowledge of the victim's loss function, we define robustness as follows.  Let $\mathfrak{P}$ be the set of all (possibly) poisoned datasets $\ptdata = \{(x_i, y_i') \mid i \in [m],\; y_i'=y_i \text{ or } y_i' = -y_i\}$ s.t.
%\begin{enumerate}
%    \item 
 there exists $\linmodel \in \hypo$ with: 
    \begin{enumerate}
        \item $\linmodel(x_i) = y_i', \quad \forall i \in [m]$
        \item $\linmodel(x_t) = y_t$
    \end{enumerate}
%    \item 
Then $\robustness = \min_{\ptdata \in \mathfrak{P}} \sum_{i=1}^{m}\mathbb{I}(y_i' \neq y_i)$, where $\mathbb{I}$ is the indicator function. 
%\end{enumerate}

Note that some authors prefer to define robustness as the \emph{maximum} number of label perturbations in $\ctdata$ such that no classifier in $\hypo$ learned from the poisoned data classifies the test point $x_t$ as $y_t$.  Clearly, this is $\robustness - 1$, where $\robustness$ is as defined above. For computational simplicity, we use the definition provided above.

\section{Computing Robustness}
A naive way to compute robustness is by perturbing labels of subsets of the training dataset $\ctdata$ in increasing order of subset size, and finding whether there exists a classifier that satisfies all required conditions for the poisoned dataset. The time complexity of such an algorithm is clearly exponential in $|\ctdata|$. Hence, a natural question is whether this algorithm can be improved. Theorem~\ref{thm:npc} shows that this is unlikely.

\begin{theorem}
\label{thm:npc}
Given a dataset $\ctdata = \{(x_i, y_i) \mid i \in [m]\}$ and a target test point $(x_t, y_t)$, deciding whether its robustness $\robustness$ is less than a threshold $\kappa$ is NP-Complete, when $\hypo$ is a set of linear binary classifiers.
\end{theorem}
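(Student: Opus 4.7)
The plan has two parts. For membership in NP, a short certificate is a subset $S \subseteq [m]$ with $|S|<\kappa$ indicating which training labels are to be flipped; verification amounts to checking feasibility of the linear system $y_i'(w^T x_i + b) > 0$ for $i\in[m]$ (with $y_i'=-y_i$ iff $i\in S$, else $y_i'=y_i$) together with $y_t(w^T x_t + b) > 0$, which is a polynomial-time linear-programming feasibility task.

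For NP-hardness, I would reduce from the \emph{Minimum Disagreement Halfspace} problem (MDH): given points $(z_1,\ell_1),\ldots,(z_n,\ell_n)\in\mathbb{R}^d\times\{+1,-1\}$ and a bound $k$, decide whether some linear halfspace misclassifies at most $k$ of them. This problem is classically NP-hard (H\"offgen--Simon--van Horn), and is equivalent to asking whether at most $k$ label flips suffice to render the data linearly separable. Given such an MDH instance, the plan is to build a robustness instance in $\mathbb{R}^{d+1}$ by lifting each $z_i$ to $\tilde z_i=(z_i,0)$ while keeping its label, introducing a single target point $x_t=(0,\ldots,0,1)$ with $y_t=+1$, and setting $\kappa=k+1$.

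The heart of the argument is that the minimum number of training-label flips in the lifted instance equals the MDH optimum. In one direction, I would start from a $d$-dimensional halfspace $(w,b)$ witnessing $k$ MDH disagreements, flip exactly those $k$ labels, and extend to $\tilde w=(w,w_{d+1})$ with $w_{d+1}$ chosen large enough that $w_{d+1}+b>0$. Training predictions are unchanged because every lifted training point has last coordinate $0$, and the target is satisfied by construction. Conversely, any $(d+1)$-dimensional classifier $(\tilde w,b)$ witnessing some number of training flips, when restricted to the subspace $x_{d+1}=0$, yields a $d$-dimensional halfspace given by the first $d$ coordinates of $\tilde w$ that produces identical predictions on every lifted training point, and hence at most the same number of MDH disagreements on the original instance.

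I expect the main obstacle to be arguing precisely that the target constraint is neither slack nor restrictive: the reduction must not let the attacker save flips by exploiting the target, nor force additional flips merely to make the target satisfiable. The dimension lift is the device that handles both issues simultaneously, because the extra coordinate of $\tilde w$ is a free parameter as far as the target is concerned (preventing any forced extra flips) yet is invisible to every training point (preventing any flip-saving). Beyond this, the remaining details -- polynomial bit length of the construction and the standard strict-to-non-strict margin trick needed to recast the LP feasibility check as a decision problem -- are routine.
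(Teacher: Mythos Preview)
Your proposal is correct, but it follows a genuinely different route from the paper's proof. The paper establishes NP-hardness by a direct reduction from \textsc{Vertex Cover}: each vertex $v_i$ becomes a training point (a standard basis vector in $\{0,1\}^{n+1}$ with label $-1$), each edge $(v_i,v_j)$ becomes a training point with label $+1$, and the target is $(0,\ldots,0,1)$ with label $-1$. The argument then shows, via an explicit analysis of the linear constraints and a swap-in/swap-out transformation on the poisoned set, that a vertex cover of size $<\kappa$ exists iff robustness is $<\kappa$. Your reduction instead invokes the already-established NP-hardness of \textsc{Minimum Disagreement Halfspace} and neutralises the target constraint via a one-dimensional lift, so that the extra coordinate of $\tilde w$ is a free knob for the target yet invisible to every training point.

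What each buys: your route is shorter and conceptually cleaner, since the robustness problem \emph{without} the target constraint is literally MDH, and the lift shows the target adds no complexity; however, it relies on the H\"offgen--Simon--Van~Horn result as a black box. The paper's route is self-contained from a Karp-classic problem and yields an explicit combinatorial instance, at the cost of a more delicate correctness argument (in particular, the ``if'' direction requires transforming an arbitrary poisoned set into one that flips only vertex-encoding points). Both NP-membership arguments are essentially the same: guess the flip set and check linear feasibility in polynomial time.
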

\begin{proof}[Proof Sketch]
    \textbf{In NP}: Given a witness classifier $\linmodel$, we can check whether (a) $\linmodel(x_t) = y_t$, and (b) $\sum_{i=1}^{m}\mathbb{I}(\linmodel(x_i) \neq y_i) < \kappa$ in polynomial time. Alternatively, given a witness poisoned dataset $\ptdata = \{(x_i, y_i') \mid i \in [m]\}$, we can check in polynomial time if (a) $\sum_{i=1}^{m}\mathbb{I}(y_i' \neq y_i) < \kappa$, and (b) there exists a linear classifier $\linmodel$ that satisfies $\linmodel(x_t) = y_t$ and $\linmodel(x_i) = y_i'$ for all $i \in [m]$.  The latter can be done by standard techniques for solving linear systems of equations, viz. Gaussian elimination.%~\cite{gaussian-elim}.
    %~\cite{karp,garey-johnson}
    
    \textbf{NP-hardness}: We show this by  reduction from the vertex cover problem. Let $G = (V, E)$ be an undirected graph, where $V$ is the set of vertices, and $E \subseteq V \times V$ is the set of edges. A vertex cover of $G$ is a subset $C$ of $V$ such that for every edge $(u, v) \in V$, at least one of $u, v$ is in $C$. Given $G$ and a threshold $\kappa$, the vertex cover problem asks whether $G$ has a vertex cover $C$ s.t. $|C| < \kappa$.  It is well known~\cite{karp2009reducibility} that the vertex cover problem is NP-complete. We give below a polynomial-time reduction from the vertex cover problem to the problem of deciding if the robustness of a dataset w.r.t. a target test point is less than a threshold. This proves NP-hardness of deciding if robustness is less than a threshold.

    Let $V = \{v_1, \ldots v_n\}$ be the set of vertices in $G$. We create a training dataset $\ctdata \subseteq \features \times \labels$, where $\features = \{0, 1\}^{n+1}$ and $\labels = \{+1, -1\}$.  Thus, every $x \in \features$ can be thought of as a 0-1 vector of $n+1$ dimensions.  Below, we use $x[j]$ to denote the $j^{th}$ component of vector $x$, for $j \in \{1, \ldots n+1\}$.  
    
    For every $v_i \in V$, we add a training datapoint $(x_i, y_i)$ to $\ctdata$, where (a) $x_i[j] = 1$ iff $i = j$, and (b) $y_i = -1$.  We say that these datapoints encode vertices in $V$. Similarly, for every edge $(v_i, v_j) \in E$, we add a training datapoint $(x_{i,j}, y_{i,j})$, where (a) $x_{i,j}[k] = 1$ iff either $i = k$ or $j = k$ or $k = n+1$, and (b) $y_{i,j} = +1$. We say that these datapoints encode edges in $E$.  As an example, if $V = \{v_1, v_2, v_3\}$ and $E = \{(v_1, v_2), (v_2, v_3), (v_3, v_1)\}$, then $\ctdata = \ctdata^V \cup \ctdata^E$, where $\ctdata^V = \{\big((1,0,0,0),-1\big),$ $\big((0,1,0,0), -1\big),$ $\big((0,0,1,0), -1\big)\}$ encodes the three vertices, and $\ctdata^E = \{\big((1,1,0,1),+1\big),$ $\big((0,1,1,1), +1\big),$ $\big((1,0,1,1),+1\big)\}$ encodes the three edges.  Notice that for $\ctdata$ defined above, there exists a linear classifier $\linmodel$ such that $\linmodel(x_i) = y_i$ for all $(x_i, y_i) \in \ctdata$.  Indeed, if $w[j] = -1$ for $j \in \{1, \ldots n\}$, $w[n+1] = 3$, and if $b = 0$, then $\sgnfn(\weights^T x + b) = y$ for all $(x, y) \in \ctdata$.  
    
    Finally, we construct the test datapoint $(x_t, y_t)$, where $x_t[j] = 0$ for all $j \in \{1, \ldots n\}$, $x_t[n+1] = 1$ and $y_t = -1$.  For the abve example, $(x_t, y_t) = \big((0,0,0,1), -1\big)$.  Notice that the linear classifier discussed above that correctly classifies all datapoints in $\ctdata$ no longer gives $\linmodel(x_t) = y_t$.

    With $\ctdata$ and $(x_t, y_t)$ defined as above, we claim that the graph $G$ has a vertex cover of size less than $\kappa$ iff the robustness of $\ctdata$ w.r.t. $(x_t, y_t)$ is less than $\kappa$. The only if part is easy to prove. Suppose $C$ is a vertex cover of $G$, and $|C| < \kappa$.  We construct the poisoned dataset $\ptdata$ by changing the label (to $+1$) of only those datapoints in $\ctdata$ that encode vertices in $C$. A linear classifier $\linmodel(x) = \sgnfn(\weights^T x + b)$ for $\ptdata$ that also satisfies $\linmodel(x_t) = y_t$ can now be obtained as follows: $w[i] = 3$ for all $v_i \in C$, $w[i] = -1$ for all $v_i \not\in C$, $w[n+1] = -1$ and $b=0$.  In our running example with $3$ vertices, considering the vertex cover $C = \{v_1, v_2\}$, we get $\ptdata =$ $\{\big((1,0,0,0),{+1}\big),$ $\big((0,1,0,0), {+1}\big),$ $\big((0,0,1,0), -1\big),$ $\big((1,1,0,1),+1\big),$ $\big((0,1,1,1), +1\big),$ $\big((1,0,1,1),+1\big)\}$.

    To prove the if part, let $S$ be the subset of datapoints in $\ctdata$ whose labels are flipped to obtain $\ptdata$, and suppose $|S| < \kappa$.  Let $\linmodel$ be a linear classifier corresponding to $\ptdata$ that satisfies $\linmodel(x_t) = y_t$.  From the definition of $(x_t, y_t)$, it follows that $w[n+1] + b < 0$.  Note that for every non-poisoned datapoint $(x_{i,j}, +1)$ corresponding to an edge $(v_i, v_j) \in E$, we must also have $\linmodel(x_{i,j}) = 1$, or $w[i] + w[j] + w[n+1] + b \ge 0$.  This requires at least one of the datapoints corresponding to $v_i$ or $v_j$ to be poisoned, as otherwise, we would have $w[i] < 0$ and $w[j] < 0$, which is inconsistent with $w[i] + w[j] + w[n+1] + b \ge 0$. In general, the set $S$ may contain datapoints encoding both vertices and edges in $G$.  We describe below a process for successively transforming $S$, such that it eventually contains only datapoints encoding vertices in $V$.  Specifically, for every datapoint $(x_{i,j}, +1)$ in $S$ that encodes an edge $(v_i, v_j)$ in graph $G$, we check if the datapoint encoding $v_i$ (or $v_j$) is also in $S$.  If so, we simply remove $(x_{i,j}, +1)$ from $S$; otherwise, we add the datapoint corresponding to $v_i$ (resp. $v_j$) to $S$ and remove all datapoints corresponding to edges incident on $v_i$ (resp. $v_j$) from $S$. By repeating this process, we obtain a poisoned dataset $\ptdata'$ in which every
    poisoned datapoint corresponds to a vertex in $V$. Let the corresponding set of datapoints whose labels have been flipped in $\ptdata'$ be called $S'$. Following the same reasoning as in the proof of the "only if" part above, the linear classifier $\linmodel$ corresponding to $\ptdata$ can be modified to yield a linear classifier $\linmodel'$ for $\ptdata'$ that also satisfies $\linmodel'(x_t) = y_t$.  Since the datapoint corresponding to every edge $(v_i, v_j) \in E$ has its label unchanged in $\ptdata'$, the datapoint corresponding to either $v_i$ or $v_j$ must have been poisoned in $\ptdata'$. Hence, the set of poisoned datapoints must correspond to a subset of vertices that forms a vertex cover of $G$.  Since we added at most one datapoint corresponding to a vertex in $S$ when removing a datapoint corresponding to an edge from $S$, we have $|S'| \leq |S| < \kappa$.  
\end{proof}

Since perturbing subsets results in an exponential algorithm, it does not scale even for a small dataset. To overcome this, we propose to compute an approximation of robustness $\robustness$. Notice that the victim is interested in how low the robustness of their dataset is, to ensure that it is not susceptible to an attack, while the attacker's interest lies in how high the robustness is, so that the attack is likely to succeed without detection. Considering these perspectives, we compute both a lower bound $\lowrob$ and an upper bound $\uprob$ of robustness $\robustness$, such that it is guaranteed that $\lowrob \le \robustness \le \uprob$. 

\subsection{Lower Bound Robustness $\lowrob$ via Partitioning}
We observe that the problem of finding robustness for linear classifiers can be formulated as an optimization problem. In particular, it can be encoded as a mixed-integer linear program (MILP) using the big M method~\cite{bazaraa2011linear}, assuming a known bound on the range of the linear classifier function. A solution to this optimization problem determines the weights of the function that classifies the test point as required, while minimizing misclassifications on the training dataset. More formally, for a dataset $\ctdata$ and a test point $(x_t, y_t)$, we encode the problem as the following optimization problem:

\textbf{Variables:}
\begin{itemize}
    \item $\weight_1, \dots, \weight_{\dimension}, \bias \in \mathbb{R}$ as weights and bias, and 
    \item $\delta_1, \dots, \delta_{\tsize} \in \{0, 1\}$ as indicators for label perturbations.
\end{itemize}

\textbf{Objective:}
Minimize $\sum_{i=1}^{\tsize} \delta_i$

\textbf{Constraints:}
\begin{align*}
    y_t \left( \weights \cdot x_t + \bias \right) &\geq \epsilon \\
    y_i \left( \weights \cdot x_i + \bias \right) + \bigM \delta_i &\geq \epsilon, \quad \forall i \in [\tsize] \\
    y_i \left( \weights \cdot x_i + \bias \right) - \bigM (1 - \delta_i) &\leq -\epsilon, \quad \forall i \in [\tsize] \\
\end{align*}

\textbf{Parameters:}
$\bigM \gg 0$ is a large integer constant, $\epsilon \approx 0$, and $(x_i, y_i) \in \ctdata$. 

A solution to the problem provides weights and bias ($\weight_1, \dots, \weight_{\dimension}, \bias$) that classifies the test point $x_t$ as $y_t$ and minimizes the sum of perturbed labels in $\ctdata$, denoted by $\sum_{i=1}^{\tsize} \delta_i$. The first constraint ensures that $y_t = \sgnfn(\weights \cdot x_i + \bias)$. For each point $(x_i, y_i) \in \ctdata$, the next two constraints ensure the following:  
\[
    \delta_i = 
    \begin{cases}
        0 &  y_i = \sgnfn(\weights \cdot x_i + \bias) \\
        1 & y_i \ne \sgnfn(\weights \cdot x_i + \bias)
    \end{cases}
\]

For instance, $\delta_i = 0$ makes the second constraint $y_i \left(\weights \cdot x_i + \bias\right) \ge \epsilon$, which in turn makes $y_i = \sgnfn(\weights \cdot x_i + \bias)$. Similarly, $\delta_i = 1$ makes the third constraint $y_i \left(\weights \cdot x_i + \bias\right) \le \epsilon$, which makes $y_i \ne \sgnfn(\weights \cdot x_i + \bias)$. These constraints additionally adds the conditions $y_i \left(\weights \cdot x_i + \bias\right) < \bigM$ and $y_i \left(\weights \cdot x_i + \bias\right) > \nbigM$, resp..

Hence, a solution gives the robustness $\robustness$ (where $ \robustness = \sum_{i=1}^{\tsize} \delta_i$), along with the labels that were perturbed ($y_i$ for which $\delta_i$ is $1$). A solution exists only when $\weights \cdot x_i + \bias$ is in the interval $(\nbigM, \bigM)$. 

Since the complexity of solving such optimization problems is exponential, we can get robustness in this way for only small datasets. To scale this, we propose an optimization approach that finds a lower bound of robustness. In this approach, the dataset is partitioned into smaller size, then robustness is computed individually for each partition, and finally the results are summed. Although this sum provides a lower bound, it significantly improves the scalability as the optimization problem is on small sized sets. 

Consider the partition of $\ctdata$ into $k \in \mathbb{N}$ disjoint subsets $\ctdata^1 \dots \ctdata^k$, where each subset $\ctdata^j$ contains $\tsize/k$ points, except for the last subset, which will contain the remaining points. 
\iffalse
Specifically, 
\[ 
|\ctdata^j| = 
\begin{cases}
    \frac{\tsize}{k} &  j=1,\dots,k-1 \\
    \frac{(k-1)\tsize}{k} &  j=k
\end{cases}
\]
\fi

For each partition $\ctdata^j$ we compute robustness $\robustness^j$ by encoding it as the optimization problem(as before). Finally, we compute the sum of these robustness $\lowrob = \sum_{j=1}^k \robustness^j$. 

\begin{theorem}
    For dataset $\ctdata$ and a test point $(x_t, y_t)$, $\lowrob \le \robustness$.\footnote{Remaining proofs are in Appendix.}
\end{theorem}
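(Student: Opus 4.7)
The plan is to use a witness-based argument: take an optimal global poisoning and show that its restriction to each partition is a feasible (not necessarily optimal) poisoning of that partition, so the local optima can only be smaller.

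First I would fix an optimal global poisoning $\ptdata^* = \{(x_i, y_i^*) \mid i \in [m]\}$ achieving $\robustness = |\{i : y_i^* \neq y_i\}|$, together with a witness classifier $\linmodel^* \in \hypo$ such that $\linmodel^*(x_i) = y_i^*$ for all $i \in [m]$ and $\linmodel^*(x_t) = y_t$. Such an $\ptdata^*$ and $\linmodel^*$ exist by the definition of $\robustness$. Then for each partition $\ctdata^j$, I would let $n_j = |\{i : (x_i, y_i) \in \ctdata^j \text{ and } y_i^* \neq y_i\}|$ be the number of points in $\ctdata^j$ whose labels are flipped in $\ptdata^*$. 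Since the partitions are disjoint and cover $\ctdata$, we have $\sum_{j=1}^k n_j = \robustness$.

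Next I would argue that the restriction of $\ptdata^*$ to $\ctdata^j$, together with the same classifier $\linmodel^*$, is a feasible solution of the MILP instance defining $\robustness^j$. Indeed, $\linmodel^*$ satisfies the test-point constraint $\linmodel^*(x_t) = y_t$, and it agrees with the (possibly flipped) labels on every point of $\ctdata^j$ by construction. Since $\robustness^j$ is the minimum over all feasible poisonings of $\ctdata^j$ (with respect to the same target $(x_t, y_t)$), it follows that $\robustness^j \le n_j$.

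Summing over $j$ then gives
\[
\lowrob \;=\; \sum_{j=1}^k \robustness^j \;\le\; \sum_{j=1}^k n_j \;=\; \robustness,
\]
which is the claim. I do not anticipate a real obstacle here; the only subtle point is to observe explicitly that the feasibility of each local optimization is inherited from a single global witness classifier, so no consistency issue arises between partitions (the local problems may individually admit even cheaper classifiers tailored to their own partition, which is exactly why the sum is only a lower bound and typically strict).
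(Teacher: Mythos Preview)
Your argument is correct and is essentially the same as the paper's: take an optimal global poisoning with its witness classifier, restrict it to each partition to obtain a feasible (hence no-smaller-than-optimal) local poisoning, and sum. The paper phrases this as a proof by contradiction rather than a direct inequality, but the underlying idea---that disjointness of the partition lets a single global witness certify feasibility in every local subproblem---is identical.
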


\subsection{Upper Bound Robustness $\uprob$ via Augmentation}

In addition to the lower bound, we also compute an upper bound of robustness. For this purpose, we train a linear classifier on an augmented version of the training dataset $\ctdata$. The augmentation biases the learning to train a classifier that classifies the test point $x_t$ as $y_t$. While theoretically this classifier may not minimize the label perturbations of $\ctdata$, in practice we observe that it will give a tighter upper bound. Moreover, this procedure will be quick, as it requires only learning a classifier, hence scales for large datasets. 

In order to find such a classifier, we restrict our objective to learn a classifier $\linmodel \in \hypo$ such that $\linmodel(x_t) = y_t$. Since not all classifiers can achieve this objective, we first introduce a targeted augmentation scheme, wherein $k' \in \mathbb{N}$ identical copies of the test point $(x_t, y_t)$ are added to $\ctdata$. Let the augmented set be $\ctdata' = \ctdata \bigcup \{(x_t,y_t)\}^{k'}$, where $\{(x_t,y_t)\}^{k'}$ is a multi-set of $k'$ copies of $(x_t, y_t)$. 

We then train a classifier on $\ctdata'$ by minimizing the empirical loss with respect to a loss function $\lossfn$:
\begin{align*}
\linmodel = \arg \min_{\linmodel \in \hypo} \frac{1}{\tsize + k'} \left(
\begin{aligned}
&\sum_{i=1}^\tsize \lossfn(\linmodel(x_i), y_i) + \\
& \sum_{j=1}^{k'} \lossfn(\linmodel(x_t), y_t)
\end{aligned}
\right)
\end{align*}

Once a classifier $\linmodel$ is learned on the augmented dataset, it is checked whether the classifier correctly classifies the test point $x_t$ as $y_t$. If it does, then the number of misclassified points from the original dataset $\ctdata$ will be the the upper bound. In other words, $\uprob = \sum_{i=1}^\tsize \mathbb{I}(\linmodel(x_i) \ne y_i)$. 

\begin{theorem}
    For dataset $\ctdata$ and a test point $(x_t, y_t)$, $\robustness \le \uprob$.
\end{theorem}

\subsection{Algorithm}

\begin{algorithm}[tb]
\caption{\textsc{RobustnessInterval}($\ctdata$, $(x_t, y_t)$)}
\label{alg:main}
\textbf{Input}: $\ctdata$ -- training dataset, $(x_t, y_t)$ -- test point \\
\textbf{Parameter}: $\bigM, k, k', \lossfn, \epsilon$ -- hyperparameters\\
\textbf{Output}: lower bound $\lowrob$ and upper bound $\uprob$ robustness
\begin{algorithmic}[1] %[1] enables line numbers
\STATE $\ctdata^1, \dots, \ctdata^k \gets \textsc{RandomPartition}(\ctdata, k)$
\FOR{$j=1$ \textbf{to} $k$} 
\STATE $\robustness^j \gets \textsc{MILPSolve}(\ctdata^j, (x_t, y_t), \bigM)$
\ENDFOR
\STATE $\lowrob \gets \sum_{j=1}^k \robustness^j$
\STATE $\ctdata' \gets \textsc{Augment}(\ctdata, (x_t, y_t), k')$
\STATE $\linmodel \gets \textsc{LearnClassifier}(\ctdata', \lossfn)$ 
\STATE $\uprob \gets \sum_{i=1}^m \mathbb{I}(\linmodel(x_i) \ne y_i)$
\STATE \textbf{return} $(\lowrob, \uprob)$
\end{algorithmic}
\end{algorithm}

Algorithm~\ref{alg:main} gives a formal description of our technique discussed in previous sections. It takes as input a training dataset $\ctdata$ and a test point $x_t$ with target label $y_t$. It also assumes hyperparameter $\bigM, k, k', \lossfn$ are set. It starts with computing random $k$ partitions of $\ctdata$ into $\ctdata^1, \dots, \ctdata^k$. Then, for each partition $\ctdata^j$ it computes its robustness $\robustness^j$ by encoding the problem as a MILP using $\bigM$ and then using a solver to find the optimal solution. These robustness are then summed up to get $\lowrob$. In order to computer $\uprob$, the algorithm starts with augmenting $\ctdata$ with $k'$ copies of the test point and target label. This augmented data $\ctdata$ is then passed to a learning algorithm along with a loss function ($\lossfn$). When the learning algorithm returns a classifier $\linmodel$, $\uprob$ is computed by calculating the number of misclassifications $\linmodel$ makes with respect to $\ctdata$. Finally, the pair $(\lowrob, \uprob)$ is returned to the user.

\section{Implementation Details}
We implemented Algorithm~\ref{alg:main} as a Python tool \robint. It uses SCIP~\cite{BolusaniEtal2024OO} as the MILP solver (within Google OR-Tools v9.12)  to compute lower bound robustness, and \texttt{SGDClassifier} from scikit-learn~\cite{scikit-learn} (v1.3.2) to learn a classifier from the augmented dataset. As an optimization, 10 classifiers were learned from the augmented dataset and the minimum upper bound robustness $\uprob$ among them was chosen. The hyperparameters used had the following values: $\bigM$ and $\epsilon$ in the MILP encoding are set to 1000 and $10^{-10}$, resp., the number of partitions $k$ had multiple values such as 20, 1000, 250, and 100 depending on the dataset, the number of augmented test point $k'$ was set to $\tsize + 1$, and the loss function $\lossfn$ used were hinge loss, log loss, and modified hueber.

\section{Evaluation}

\begin{table*}[htbp]  
\centering  
\begingroup  
\fontsize{9}{9}\selectfont  
\begin{tabular}{clrrrrrrrrrr}  
\toprule  
\# & \textbf{Dataset} & \textbf{$\dimension$} & $\mathbf{\tsize}$ & $\mathbf{\#x_t}$ & $\mathbf{\uprob}$ & $\mathbf{\uprob}_{\ipr}$(normalized) & $\mathbf{\uprob}_{\ipr}$ & \%Found $\mathbf{\uprob}_{\ipr}$ & $\rho$ & $\rho_{\ipr}$ & $\mathbf{\lowrob}$ \\  
\midrule  
1 & Letter Recognition & 16 & 1335 & 149 & \textbf{38.16} & 569.62 & 168.18 & 66 & \textbf{0.55} & 0.34 & 0.68 \\  
2 & Digits Recognition & 16 & 1404 & 156 & \textbf{10.79} & 1305.88 & 226.54 & 8 & \textbf{0.17} & 0.04 & 0.03 \\  
3 & SST (BOW) & 300 & 6920 & 872 & 1758.08 & \textbf{93.76} & 66.99 & 100 & \textbf{0.99} & 0.61 & 0.96 \\  
4 & SST (BERT) & 768 & 6920 & 872 & 1141.54 & \textbf{439.64} & 247.90 & 97 & \textbf{0.97} & 0.52 & 0.00 \\  
5 & Emo (BOW) & 300 & 9025 & 1003 & 1558.92 & \textbf{237.95} & 167.28 & 99 & \textbf{0.99} & 0.52 & 0.00 \\  
6 & Speech (BOW) & 300 & 9632 & 1071 & 1071.90 & \textbf{624.09} & 373.20 & 97 & \textbf{0.61} & 0.58 & 0.49 \\  
7 & Speech (BERT) & 768 & 9632 & 1071 & \textbf{891.28} & 931.04 & 593.20 & 96 & \textbf{0.54} & 0.49 & 0.10 \\  
8 & Fashion-MNIST & 784 & 10800 & 1200 & 1389.91 & \textbf{1037.25} & 274.13 & 93 & \textbf{0.98} & 0.45 & 0.01 \\  
9 & Loan (BOW) & 18 & 11200 & 2800 & \textbf{3548.16} & 10069.39 & 42.22 & 60 & \textbf{0.93} & 0.46 & 111.21 \\  
10 & Emo (BERT) & 768 & 11678 & 1298 & \textbf{2153.26} & 2913.84 & 232.23 & 77 & \textbf{0.97} & 0.40 & 560.45 \\  
11 & Essays (BOW) & 300 & 11678 & 1298 & \textbf{495.90} & 2884.23 & 1664.89 & 88 & 0.21 & \textbf{0.50} & 0.00 \\  
12 & Essays (BERT) & 768 & 11678 & 1298 & \textbf{350.85} & 3761.81 & 1245.04 & 76 & 0.22 & \textbf{0.39} & 0.00 \\  
13 & Tweet (BOW) & 300 & 18000 & 1000 & 4256.26 & \textbf{304.15} & 216.32 & 99 & \textbf{1.00} & 0.54 & 7.70 \\  
14 & Tweet (BERT) & 768 & 18000 & 1000 & 3795.24 & \textbf{352.38} & 343.72 & 100 & \textbf{0.99} & 0.55 & 0.32 \\  
15 & Census Income & 41 & 80136 & 8905 & \textbf{2542.06} & 66003.10 & 7012.80 & 19 & \textbf{0.33} & 0.07 & 87.19 \\  
\bottomrule  
\end{tabular}  
\endgroup  
\caption{Here, $\dimension$, $\tsize$ -- dimension and size of datasets, $\mathbf{\#x_t}$-- count of test points, $\mathbf{\uprob}$, $\mathbf{\lowrob}$ -- avg upper and lower bound robustness from \robint, $\mathbf{\uprob}_{\ipr}$(normalized), $\mathbf{\uprob}_{\ipr}$ -- avg upper bound robustness from \iprtool, \%Found $\mathbf{\uprob}_{\ipr}$-- \% of robustness found by \iprtool, and $\rho$, $\rho_{\ipr}$-- avg likelihood of getting desired classification by \robint and \iprtool.} 
\label{tab:robustness_metrics}  
\end{table*}

In this section, we present evaluation of our tool \robint on several publicly available datasets with different sizes. We first describe the datasets used. 

\subsection{Datasets}
We use datasets that represent different classification tasks where linear classifiers are used. Specifically, we use: Census Income~\cite{census_income_20}, Fashion-MNIST~\cite{xiao2017fashionmnistnovelimagedataset}, Pen-Based Handwritten Digits Recognition~\cite{pen-based_recognition_of_handwritten_digits_81}, Letter Recognition~\cite{letter_recognition_59}, Emo~\cite{yang2023relabeling}, Speech~\cite{de-gibert-etal-2018-hate}, SST~\cite{socher-etal-2013-recursive}, and Tweet~\cite{article}, as well as the Essays Scoring~\cite{EssaysDataset} and the Loan Prediction~\cite{LoanDataset} dataset from Kaggle. For the text datasets, we use both bag-of-words (BOW) and BERT-based (BERT) representations where indicated. Multi-class classification datasets were converted to binary classification datasets by only two classes. This was done for Fashion-MNIST(Pullover vs. Shirt), Pen-Based Handwritten Digits Recognition(4 vs. 0) and Letter Recognition(D vs. O).  

% \robint was used to compute lower and upper bound robustness for all the test points in the datasets. For the datasets which were not already split into test and training points, like Census Income, Fashion MNIST, Letter Recognition and Pen-Based Handwritten Digits Recognition, 10\% of the points were randomly sampled to create the test points and the rest constituted the training set. The summary of the datasets can be found in Table~\ref{tab:robustness_metrics}.

\robint was used to compute both robustness bounds for all test points in each dataset. For datasets without predefined test-train splits (e.g., Census Income, Fashion MNIST, Letter Recognition, Pen-Based Handwritten Digits Recognition), 10\% of points were randomly sampled as test data, with the remainder constituted the train data. Dataset summaries are in Table~\ref{tab:robustness_metrics}.

In order to evaluate \robint, we address the following research questions (RQs):
\begin{itemize}
    \item[] RQ1: How effective is our technique in finding upper bound robustness $\uprob$?
    \item[] RQ2: How does upper bound robustness $\uprob$ impact victim's training process?
    \item[] RQ3: How does our technique for finding $\uprob$ compare against SOTA?
    \item[] RQ4: How effective is our technique in finding lower bound robustness $\lowrob$?
    % \item[] RQ5: Can a closely related sanitization technique help in increasing robustness?
\end{itemize}

In the remaining section, we answer our research questions through a series of experiments. All experiments were conducted on an 8-core CPU with 30~GB RAM running Ubuntu 20.04. 

% In the remaining section, we address our research questions through experiments conducted on an 8-core CPU with 30~GB RAM running Ubuntu 20.04.

\subsection{RQ1: How effective is our technique in finding upper bound robustness $\uprob$?} 
%===================================================================================================================
\begin{figure}[h!]  
    \centering  
    \includegraphics[width=0.45\textwidth]{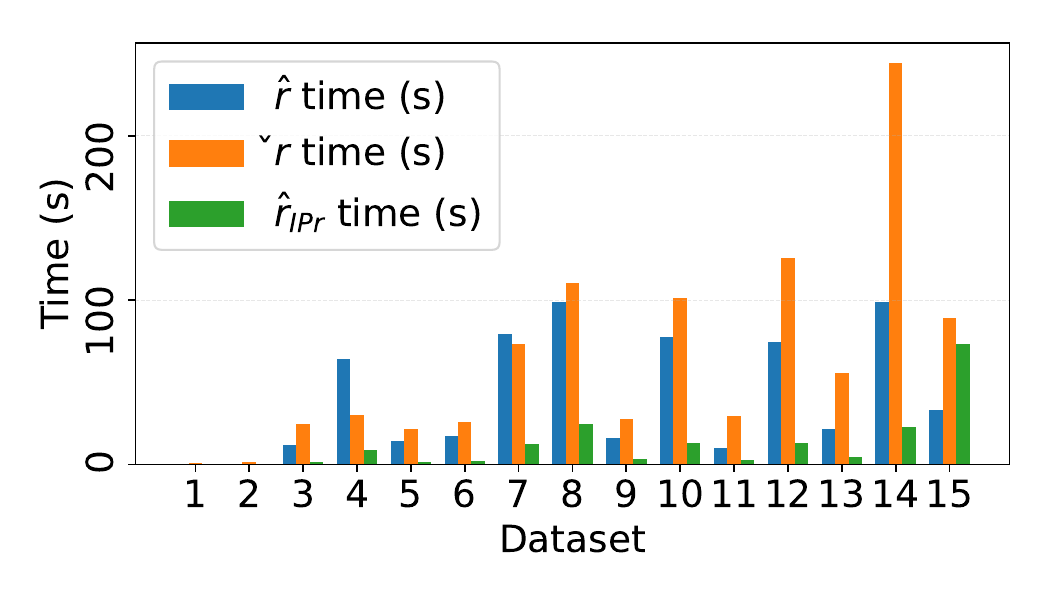}  
    \caption{  
        Average time taken in seconds per test point by \robint ($\uprob$, $\lowrob$) and \iprtool ($\uprob_{\ipr}$). 
    }  
    \label{fig:time}  
\end{figure}    
%==========================================================================================================================
\begin{figure}[h!]  
    \centering  
    % \begin{subcaptionbox}{Upper bound robustness distribution\label{fig:ub-hist}}[.48\linewidth]  
        {\includegraphics[width=\linewidth]{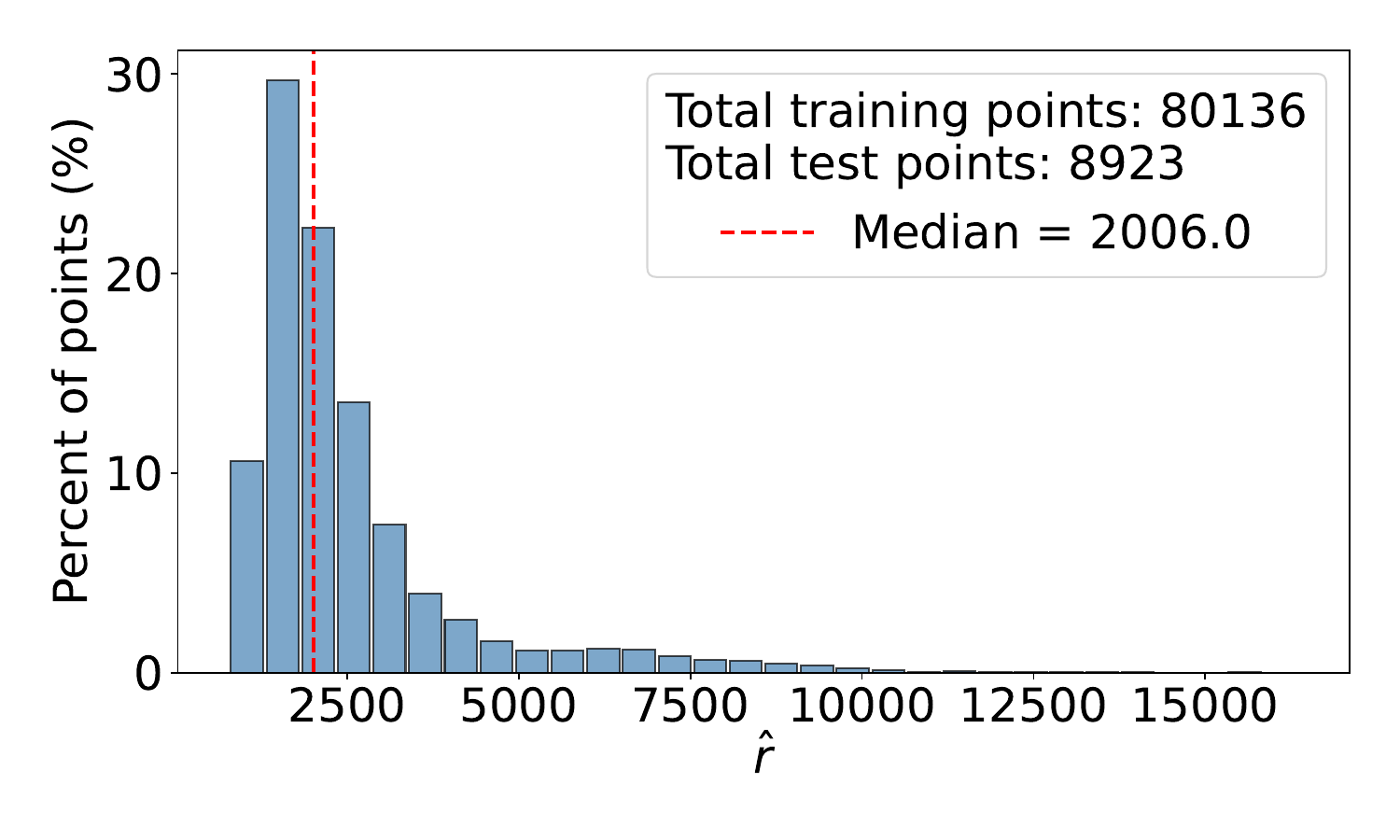}}  
    % \end{subcaptionbox}  
    % \hfill  
    % \begin{subcaptionbox}{Lower bound robustness distribution\label{fig:lb-hist}}[.48\linewidth]  
    %  {\includegraphics[width=\linewidth]{figures/lowerboundHistogram.png}}  
    % \end{subcaptionbox}  
    \caption{  
        Histograms of the distribution of points across $\uprob$ values for the Census Income dataset.    
    }  
    \label{fig:census-income-histograms}  
\end{figure}  

Table~\ref{tab:robustness_metrics} reports the average upper bound robustness $\uprob$ (column $\uprob$) computed using the hinge loss function (denoted as $\lossfn$ in Algorithm~\ref{alg:main}), while Figure~\ref{fig:time} shows the average computation time. The average $\uprob$ value ranged from 1\% (Digits Recognition) to 31\% (Loan (BOW)) of the training points. For five datasets (specifically, Census Income, Essays BOW and BERT,  and Letter and Digits Recognition), \robint identified $\uprob$ values as low as 4\% (or even less) of the training points. The average time required to compute $\uprob$ for a test point was under two minutes.

% Figure~\ref{fig:census-income-histograms} gives a histogram of $\uprob$ computed for Census Income dataset. For this dataset, we observe that nearly 60\% of test points had robustness value around 3\% of the training points. We present similar histograms of other datasets in Appendix. 

Figure~\ref{fig:census-income-histograms} shows the $\uprob$ histogram for the Census Income dataset, where nearly 60\% of test points have robustness values around 3\% of the training points. Similar results for other datasets are provided in the Appendix.

Overall, \robint was able to compute low $\uprob$ values for most datasets within a short duration.

\subsection{RQ2: How does  upper bound robustness $\uprob$ impact victim's training process?}
\begin{figure*}[t]  
    \centering  
    \includegraphics[width=0.95\textwidth]{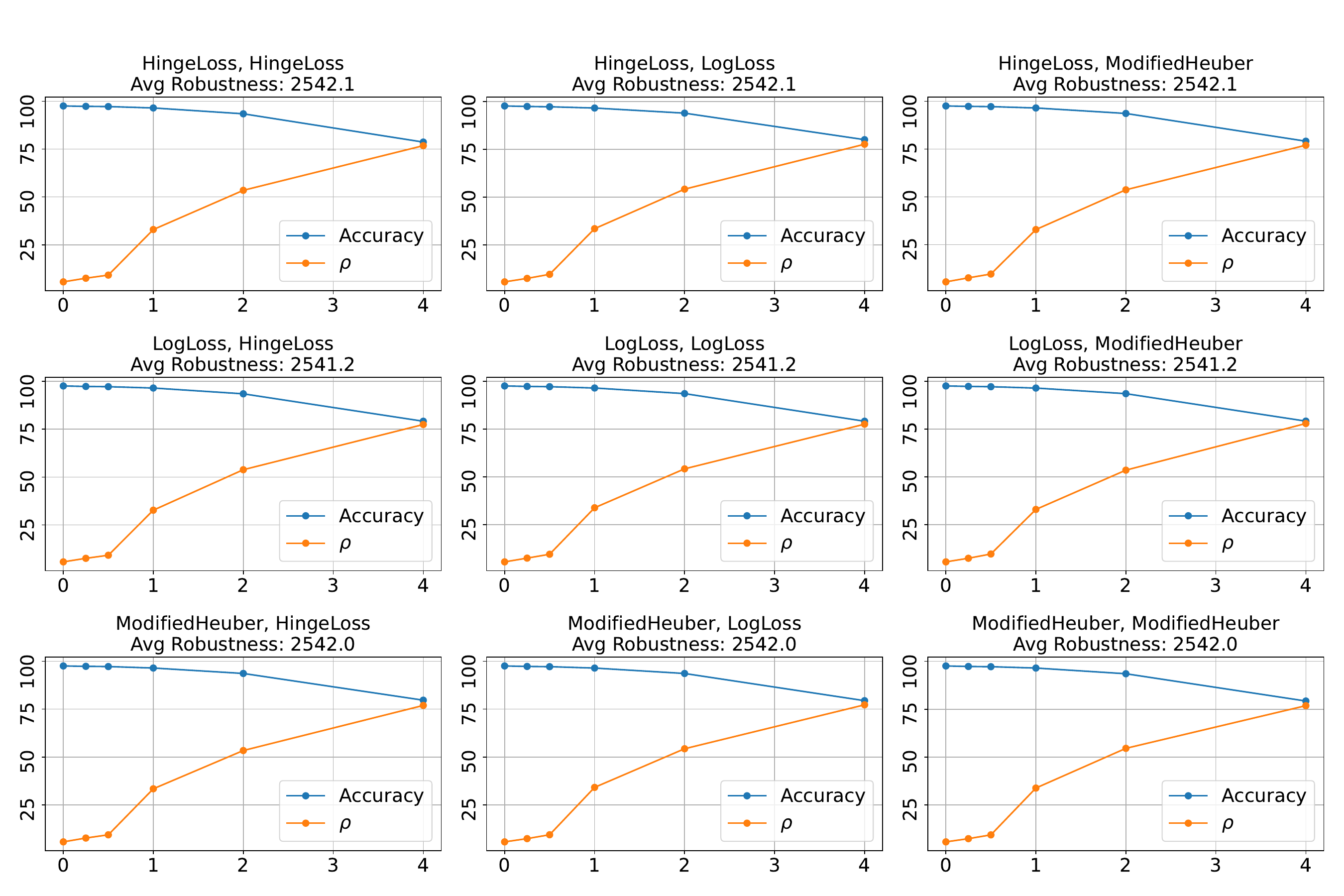}  
    \caption{  
    Comparison of average accuracy vs $\rho$  for Census Income dataset with $\{0, \frac{\uprob}{4}, \frac{\uprob}{2}, \uprob, 2\uprob, 4\uprob\}$ and loss functions.
    }  
    \label{fig:classifier-grid} 
\end{figure*}

To assess the impact of $\uprob$ on the victim's training process, we poisoned the training dataset by perturbing the labels of the training points corresponding to $\uprob$. We then trained a classifier on the poisoned dataset and compared its accuracy with the frequency of desired classification for test points (denoted by $\rho)$. Since we assume the victim's training process is a black-box, we compared all possible pairs of loss functions.  

More specifically, we assumed a loss function ($\lossfn$) for \robint and computed $\uprob$ for each test point. We then created poisoned datasets by perturbing the labels of $0$, $\frac{\uprob}{4}$, $\frac{\uprob}{2}$, $\uprob$, $2\uprob$, and $4\uprob$ points (for $2\uprob, 4\uprob$, additional random points were chosen). Each poisoned dataset was used to train a linear classifier with a loss function. We calculated the classifier's accuracy and $\rho$. This process was repeated for all possible pairs of loss functions, three to compute $\uprob$ and three for training the classifier. 

% The result of this experiment for Census Income dataset is shown in Figure~\ref{fig:classifier-grid}. Notice that the ideal case for all loss functions occurs when the poisoned value is $\uprob$, as larger values of $\uprob$ lead to a sharp decrease in accuracy without much increase in the flip likelihood. A similar trend is observed in other datasets, which are presented in Appendix. This experiment concludes that $\uprob$ computed by \robint is an effective poisoning measure. 

Figure~\ref{fig:classifier-grid} shows results for the Census Income dataset: the optimal outcome for all loss functions is when the poisoned value is $\uprob,$ as higher values sharply reduce accuracy without increasing flip likelihood. Similar trends appear in other datasets (see Appendix), confirming that $\uprob$ computed from \robint is an effective poisoning measure.

\subsection{RQ3: How does our technique for finding $\uprob$ compare against SOTA?}
We compare our tool against the \iprtool technique~\cite{yang2023relabeling}. While \iprtool assumes a white-box victim's model and computes only an upper bound robustness, we include it in our comparison as it is the most closely related recent work. The average upper bound robustness computed by \iprtool is presented in Table~\ref{tab:robustness_metrics} (column $\uprob_{\ipr}$). Notably, \iprtool does not guarantee an upper bound robustness for all points. In our evaluation, it was able to compute robustness for all points in only 2/15 datasets (see column \%Found $\mathbf{\uprob}_{\ipr}$). In contrast, \robint was able to find robustness for all test points across all datasets. For a fair comparison, when \iprtool failed to generate an upper bound, we assigned the size of the training dataset as the bound (column $\mathbf{\uprob}_{\ipr}$(normalized)). Under this metric, \robint found lower average robustness for 8 datasets. 

% Additionally, we poison the dataset based on the robustness calculated by each tool. We then compare how often the test points are classified as desired (columns $\rho$ and $\rho_{\ipr}$). In this evaluation, \robint performs significantly better than \iprtool in 12 datasets, including 6 datasets where average robustness of \robint is lower than \iprtool. 

Additionally, we poison the dataset using each tool’s robustness and compare the success rate of desired test classifications (columns $\rho$ and $\rho_{\ipr}$). In this evaluation, \robint outperforms \iprtool on 12 datasets, including 6 where its average robustness is lower.

In summary, our tool assumes a realistic black-box adversary, provides tighter robustness bounds, and more reliably achieves the desired test point classification.

\subsection{RQ4: How effective is our technique in finding lower bound robustness $\lowrob$?}
\begin{figure}[h!]  
    \centering  
    % \begin{subcaptionbox}{Upper bound robustness distribution\label{fig:ub-hist}}[.48\linewidth]  
        {\includegraphics[width=\linewidth]{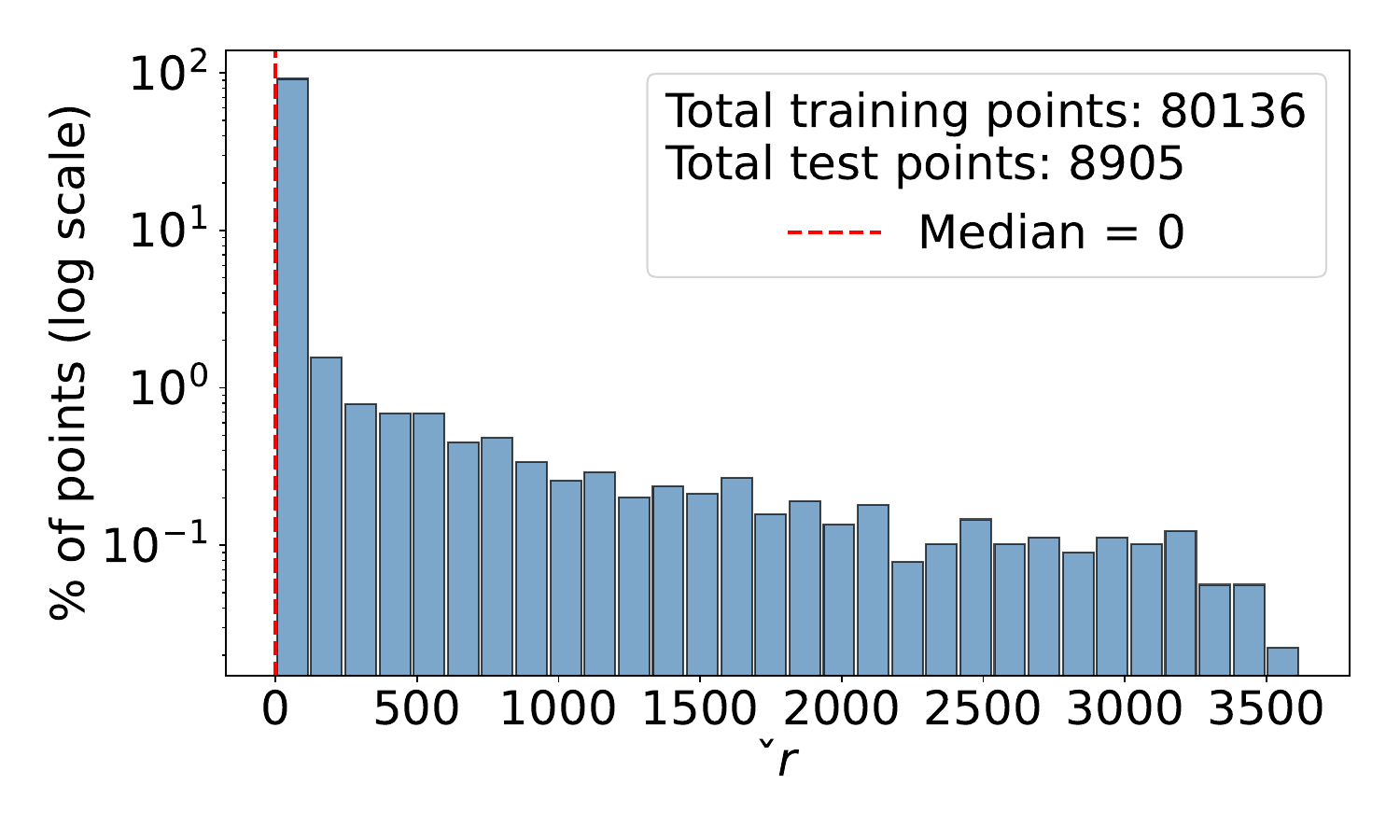}}  
    % \end{subcaptionbox}  
    % \hfill  
    % \begin{subcaptionbox}{Lower bound robustness distribution\label{fig:lb-hist}}[.48\linewidth]  
    %  {\includegraphics[width=\linewidth]{figures/lowerboundHistogram.png}}  
    % \end{subcaptionbox}  
    \caption{  
        Histograms of the distribution of points across $\lowrob$ values for the Census Income dataset.    
    }  
    \label{fig:census-income-histograms2}  
\end{figure}  

The average lower bound $\lowrob$ computed by \robint is presented in Table~\ref{tab:robustness_metrics} (column $\lowrob$), along with the average time taken per test point in Figure~\ref{fig:time}. \robint generated a non-zero average $\lowrob$ for ten datasets. The average time taken per test point was under four minutes. 

While the average lower bound robustness can be low for some datasets, our tool is capable of generating high low bounds. For example, consider the histograms in Figure~\ref{fig:census-income-histograms2} for Census Income dataset. Although the median robustness was 0, \robint was able to generate robustness values greater than 500 for 5\% of the test points, and non-zero robustness for 15\% of the points. Therefore, the lower bound robustness generated by our tool can still be useful for individual test points.

\section{Conclusion and Future Work}
%In this paper, we have proven that the problem of finding robustness in targeted data poisoning is NP-complete and have introduced effective techniques for computing lower and upper bound of robustness. An interesting future direction is to extend the lower bound algorithm for non-linear classifiers.
In this paper, we have proven that the problem of finding robustness in targeted data poisoning is NP-complete in the setting considered and have introduced effective techniques for computing lower and upper bound of robustness. An interesting future direction is to extend the lower bound algorithm for non-linear classifiers.

\bibliography{references}

\appendix
\section{Appendix}
\subsection{Additional Robustness Examples}
The below figures are additional examples of robustness that our tool computed to the test points from the dataset Pen-based Handwritten Digit Recognition~\cite{pen-based_recognition_of_handwritten_digits_81}. In these figures, left most plot is the test point, while the remaining points are training points with respect to robustness. The labels mentioned are true labels. By changing the training points' labels, the test point's label can be changed.   

\begin{figure}[!htbp]  
    \centering  
    \includegraphics[width=0.45\textwidth]{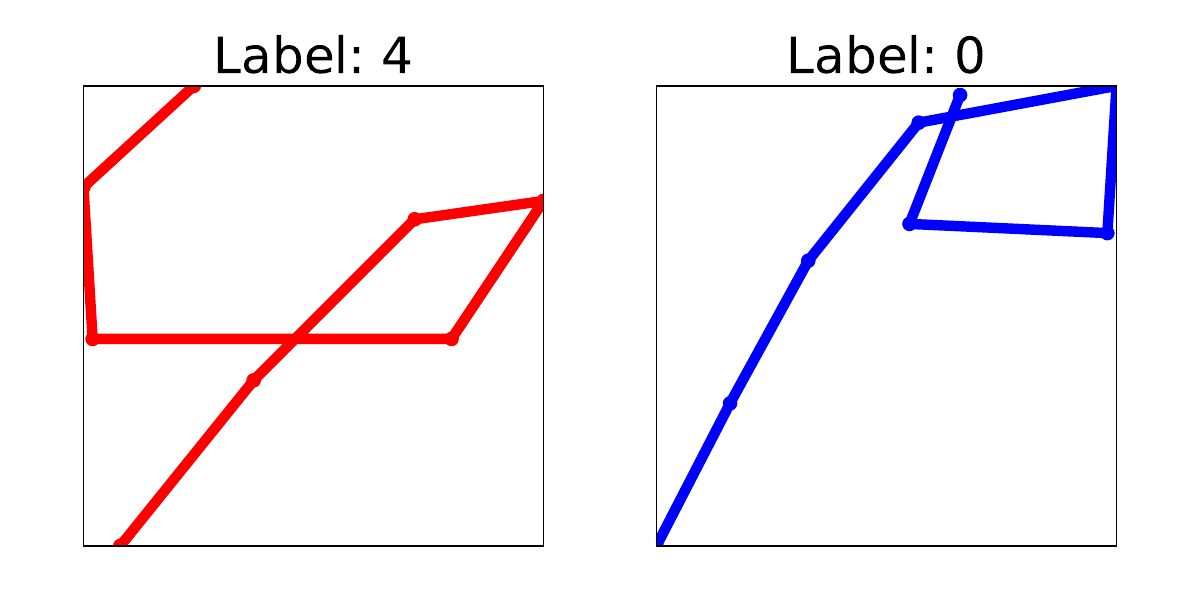}  
    \includegraphics[width=0.45\textwidth]{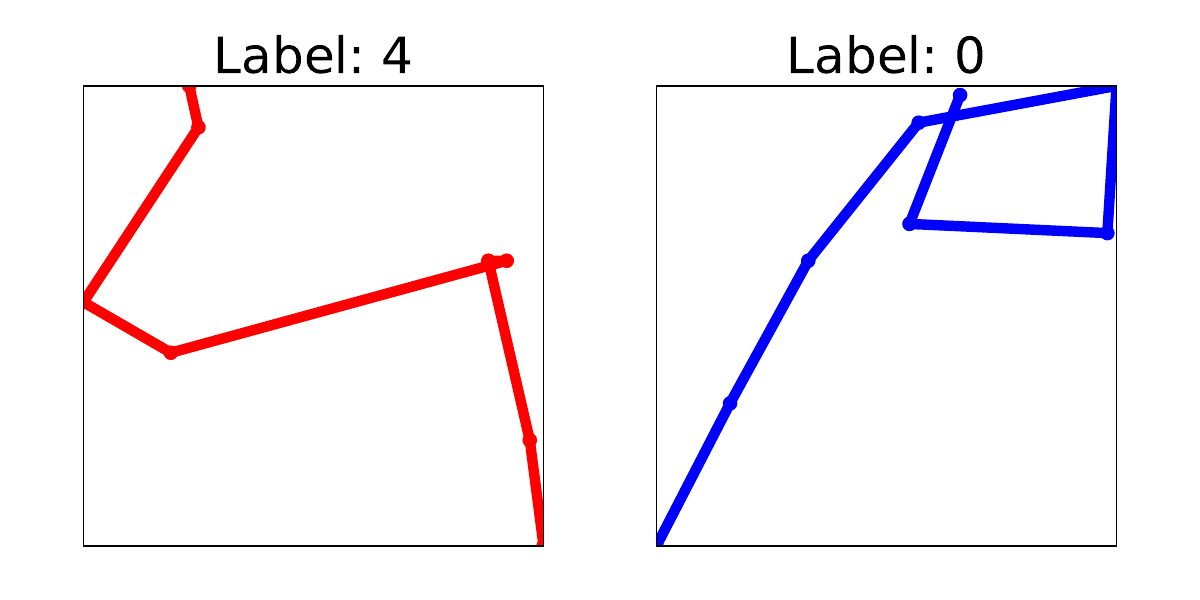} 
    \caption{Robustness 1 with different labels for test and training points.}  
    \label{fig:digit_attack_test38_train1136}  
\end{figure}  
  
\begin{figure}[!htbp]  
    \centering  
    \includegraphics[width=0.45\textwidth]{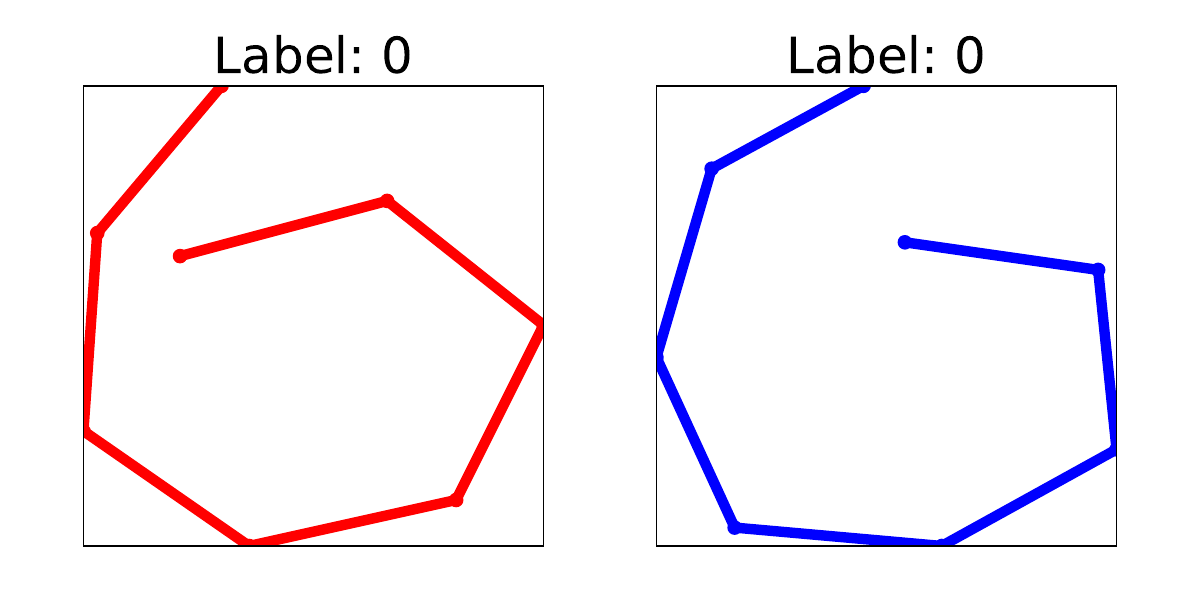} 
    \includegraphics[width=0.45\textwidth]{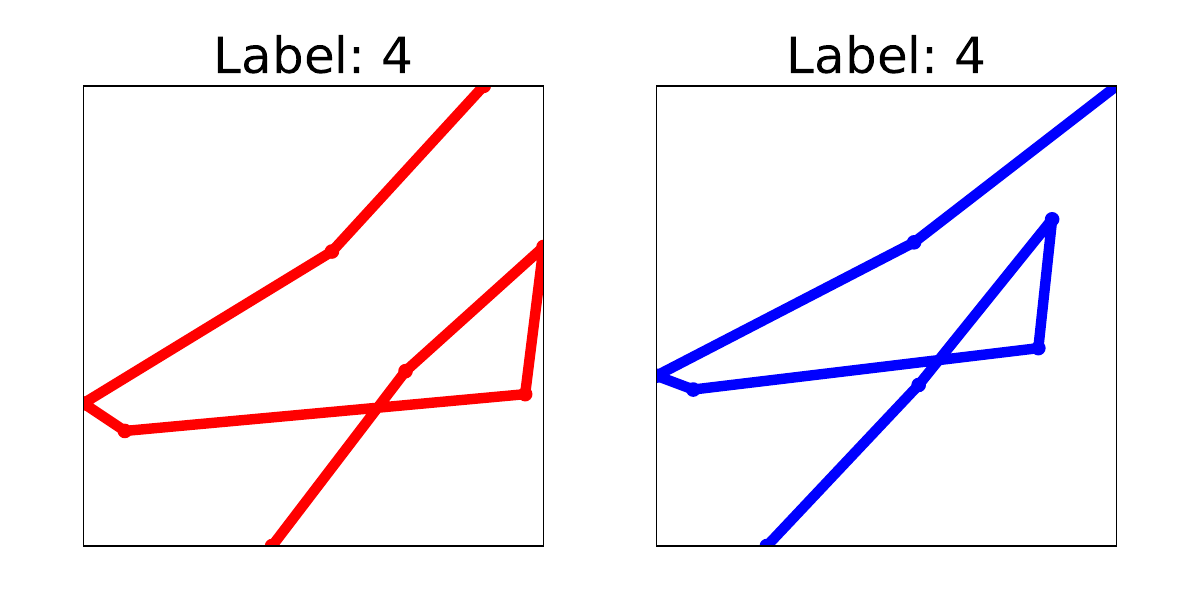}  
    \caption{Robustness 1 with same labels for test and training points.}  
    \label{fig:digit_attack_test61_train112}  
\end{figure}  
  
\begin{figure}[!htbp]  
    \centering 
    \includegraphics[width=0.45\textwidth]{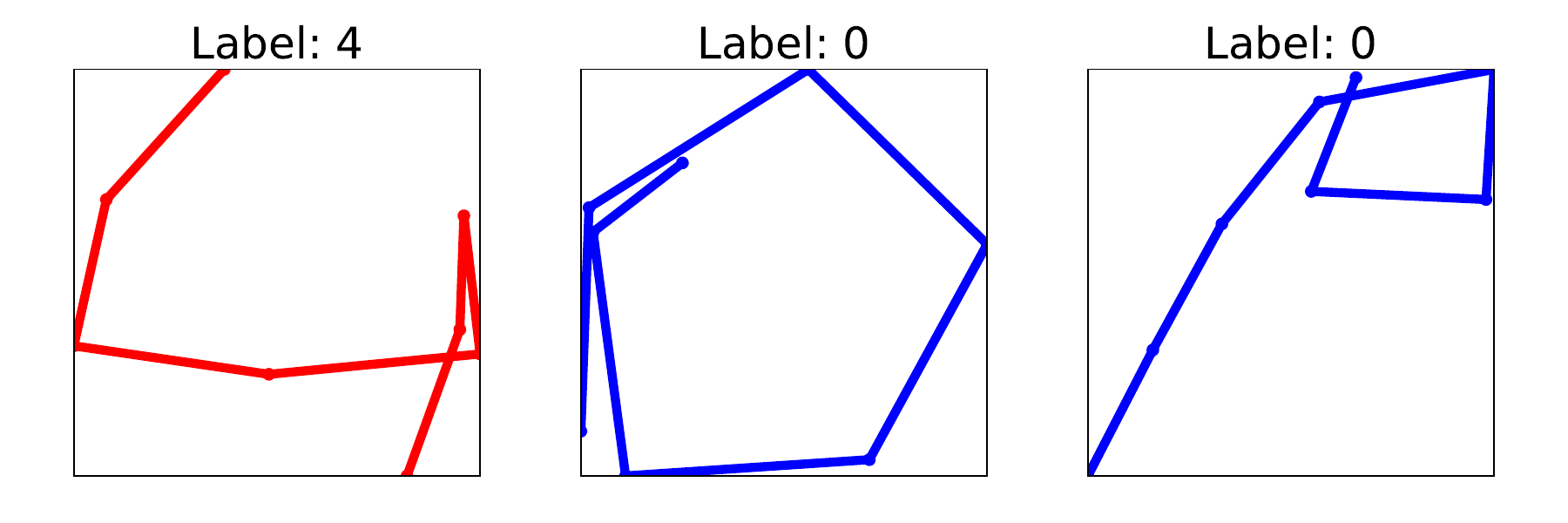}  
    \includegraphics[width=0.45\textwidth]{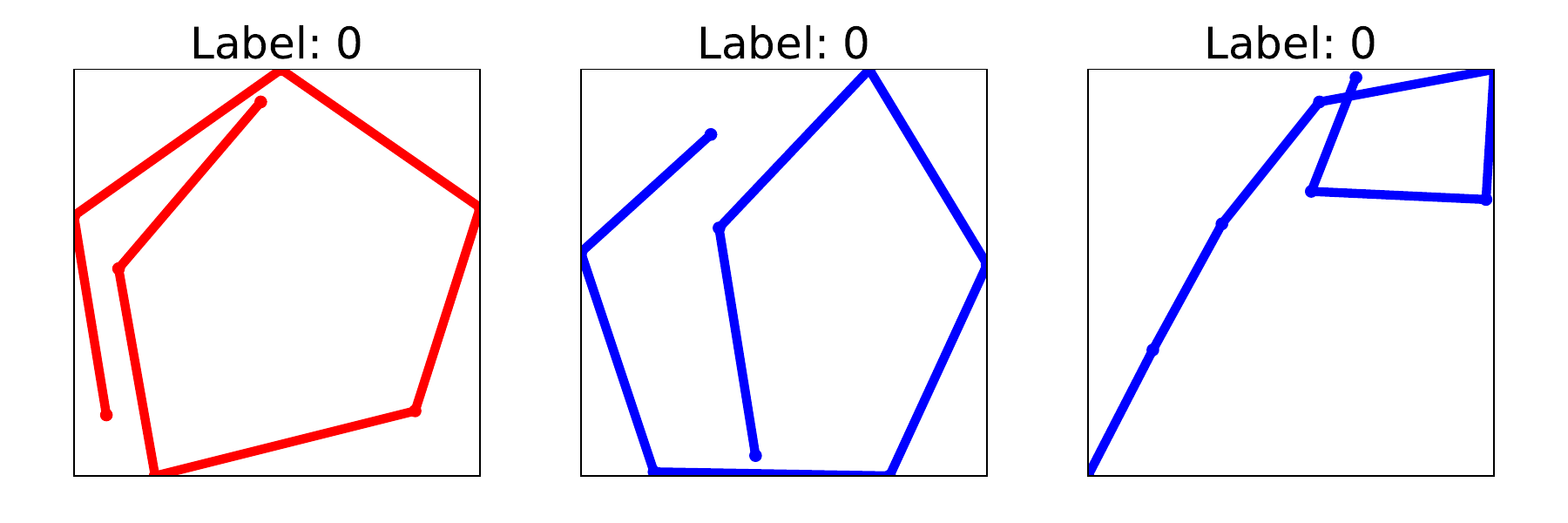}  
    \caption{Robustness 2.}  
    \label{fig:digit_attack_test111_train585}  
\end{figure}  

\clearpage

\subsection{Remaining Proofs}
\begin{theorem}
    For dataset $\ctdata$ and a test point $(x_t, y_t)$, $\lowrob \le \robustness$.
\end{theorem}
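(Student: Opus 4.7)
The plan is to show that any optimal solution to the global robustness problem can be restricted to each partition $\ctdata^j$ to yield a feasible solution for that partition's subproblem, and then to sum these partition-level costs.

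First I would fix an optimal global solution: a poisoned dataset $\ptdata = \{(x_i, y_i') \mid i \in [m]\}$ together with a witnessing classifier $\linmodel \in \hypo$ such that (i) $\linmodel(x_t) = y_t$, (ii) $\linmodel(x_i) = y_i'$ for all $i \in [m]$, and (iii) $\sum_{i=1}^{m} \mathbb{I}(y_i' \neq y_i) = \robustness$. Such a pair exists by the definition of $\robustness$.

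Next, for each partition index $j \in \{1, \dots, k\}$, I would consider the restriction $\ptdata|_j = \{(x_i, y_i') \mid (x_i, y_i) \in \ctdata^j\}$, paired with the same classifier $\linmodel$. Observe that this pair trivially satisfies the partition-level constraints: $\linmodel$ already gives $\linmodel(x_t) = y_t$ by (i), and $\linmodel(x_i) = y_i'$ for every $(x_i, y_i) \in \ctdata^j$ by (ii). Hence $\ptdata|_j$ is a feasible witness for the optimization problem defining $\robustness^j$, so
\begin{equation*}
\robustness^j \;\le\; \sum_{(x_i, y_i) \in \ctdata^j} \mathbb{I}(y_i' \neq y_i).
\end{equation*}

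Summing over $j$ and using the fact that $\ctdata^1, \dots, \ctdata^k$ is a partition of $\ctdata$ gives
\begin{equation*}
\lowrob \;=\; \sum_{j=1}^{k} \robustness^j \;\le\; \sum_{j=1}^{k} \sum_{(x_i, y_i) \in \ctdata^j} \mathbb{I}(y_i' \neq y_i) \;=\; \sum_{i=1}^{m} \mathbb{I}(y_i' \neq y_i) \;=\; \robustness,
\end{equation*}
which is the desired inequality. The main conceptual step, and the only one requiring any care, is the verification that a global witness classifier remains valid as a partition-level witness; this is immediate since the partition-level constraints are a strict subset of the global ones. No obstacle of real difficulty is expected, but one should be careful to check that the feasibility of $\ptdata|_j$ includes the test-point constraint, which is preserved because $\linmodel$ is shared across partitions rather than re-optimized.
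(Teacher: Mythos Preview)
Your proof is correct and rests on the same key observation as the paper's: the global witness classifier $\linmodel$ restricted to any partition $\ctdata^j$ is feasible for that partition's subproblem, so $\robustness^j$ is bounded above by the number of flips the global witness makes inside $\ctdata^j$. The paper wraps this in a proof by contradiction, whereas your direct summation over partitions is slightly cleaner and avoids the paper's imprecise step of inferring a per-partition strict inequality from the strict inequality of sums.
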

\begin{proof}
    We prove by contradiction. Assume $\robustness < \lowrob$. Consider the training points from $\ctdata$ that are witness to $\robustness$ (i.e., these training points results in $\robustness$). Now, consider a partition of $\ctdata$ into $\ctdata^1 \dots \ctdata^k$. Suppose our partition-based algorithm computes robustness of these partitions as $\lowrob^1 \dots \lowrob^k$. Let $\robustness^i$ be the witness points with respect to partition $\ctdata^i$ (i.e, training points from $\ctdata^i$). By our assumption, we have $\sum_i^{k} \robustness^i < \sum_i^{k} \lowrob^i$. Furthermore, each $\robustness^i < \lowrob^i$ as there can not be overlaps as $\ctdata^1 \dots \ctdata^k$ are partitions. But, $\lowrob^i$ is the minimal robustness for $\ctdata^i$, which is ensured by the encoding. Hence, it is not possible to have $\robustness^i < \lowrob^i$, a contradiction.  
\end{proof}

\begin{theorem}
    For dataset $\ctdata$ and a test point $(x_t, y_t)$, $\robustness \le \uprob$.
\end{theorem}
\begin{proof}
    Consider $\uprob$ computed via augmentation. The method ensures that the test point classifies as desired. It is not possible to have $\uprob < \robustness$, as $\robustness$ is the minimal perturbations required to classify the test point as desired. 
    Furthermore, an upper bound for $k'$ can also be computed by assuming certain properties on the loss function $\lossfn$. For instance, consider that $\lossfn$ is bounded by an interval $[L, U]$. Then, we have $k' \le \frac{\tsize U}{L}$. 
\end{proof}

\subsection{Additional Research Question}
In this section, we evaluate our tool \robint against an addition research question that could not be included in the evaluaion section due to space constraints.
\subsubsection{RQ5: Can a closely related sanitization technique help in increasing robustness?}
\begin{figure}[h]  
    \centering  
    \includegraphics[width=0.45\textwidth]{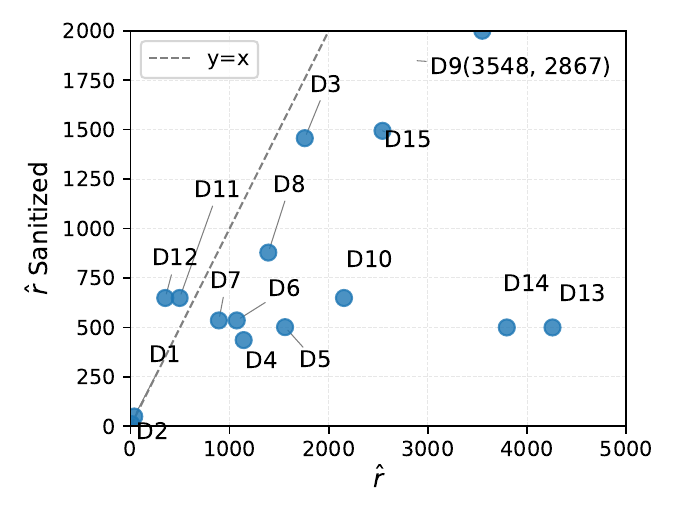}  
    \caption{  
        Average robustness before and after KNN sanitization~\cite{paudice2019label}.
    }  
    \label{fig:originalVsSanitized}  
\end{figure}    
%=================================================================================================================
The work in ~\cite{paudice2019label} also computes upper bound robustness in a targeted setting, but it requires a predefined bound and assumes a stronger threat model. Along with robustness, a technique based on K-nearest neighbors (KNN) is presented to sanitize the dataset. We implemented this sanitization method in our tool and computed robustness before and after sanitization. The results are in Figure~\ref{fig:originalVsSanitized}. We observed that this process increased the average robustness of only 2 out of 15 datasets, while it decreased in the remaining ones. We hypothesize this could be because the robustness we compute may not align with the nearest neighbor structure.

\subsection{Table 1 with Standard Deviation}
Table~\ref{tab:robustness_metrics2} presents standard deviations for the average values tabulated in Table~\ref{tab:robustness_metrics}. 

\begin{sidewaystable}[htbp]  
\centering  
\begingroup  
% \fontsize{9}{9}\selectfont  
\begin{tabular}{lrrrrrrrrrrrrrr}  
\toprule  
\textbf{Dataset} & $\dimension$ & $\mathbf{\tsize}$ & $\mathbf{\#x_t}$ & $\mathbf{\uprob}$ & $\mathbf{\uprob}_{\ipr}$ (N) & $\mathbf{\uprob}_{\ipr}$ & \%Found$\mathbf{\uprob}_{\ipr}$ & $\rho$ & $\rho_{\ipr}$ & $\mathbf{\lowrob}$ & $\mathbf{\uprob}_{\ipr}\ \mathrm{SD}$ & $\mathbf{\uprob}_{\ipr} (N) \mathrm{SD}$ & $\mathbf{\uprob} SD$ & $\mathbf{\lowrob} SD$ \\  
\midrule  
Letter Recognition     & 16   & 1335   & 149  & \textbf{38.16}    & 569.62     & 168.18      & 66  & \textbf{0.55} & 0.34 & 0.68 & 192     & 577.8   & 36.7   & 2.3   \\  
Digits Recognition    & 16   & 1404   & 156  & \textbf{10.79}    & 1305.88    & 226.54      & 8   & \textbf{0.17} & 0.04 & 0.03 & 292.1    & 336.2   & 19.3   & 0.3   \\  
SST (BOW)             & 300  & 6920   & 872  & 1758.08           & \textbf{93.76}     & 66.99       & 100 & \textbf{0.99} & 0.61 & 0.96 & 131.1    & 474     & 24.3   & 1.3   \\  
SST (BERT)            & 768  & 6920   & 872  & 1141.54           & \textbf{439.64}    & 247.90      & 97  & \textbf{0.97} & 0.52 & 0.00 & 453.1    & 1273    & 35.8   & 0     \\  
Emo (BOW)             & 300  & 9025   & 1003 & 1558.92           & \textbf{237.95}    & 167.28      & 99  & \textbf{0.99} & 0.52 & 0.00 & 382.9    & 875.5   & 37.4   & 0     \\  
Speech (BOW)          & 300  & 9632   & 1071 & 1071.90           & \textbf{624.09}    & 373.20      & 97  & \textbf{0.61} & 0.58 & 0.49 & 464.5    & 1572.2  & 15.7   & 6.5   \\  
Speech (BERT)         & 768  & 9632   & 1071 & \textbf{891.28}   & 931.04     & 593.20      & 96  & \textbf{0.54} & 0.49 & 0.10 & 850.3    & 1907.3  & 17.9   & 0.9   \\  
Fashion-MNIST         & 784  & 10800  & 1200 & 1389.91           & \textbf{1037.25}   & 274.13      & 93  & \textbf{0.98} & 0.45 & 0.01 & 651.1    & 2800.6  & 47.8   & 0.1   \\  
Loan (BOW)            & 18   & 11200  & 2800 & \textbf{3548.16}  & 10069.39   & 42.22       & 60  & \textbf{0.93} & 0.46 & 111.21 & 88.8     & 12317.5 & 1364.1 & 125.5 \\  
Emo (BERT)            & 768  & 11678  & 1298 & \textbf{2153.26}  & 2913.84    & 232.23      & 77  & \textbf{0.97} & 0.40 & 560.45 & 445.9    & 4864.7  & 1097.7 & 1086.3 \\  
Essays (BOW)          & 300  & 11678  & 1298 & \textbf{495.90}   & 2884.23    & 1664.89     & 88  & 0.21          & \textbf{0.50} & 0.00 & 1644     & 3619.6  & 19.6   & 0     \\  
Essays (BERT)         & 768  & 11678  & 1298 & \textbf{350.85}   & 3761.81    & 1245.04     & 76  & 0.22          & \textbf{0.39} & 0.00 & 1478     & 4646.6  & 11.9   & 0     \\  
Tweet (BOW)           & 300  & 18000  & 1000 & 4256.26           & \textbf{304.15}    & 216.32      & 99  & \textbf{1.00} & 0.54 & 7.70  & 533      & 1022.3  & 50.8   & 4.3   \\  
Tweet (BERT)          & 768  & 18000  & 1000 & 3795.24           & \textbf{352.38}    & 343.72      & 100 & \textbf{0.99} & 0.55 & 0.32  & 704.3    & 755.3   & 58.5   & 0.8   \\  
Census Income         & 41   & 80136  & 8905 & \textbf{2542.06}  & 66003.10   & 7012.80     & 19  & \textbf{0.33} & 0.07 & 87.19  & 12422.1  & 29395.1 & 1685.3 & 379.6 \\  
\bottomrule  
\end{tabular}  
\endgroup  
\caption{Summary of results. Here, $\dimension$, $\tsize$ -- dimension and size of datasets, $\mathbf{\#x_t}$-- number of test points, $\mathbf{\uprob}$, $\mathbf{\lowrob}$ -- average upper and lower bound robustness from \robint, $\mathbf{\uprob}_{\ipr}$(normalized), $\mathbf{\uprob}_{\ipr}$ -- average upper bound robustness from \iprtool, \%Found $\mathbf{\uprob}_{\ipr}$-- percentage of robustness found by \iprtool, and $\rho$, $\rho_{\ipr}$-- average likelihood of flipping by \robint and \iprtool. The last four columns are standard deviations of average values.}  
\label{tab:robustness_metrics2}  
\end{sidewaystable}  

%\clearpage

\subsection{Histograms of $\uprob$ and $\lowrob$}

This section presents histograms of $\uprob$ and $\lowrob$ for all the datasets.

\begin{figure}[!htbp]  
    \centering  
    \includegraphics[width=0.45\textwidth]{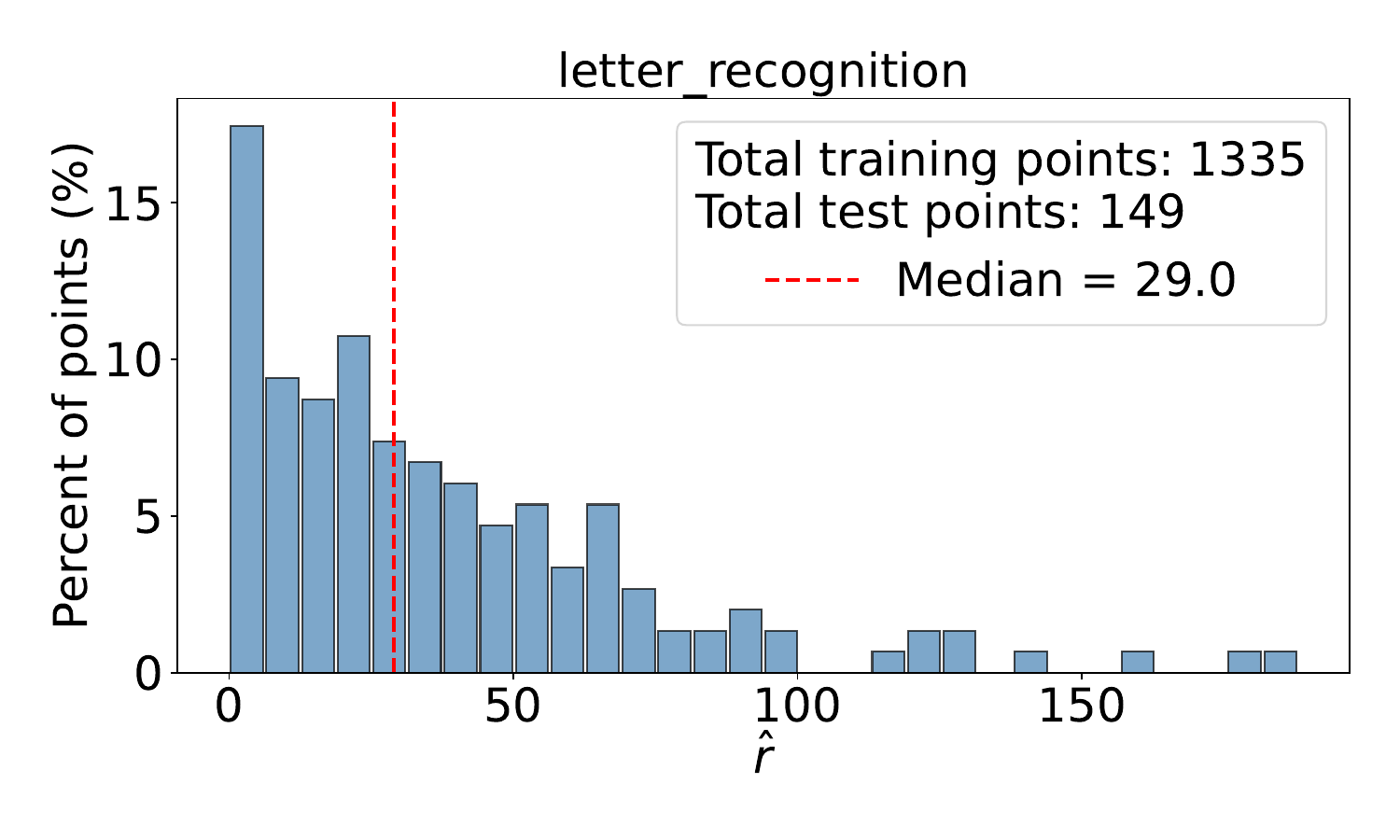}  
    \caption{Histogram of $\uprob$ for Letter Recognition dataset.}  
    \label{fig:ub_hist_letter}  
\end{figure}  

\begin{figure}[!htbp]  
    \centering  
    \includegraphics[width=0.45\textwidth]{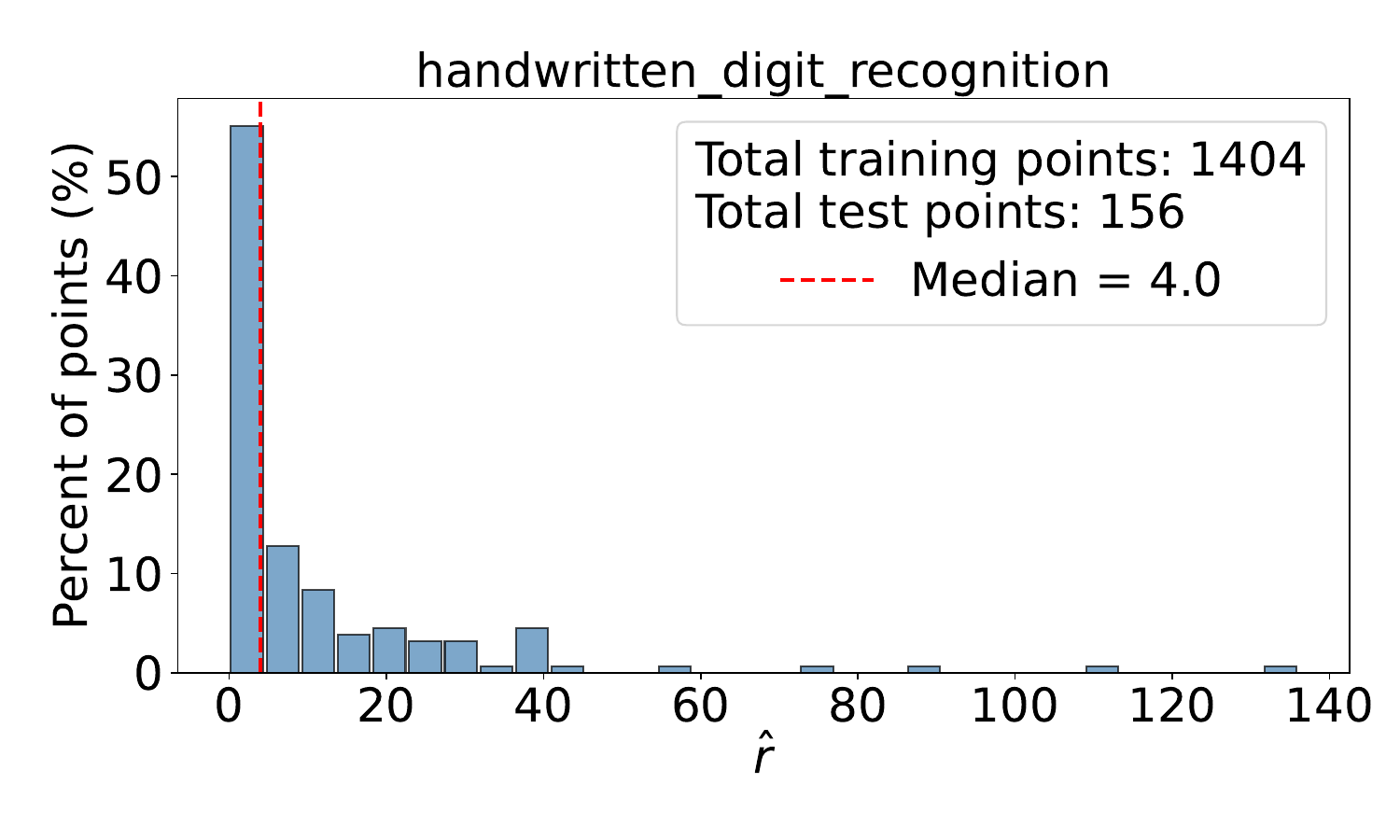}  
    \caption{Histogram of $\uprob$ for Digits Recognition dataset.}  
    \label{fig:ub_hist_handwritten}  
\end{figure}  

\begin{figure}[!htbp]  
    \centering  
    \includegraphics[width=0.45\textwidth]{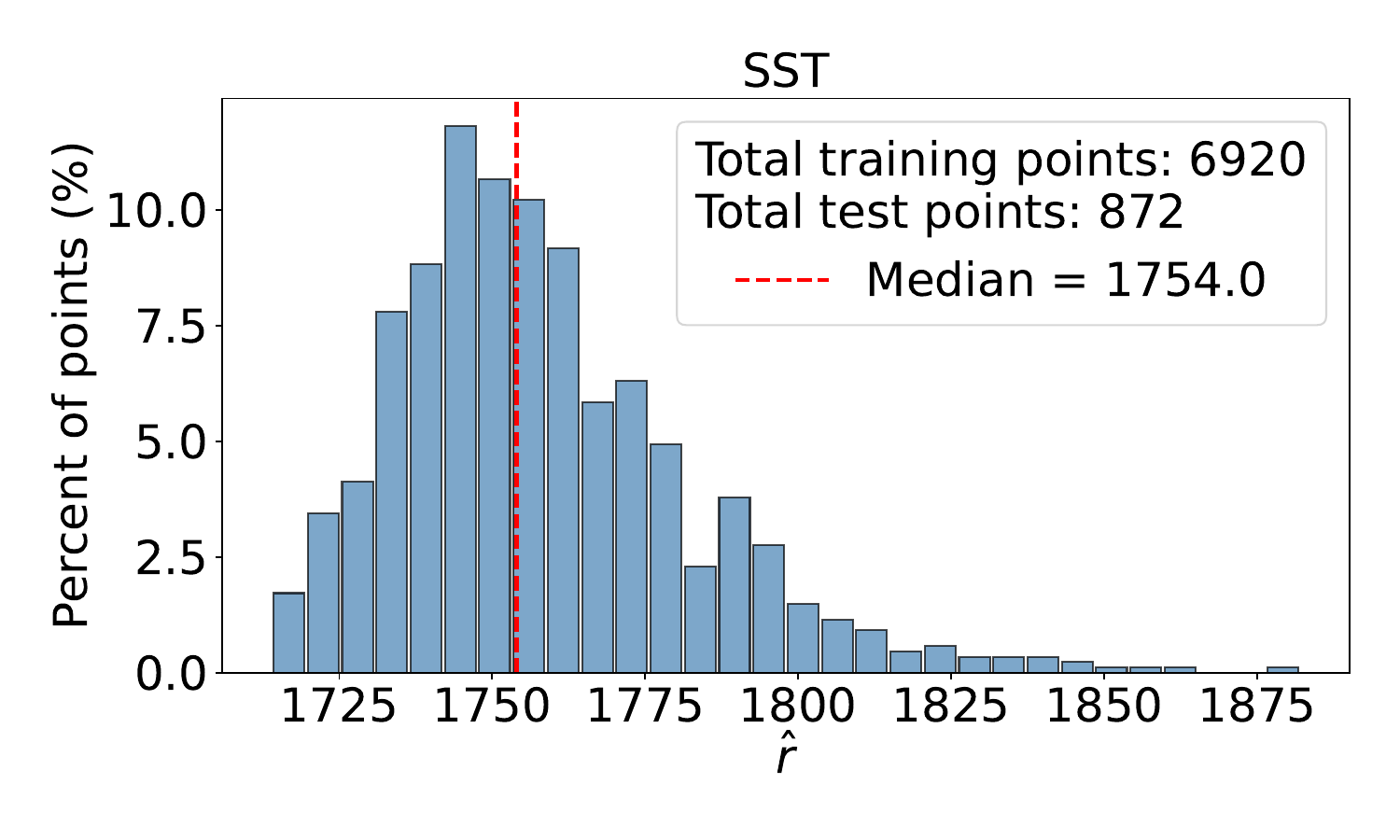}  
    \caption{Histogram of $\uprob$ for SST (BOW) dataset.}  
    \label{fig:ub_hist_SST}  
\end{figure}  
  
\begin{figure}[!htbp]  
    \centering  
    \includegraphics[width=0.45\textwidth]{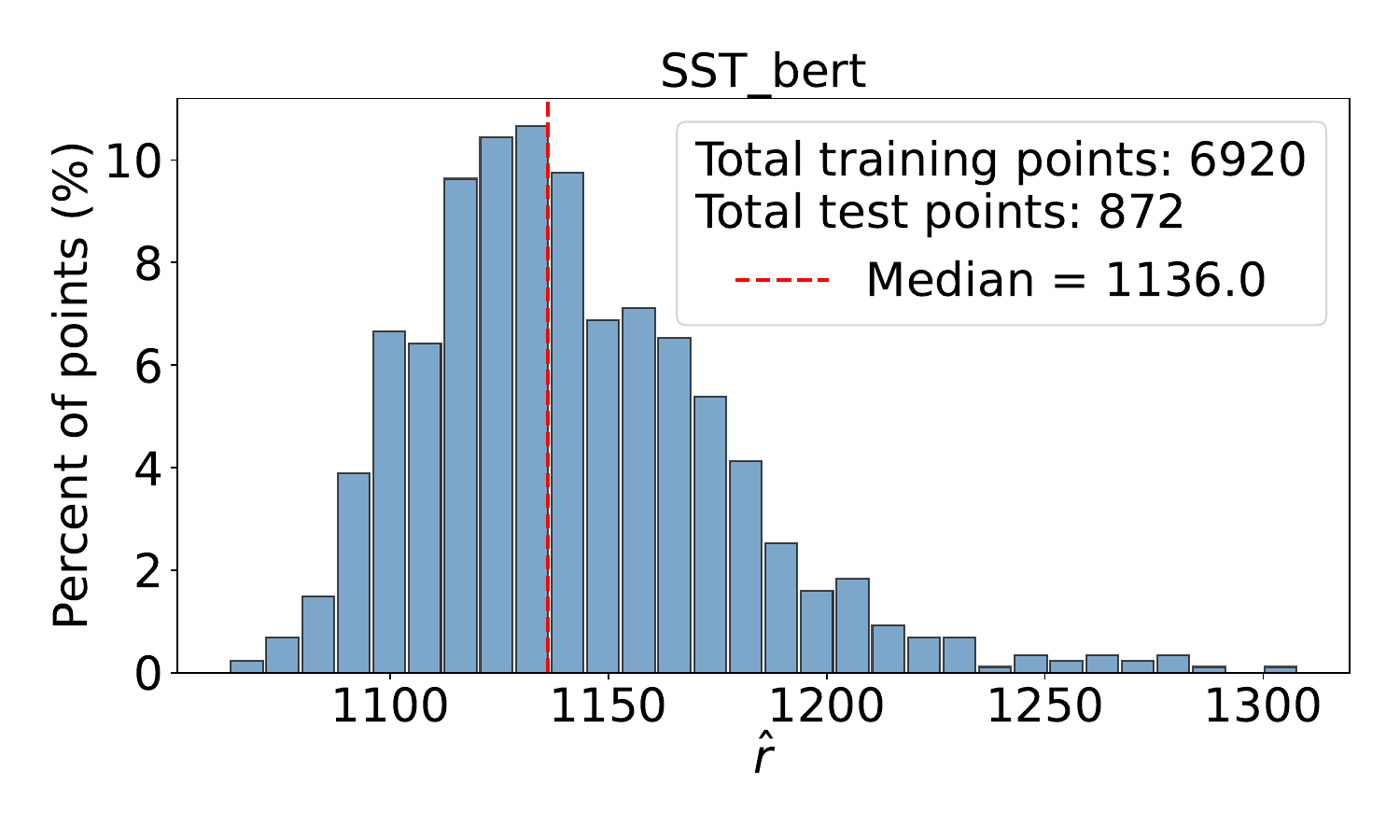}  
    \caption{Histogram of $\uprob$ for SST (BERT) dataset.}  
    \label{fig:ub_hist_SST_bert}  
\end{figure}  

\begin{figure}[!htbp]  
    \centering  
    \includegraphics[width=0.45\textwidth]{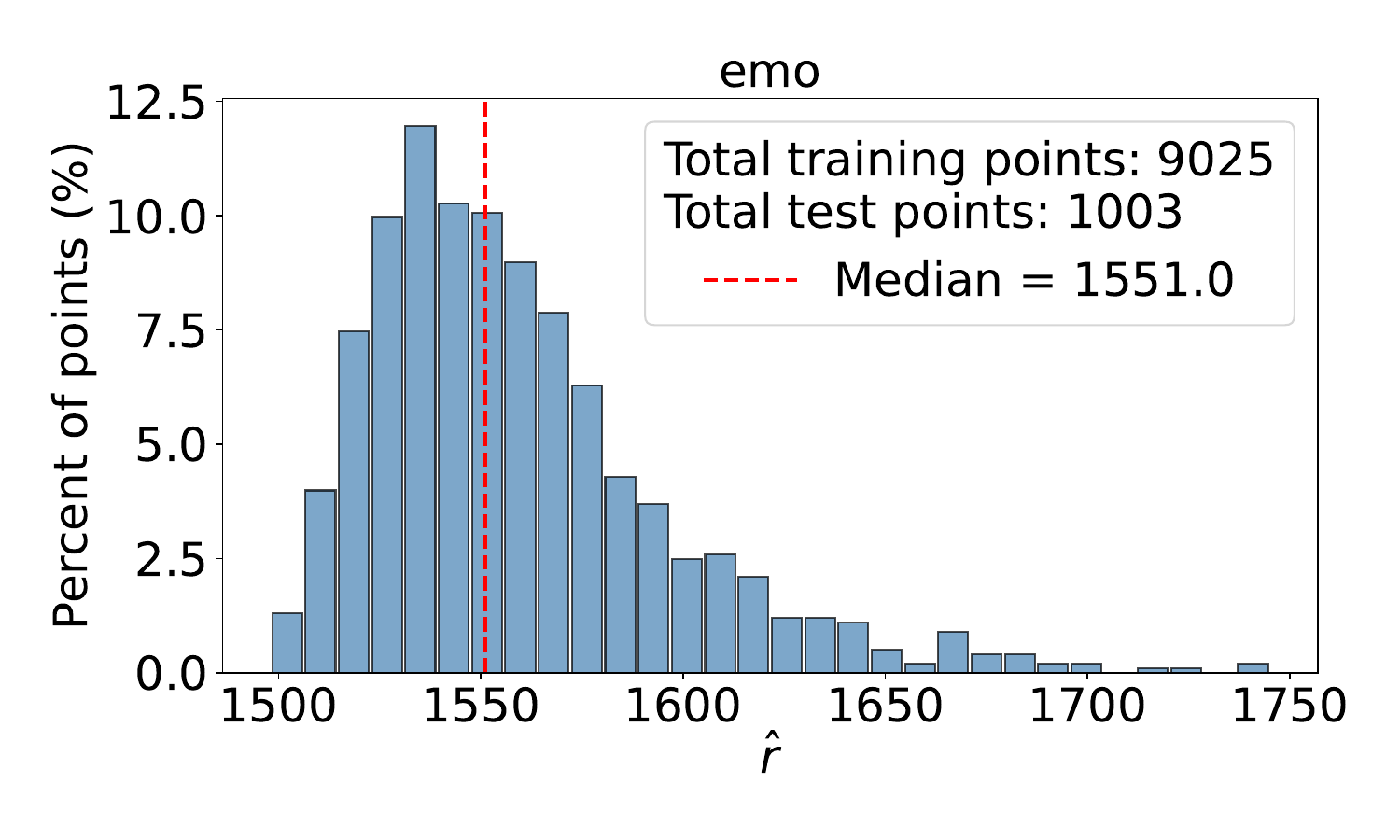}  
    \caption{Histogram of $\uprob$ for Emo (BOW) dataset.}  
    \label{fig:ub_hist_emo}  
\end{figure}

\begin{figure}[!htbp]  
    \centering  
    \includegraphics[width=0.45\textwidth]{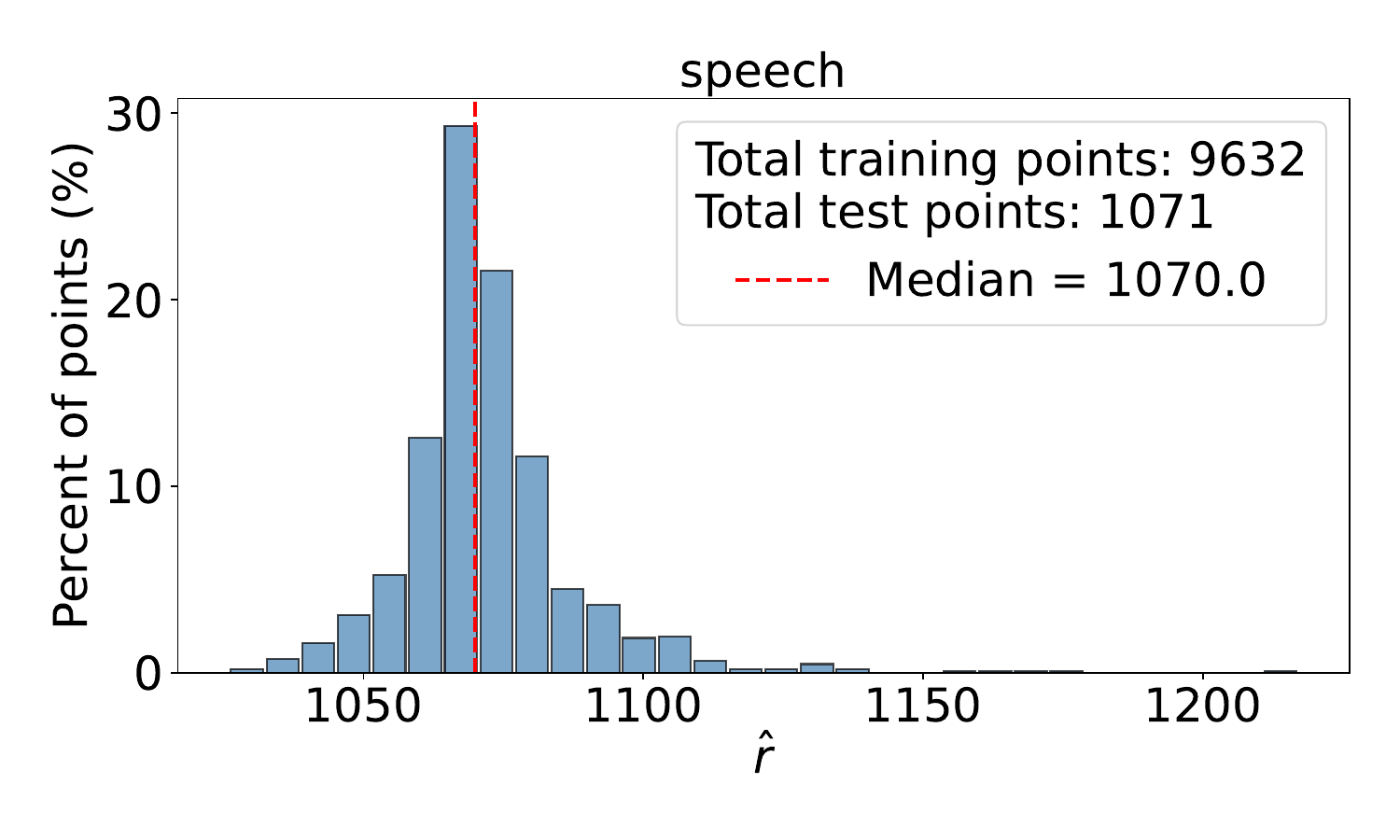}  
    \caption{Histogram of $\uprob$ for Speech (BOW) dataset.}  
    \label{fig:ub_hist_speech}  
\end{figure}  
  
\begin{figure}[!htbp]  
    \centering  
    \includegraphics[width=0.45\textwidth]{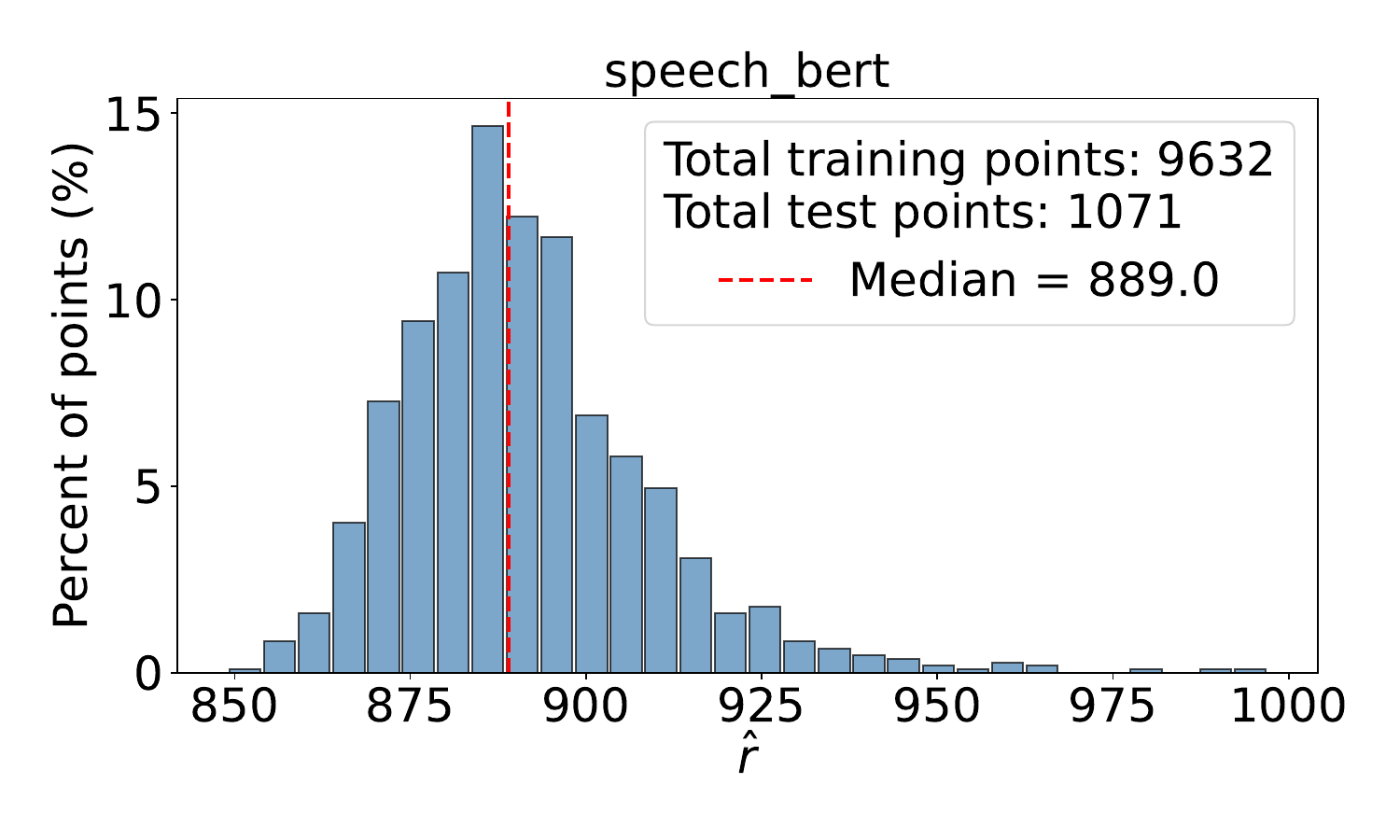}  
    \caption{Histogram of $\uprob$ for Speech (BERT) dataset.}  
    \label{fig:ub_hist_speech_bert}  
\end{figure}

\begin{figure}[!htbp]  
    \centering  
    \includegraphics[width=0.45\textwidth]{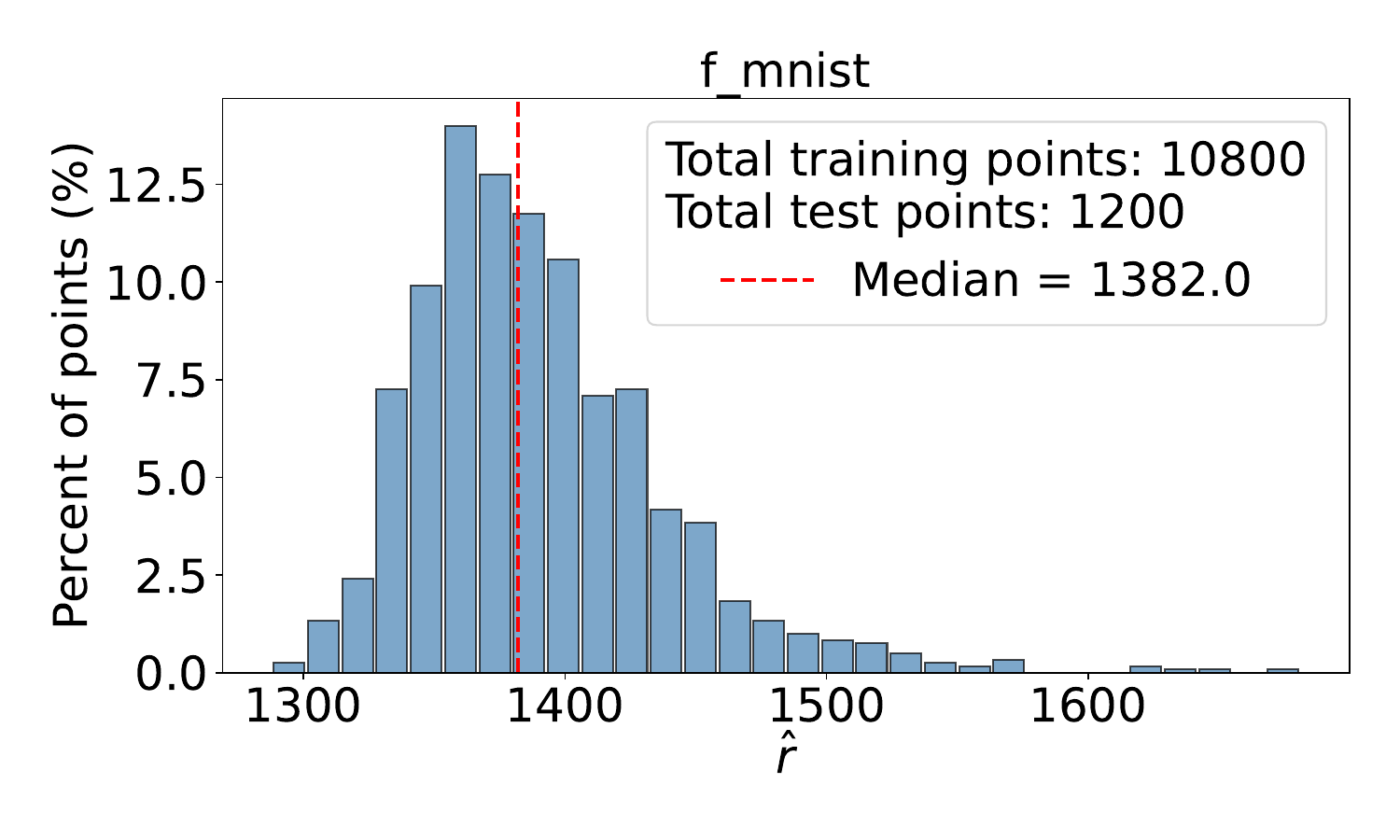}  
    \caption{Histogram of $\uprob$ for Fashion MNIST dataset.}  
    \label{fig:ub_hist_f_mnist}  
\end{figure}  

\begin{figure}[!htbp]  
    \centering  
    \includegraphics[width=0.45\textwidth]{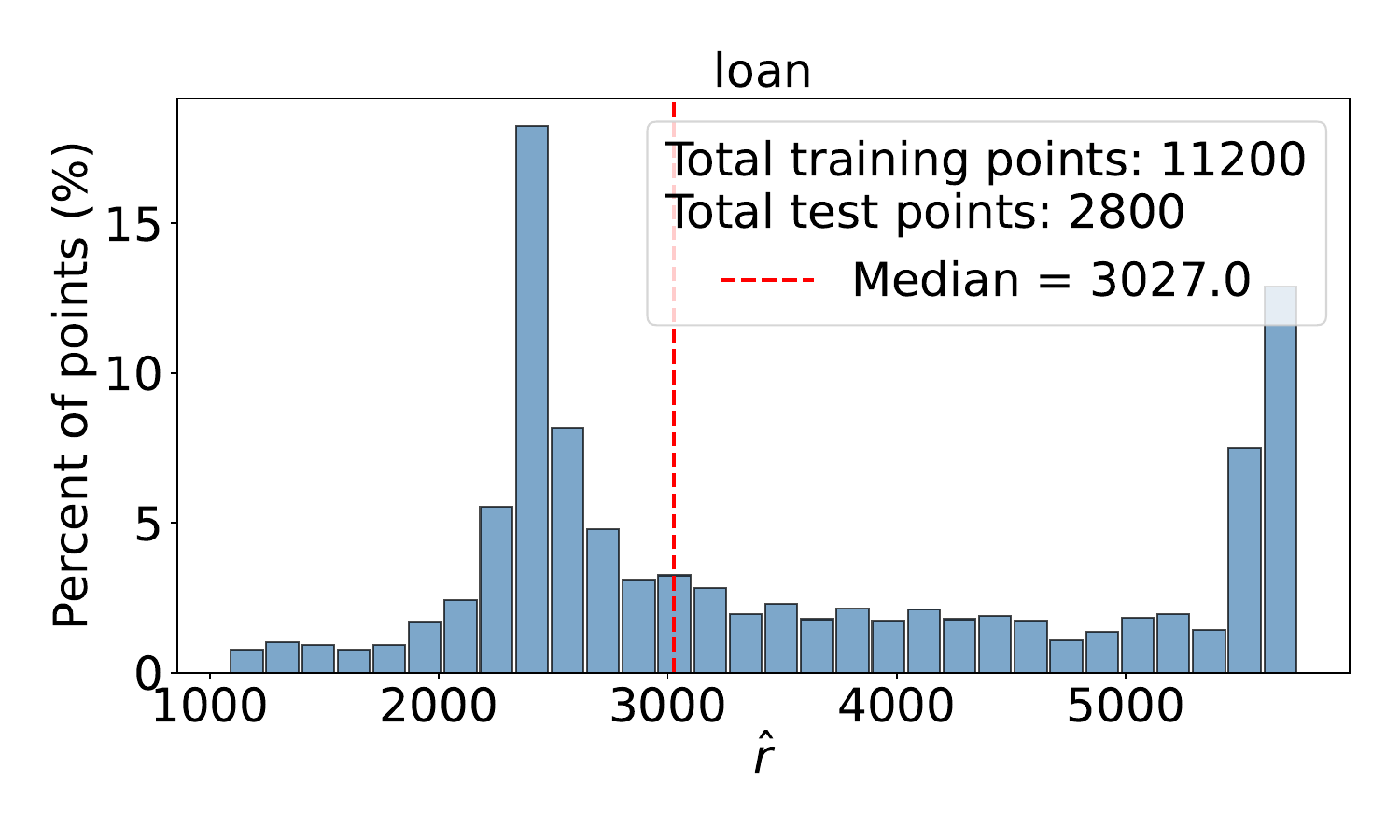}  
    \caption{Histogram of $\uprob$ for Loan (BOW) dataset.}  
    \label{fig:ub_hist_loan}  
\end{figure}  

\begin{figure}[!htbp]  
    \centering  
    \includegraphics[width=0.45\textwidth]{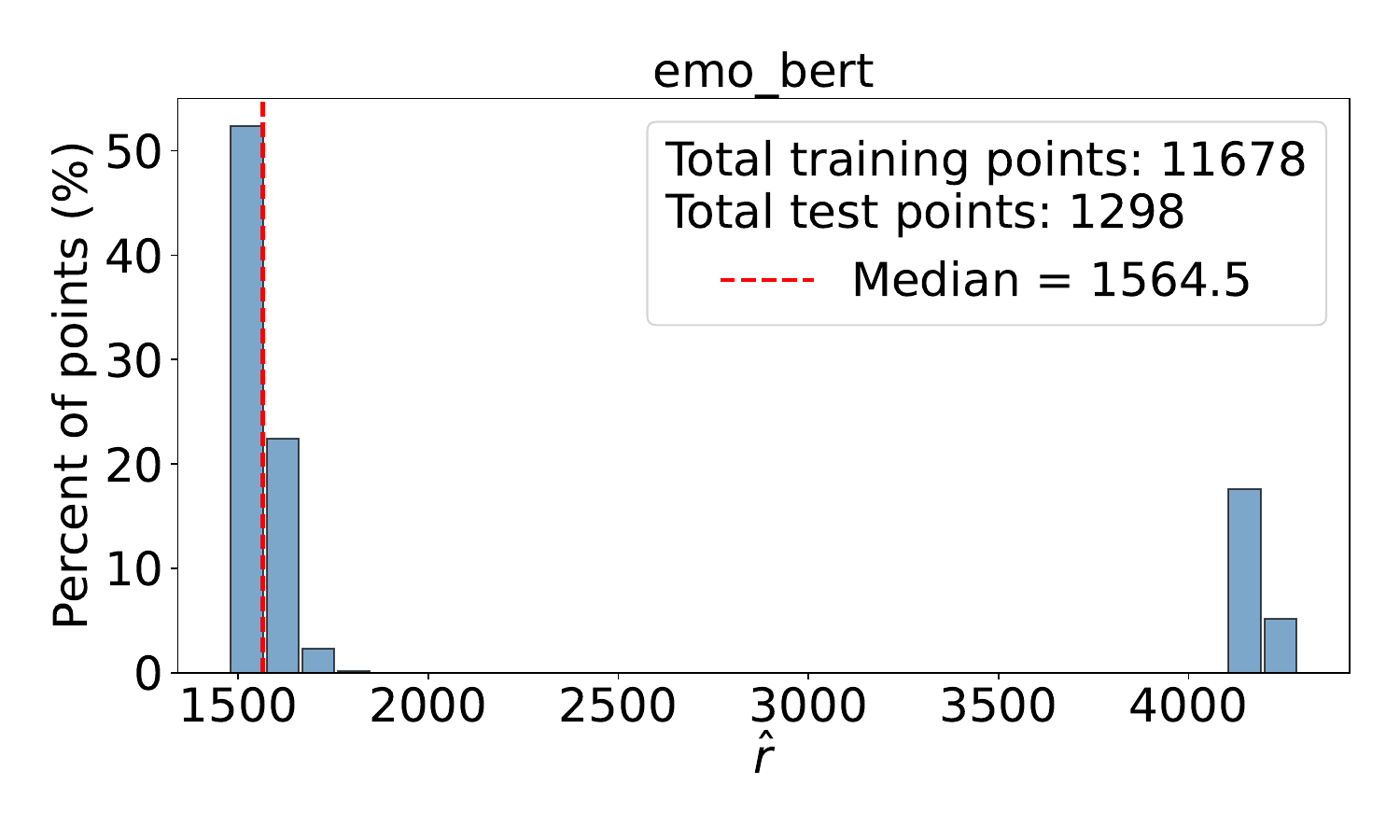}  
    \caption{Histogram of $\uprob$ for Emo (BERT) dataset.}  
    \label{fig:ub_hist_emo_bert}  
\end{figure}  
  
\begin{figure}[!htbp]  
    \centering  
    \includegraphics[width=0.45\textwidth]{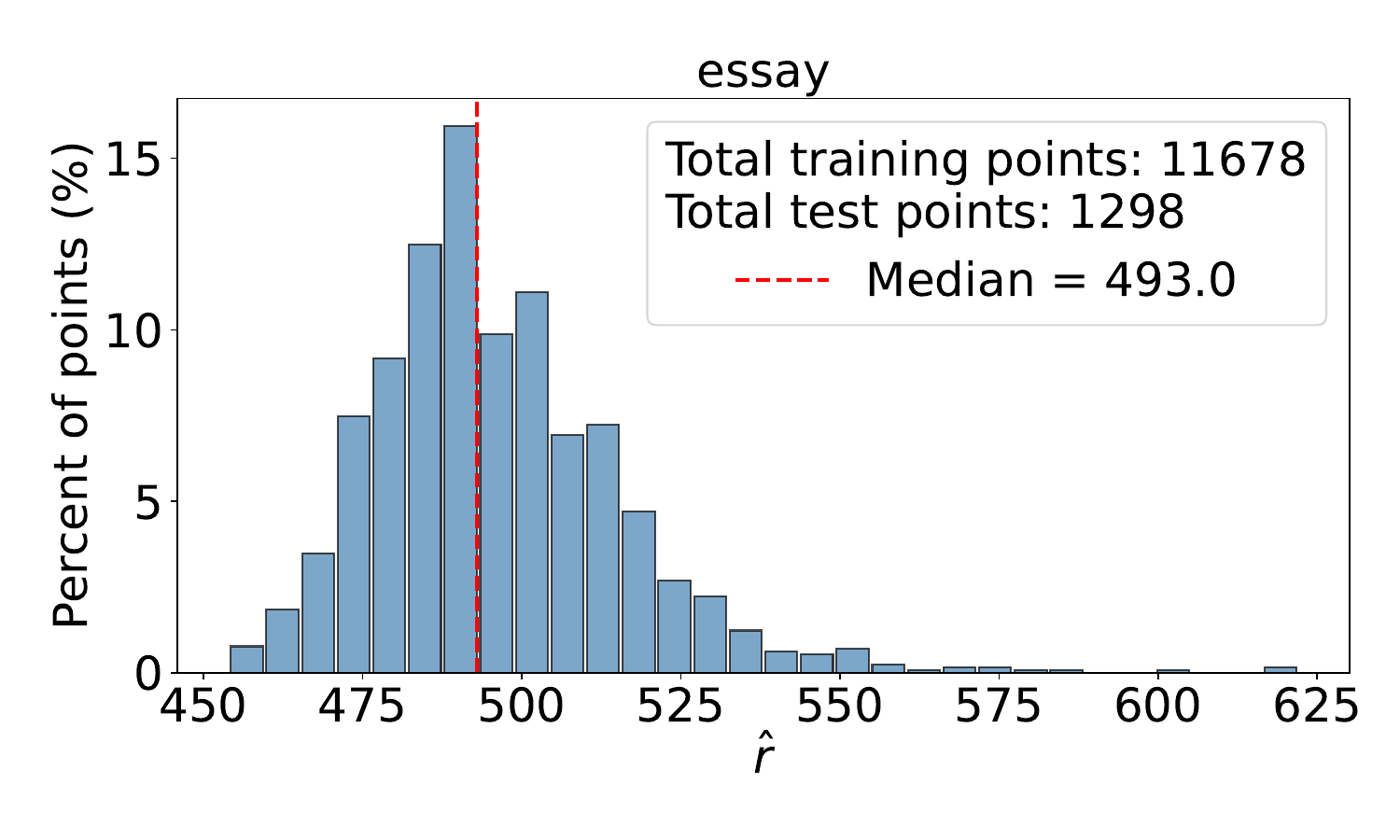}  
    \caption{Histogram of $\uprob$ for Essays (BOW) dataset.}  
    \label{fig:ub_hist_essay}  
\end{figure}  
  
\begin{figure}[!htbp]  
    \centering  
    \includegraphics[width=0.45\textwidth]{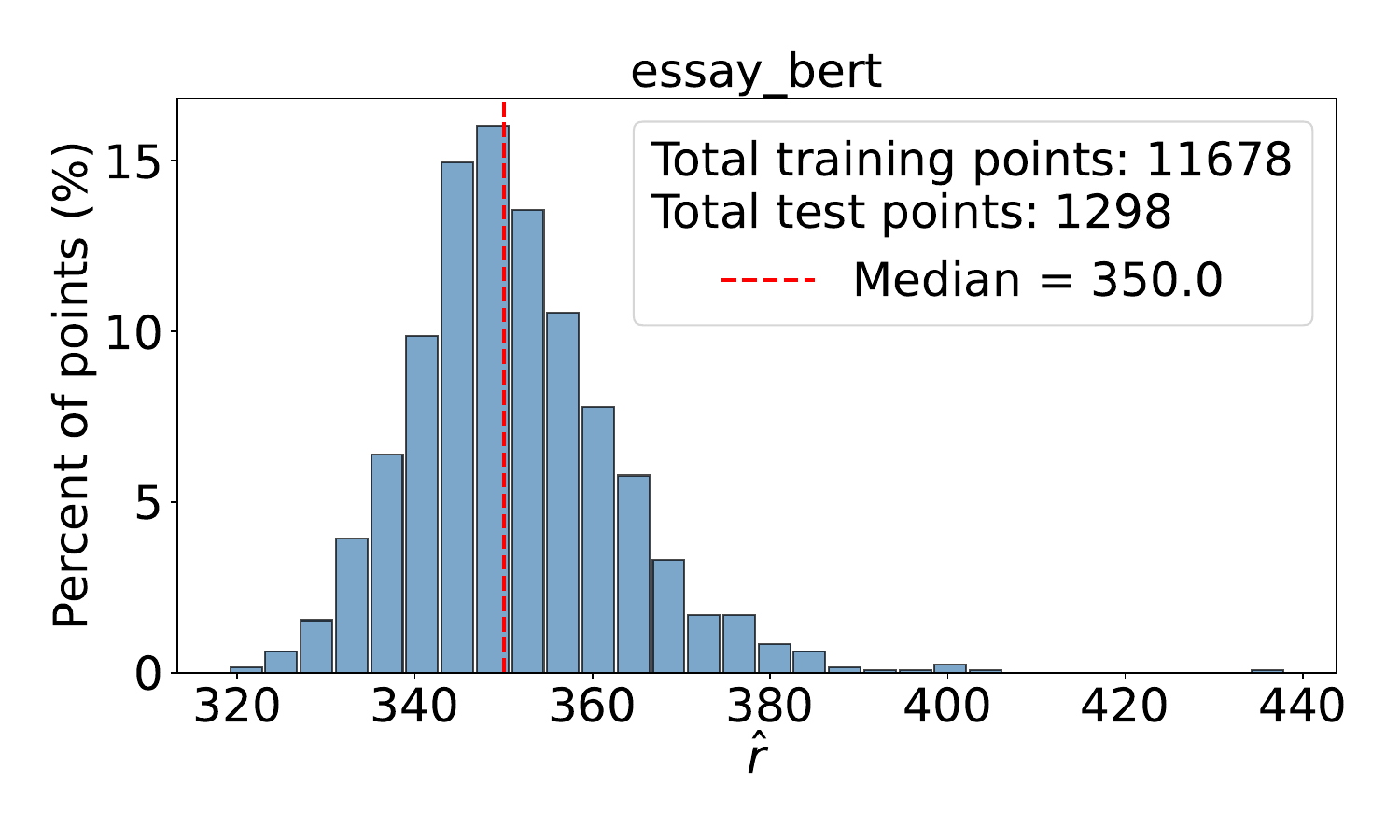}  
    \caption{Histogram of $\uprob$ for Essays (BERT) dataset.}  
    \label{fig:ub_hist_essay_bert}  
\end{figure}  
    
\begin{figure}[!htbp]  
    \centering  
    \includegraphics[width=0.45\textwidth]{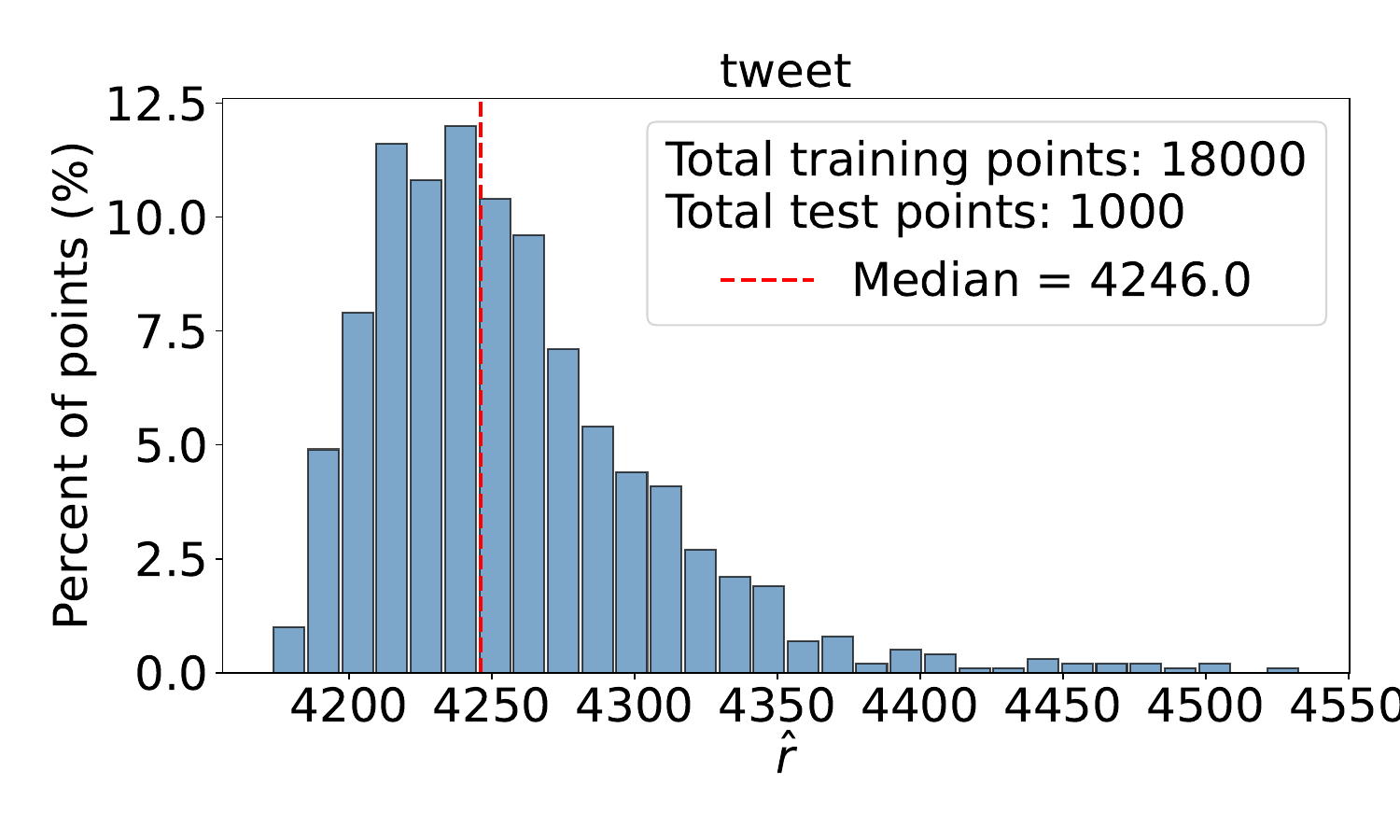}  
    \caption{Histogram of $\uprob$ for Tweet (BOW) dataset.}  
    \label{fig:ub_hist_tweet}  
\end{figure}  
  
\begin{figure}[!htbp]  
    \centering  
    \includegraphics[width=0.45\textwidth]{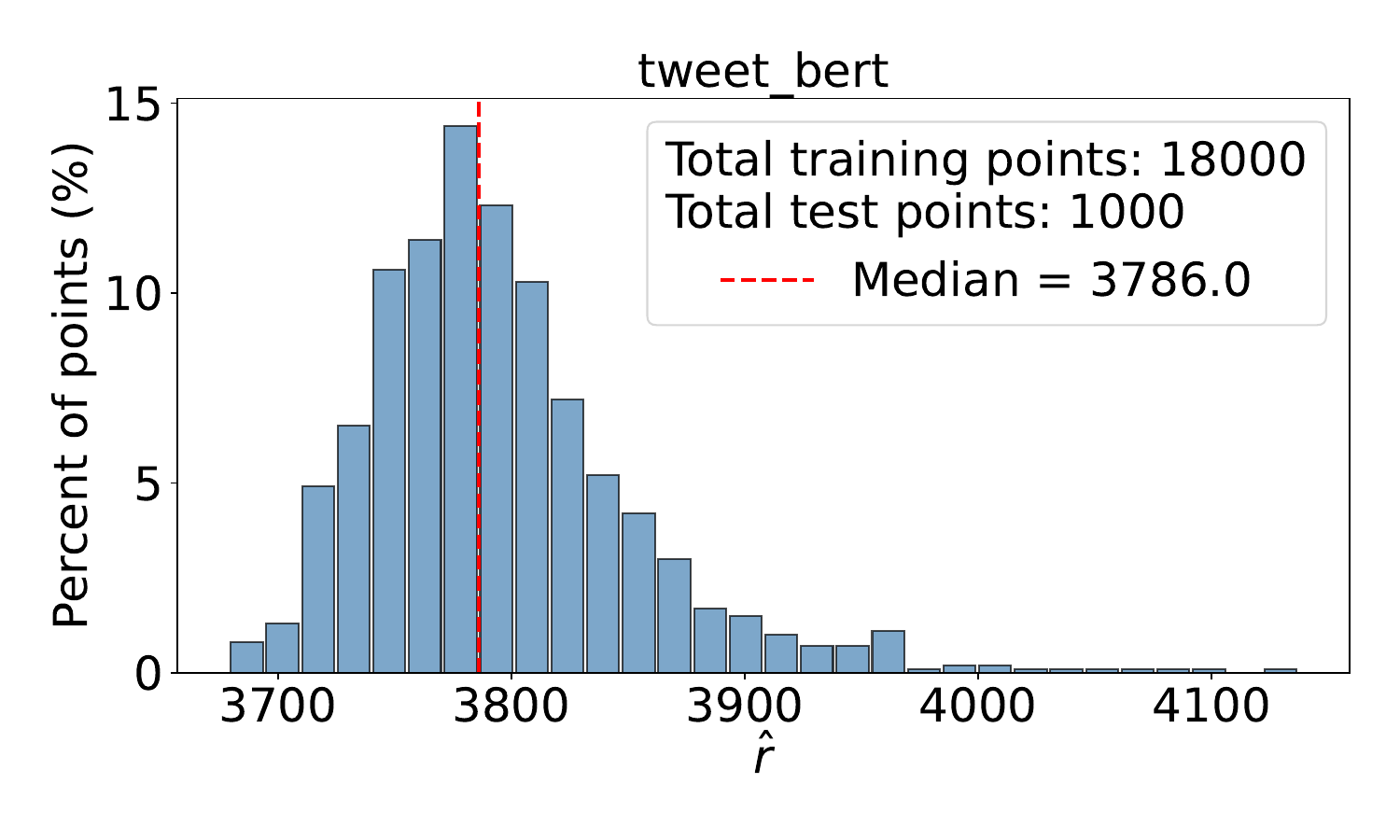}  
    \caption{Histogram of $\uprob$ for Tweet (BERT) dataset.}  
    \label{fig:ub_hist_tweet_bert}  
\end{figure}  
  
\begin{figure}[!htbp]  
    \centering  
    \includegraphics[width=0.45\textwidth]{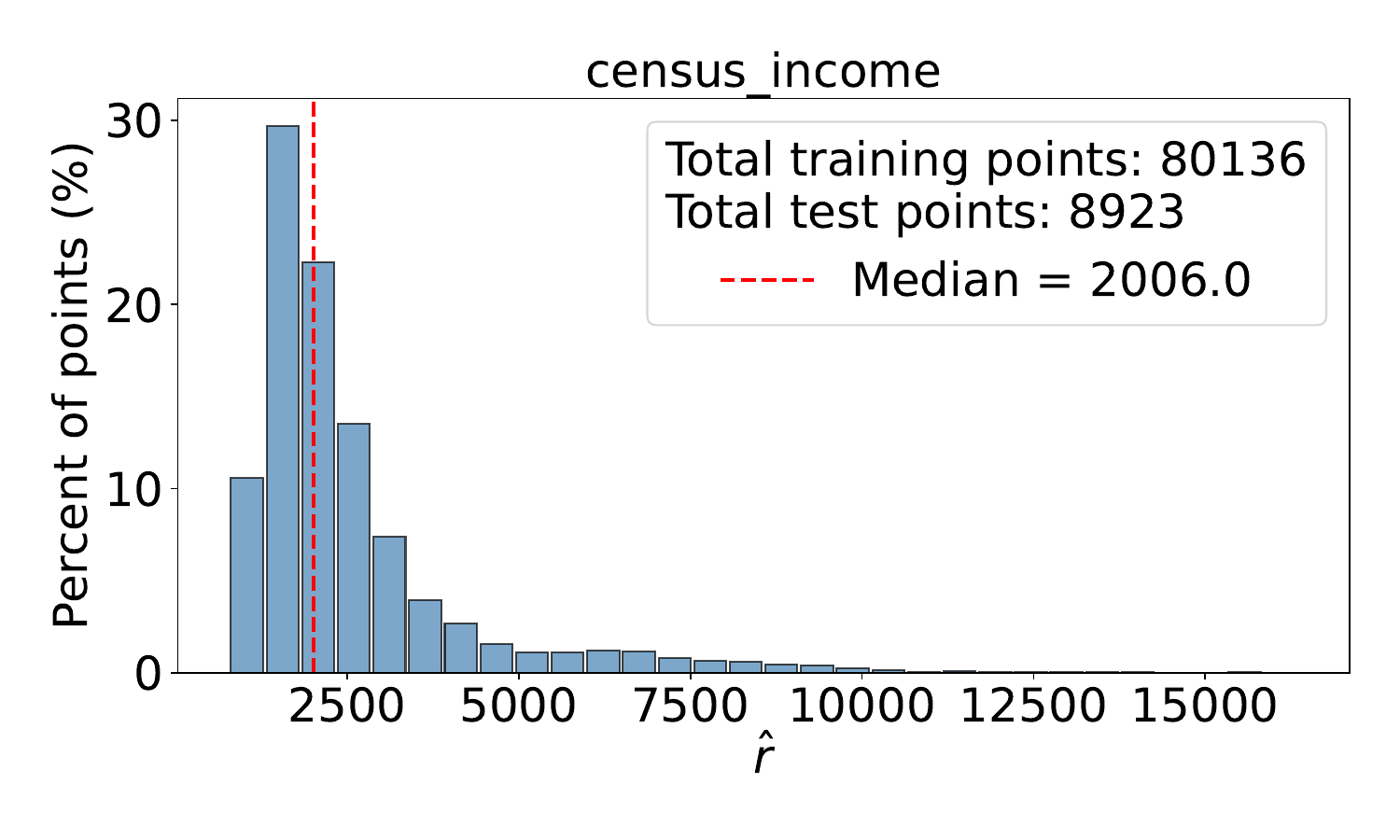}  
    \caption{Histogram of $\uprob$ for Census Income dataset.}  
    \label{fig:ub_hist_census_income}  
\end{figure}

% \clearpage

% \section{All $\lowrob$ Histograms}
% \section{Lower Bound Robustness Histograms}  
  
\begin{figure}[!htbp]  
    \centering  
    \includegraphics[width=0.45\textwidth]{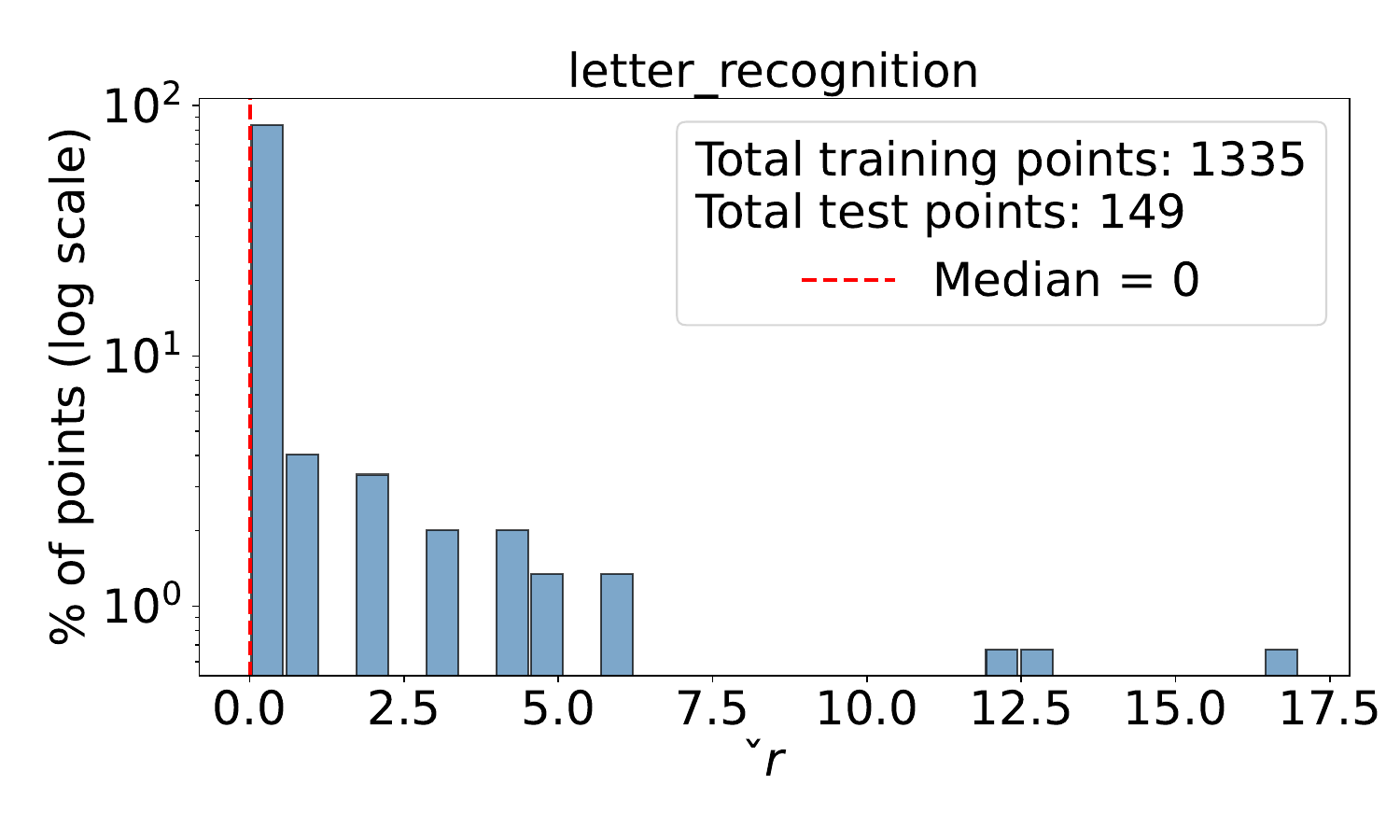}  
    \caption{Histogram of $\lowrob$ for Letter Recognition dataset.}  
    \label{fig:lb_hist_letter}  
\end{figure}  
   
\begin{figure}[!htbp]  
    \centering  
    \includegraphics[width=0.45\textwidth]{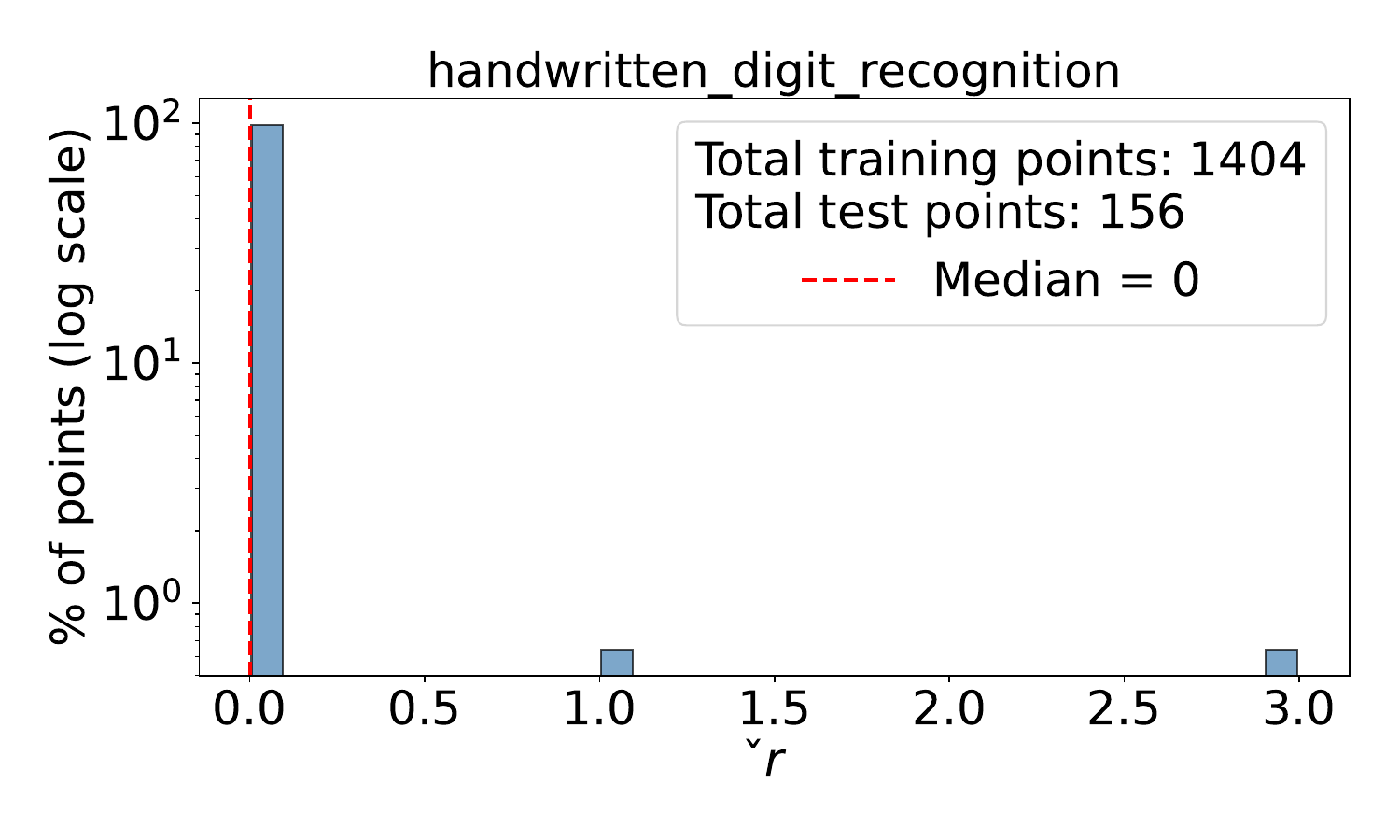}  
    \caption{Histogram of $\lowrob$ for Digits Recognition dataset.}  
    \label{fig:lb_hist_handwritten}  
\end{figure}  
  
\begin{figure}[!htbp]  
    \centering  
    \includegraphics[width=0.45\textwidth]{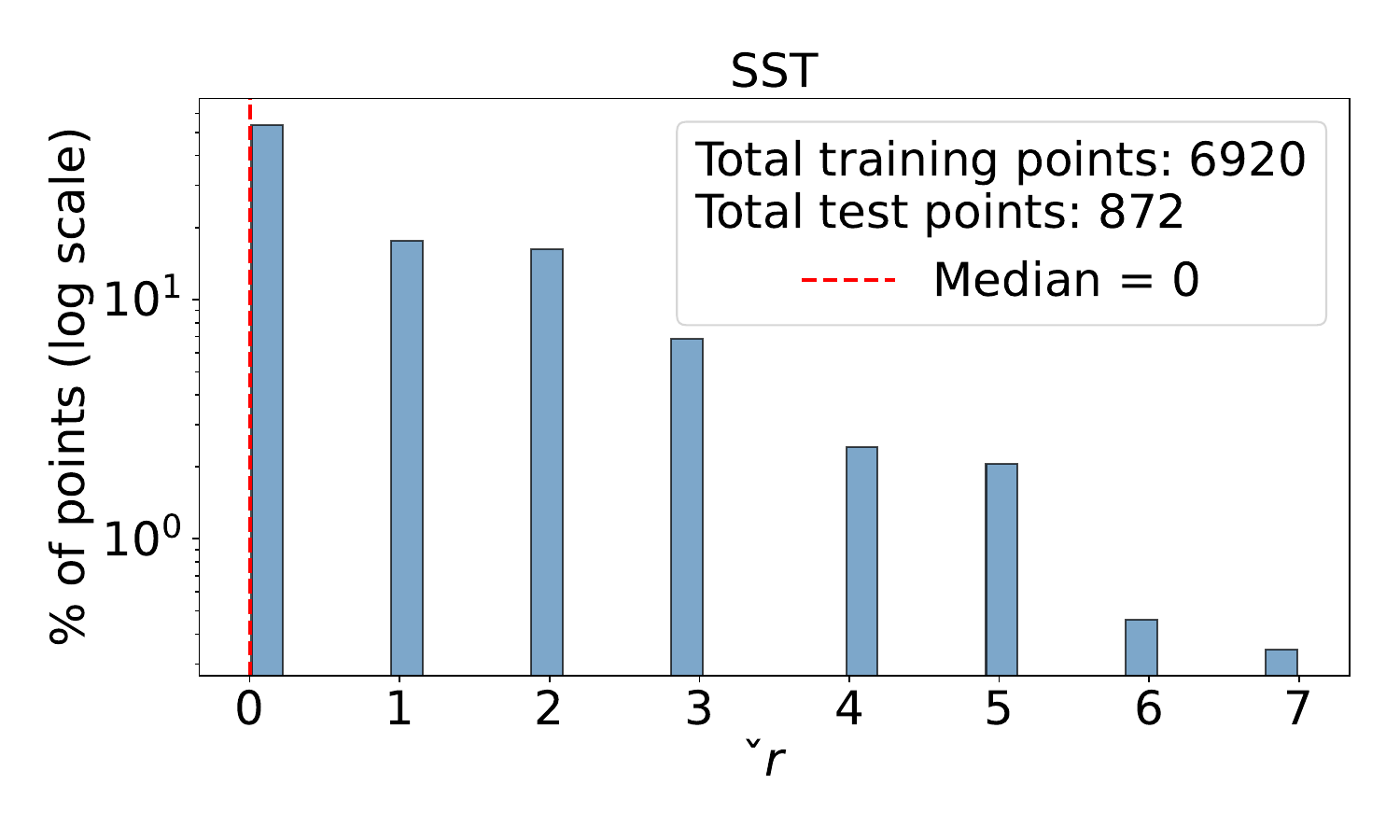}  
    \caption{Histogram of $\lowrob$ for SST (BOW) dataset.}  
    \label{fig:lb_hist_SST}  
\end{figure}  
  
\begin{figure}[!htbp]  
    \centering  
    \includegraphics[width=0.45\textwidth]{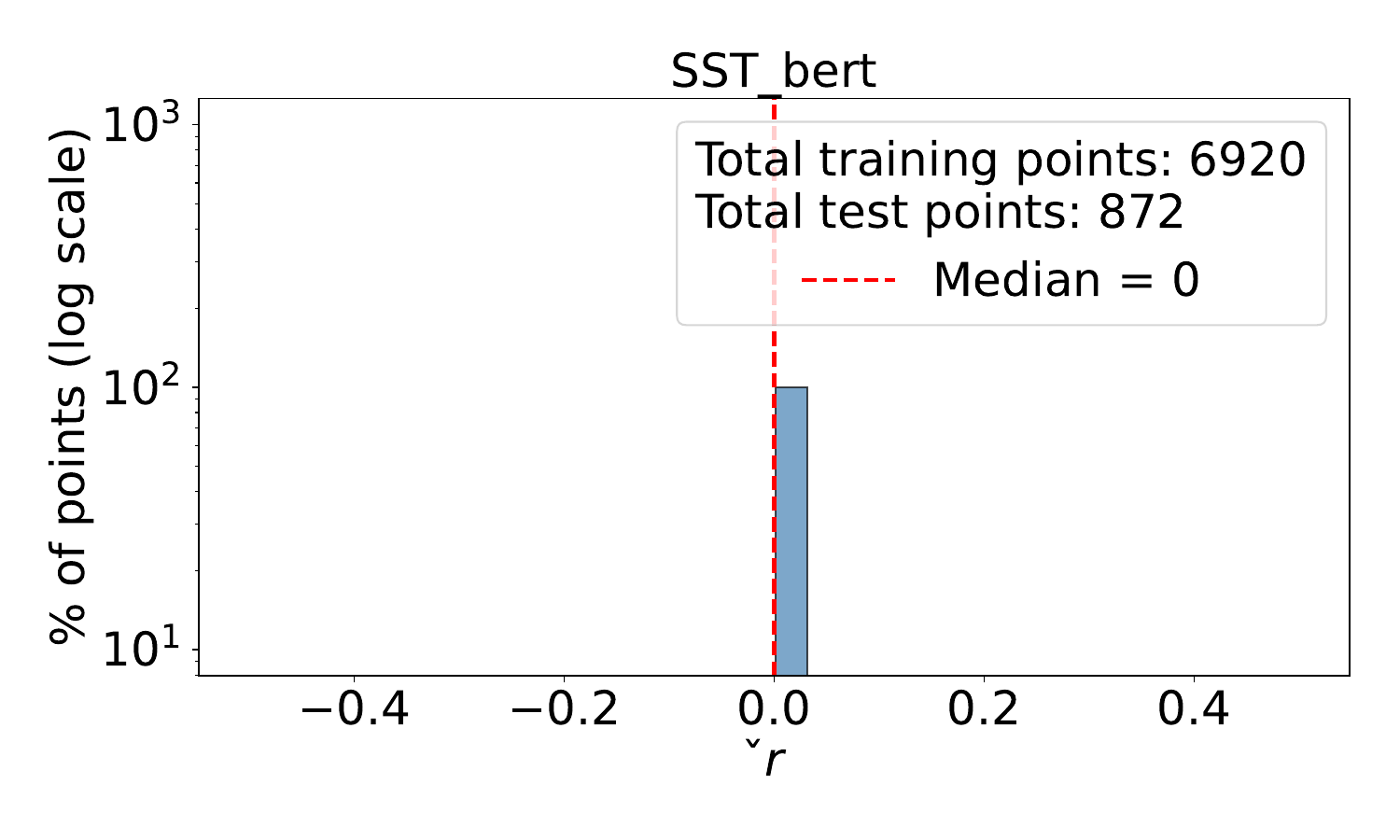}  
    \caption{Histogram of $\lowrob$ for SST (BERT) dataset.}  
    \label{fig:lb_hist_SST_bert}  
\end{figure}  

\begin{figure}[!htbp]  
    \centering  
    \includegraphics[width=0.45\textwidth]{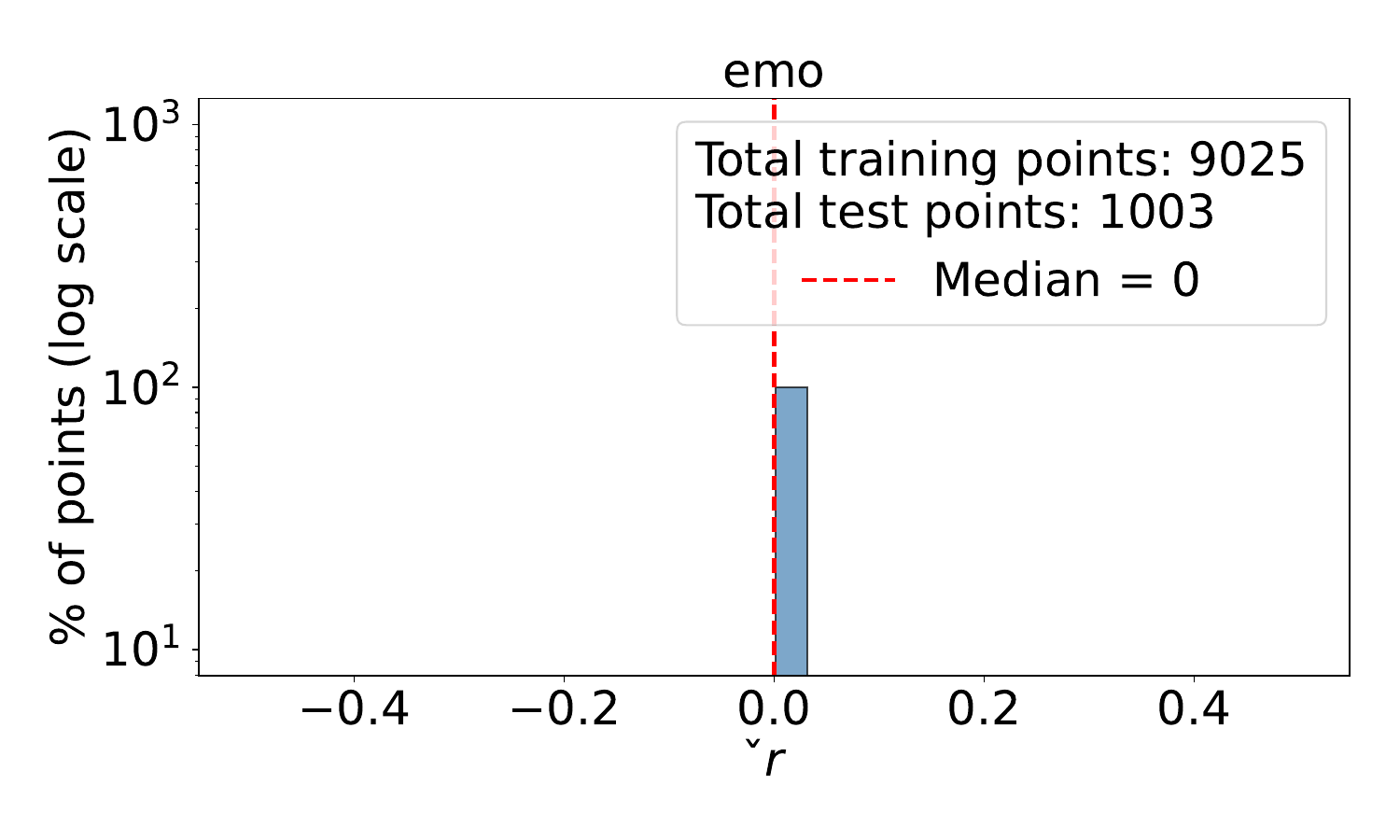}  
    \caption{Histogram of $\lowrob$ for Emo (BOW) dataset.}  
    \label{fig:lb_hist_emo}  
\end{figure}  

\begin{figure}[!htbp]  
    \centering  
    \includegraphics[width=0.45\textwidth]{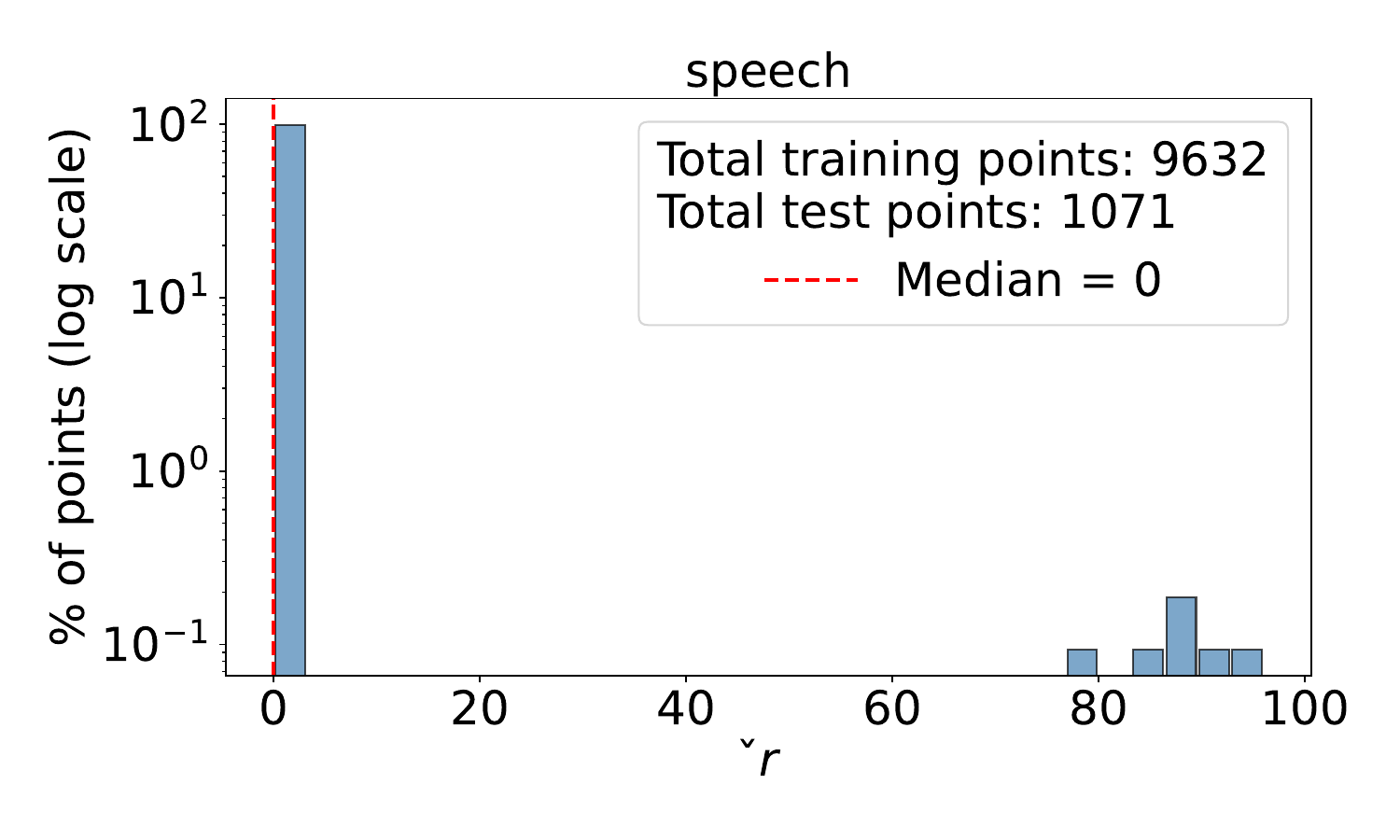}  
    \caption{Histogram of $\lowrob$ for Speech (BOW) dataset.}  
    \label{fig:lb_hist_speech}  
\end{figure}  
  
\begin{figure}[!htbp]  
    \centering  
    \includegraphics[width=0.45\textwidth]{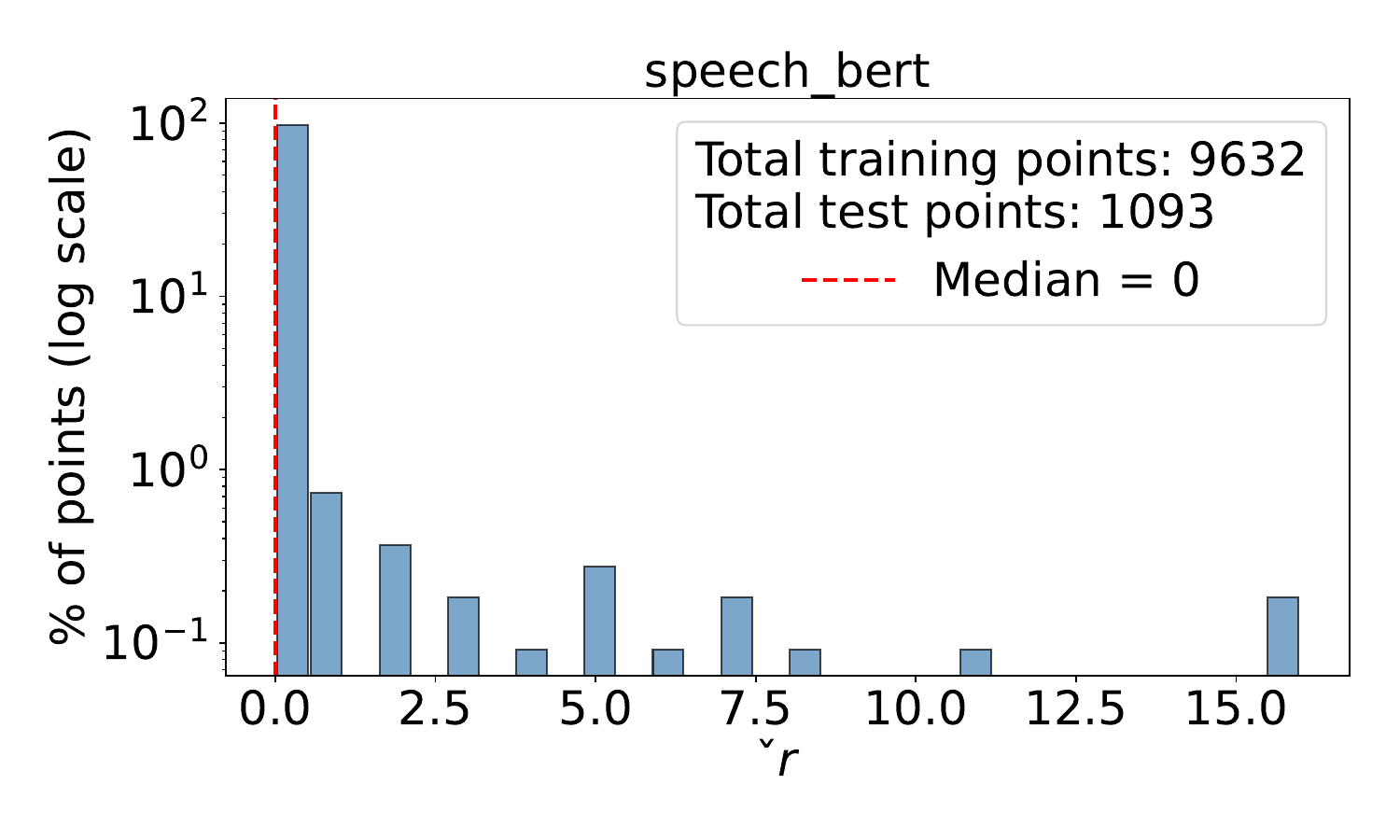}  
    \caption{Histogram of $\lowrob$ for Speech (BERT) dataset.}  
    \label{fig:lb_hist_speech_bert}  
\end{figure}  

\begin{figure}[!htbp]  
    \centering  
    \includegraphics[width=0.45\textwidth]{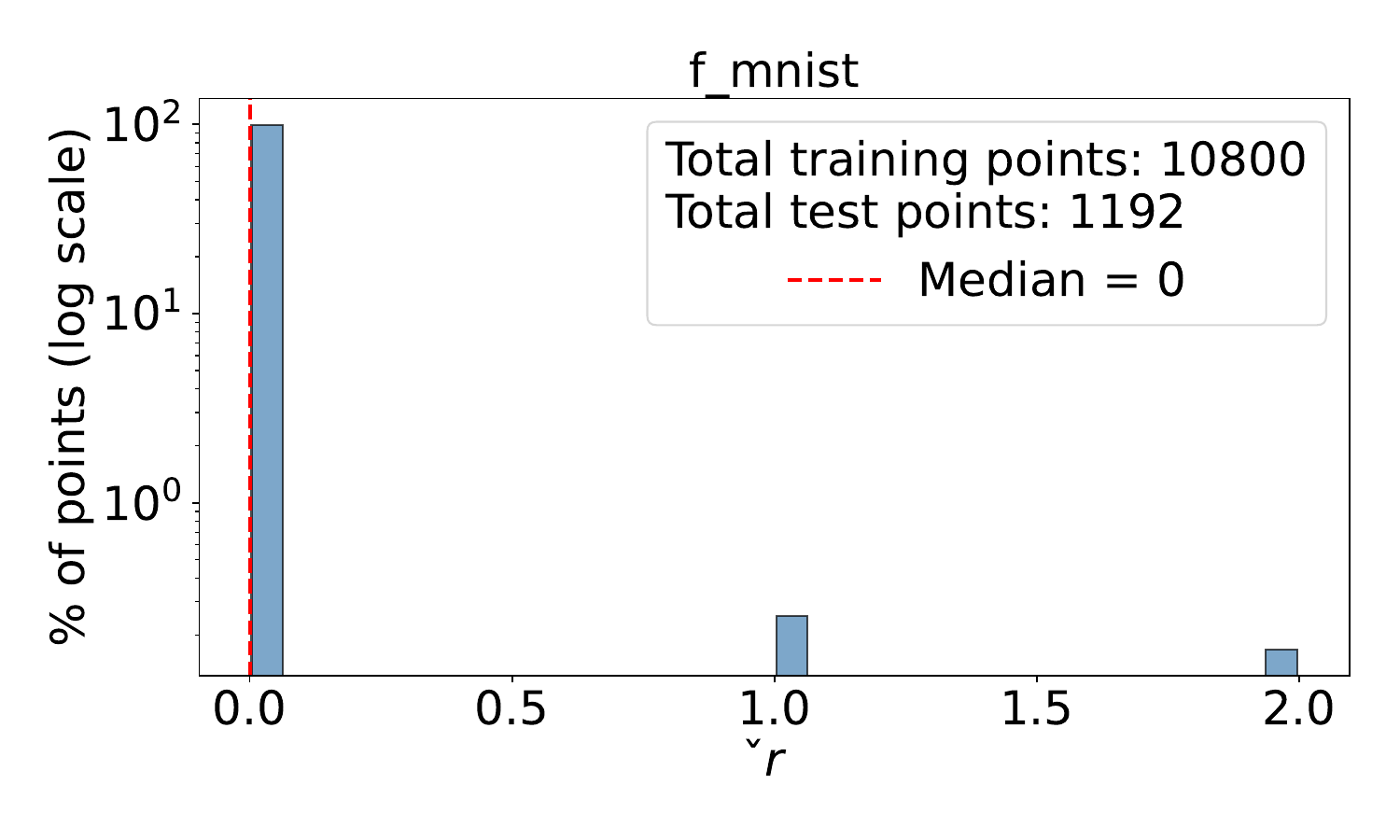}  
    \caption{Histogram of $\lowrob$ for Fashion MNIST dataset.}  
    \label{fig:lb_hist_f_mnist}  
\end{figure}

\begin{figure}[!htbp]  
    \centering  
    \includegraphics[width=0.45\textwidth]{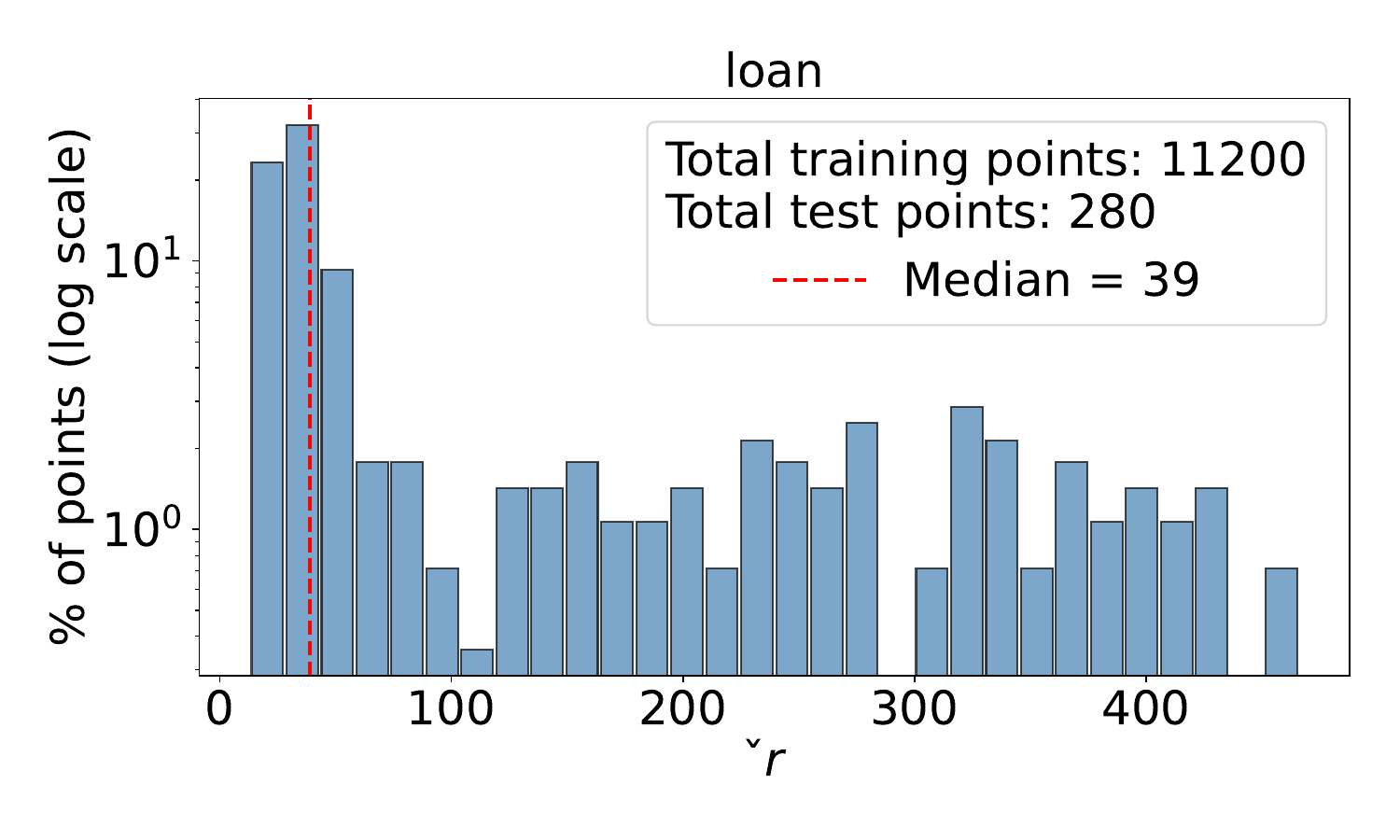}  
    \caption{Histogram of $\lowrob$ for Loan (BOW) dataset.}  
    \label{fig:lb_hist_loan}  
\end{figure}  

\begin{figure}[!htbp]  
    \centering  
    \includegraphics[width=0.45\textwidth]{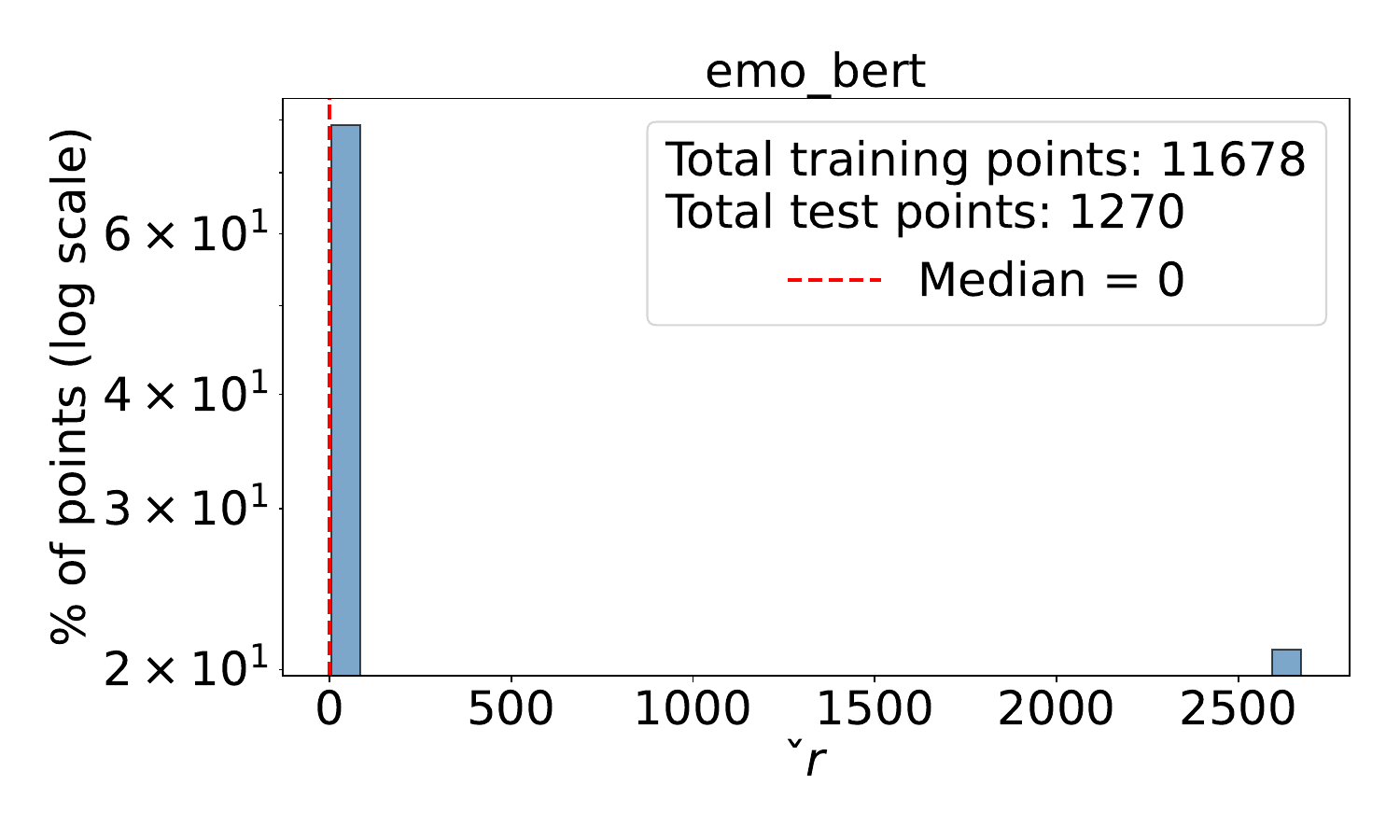}  
    \caption{Histogram of $\lowrob$ for Emo (BERT) dataset.}  
    \label{fig:lb_hist_emo_bert}  
\end{figure}  
  
\begin{figure}[!htbp]  
    \centering  
    \includegraphics[width=0.45\textwidth]{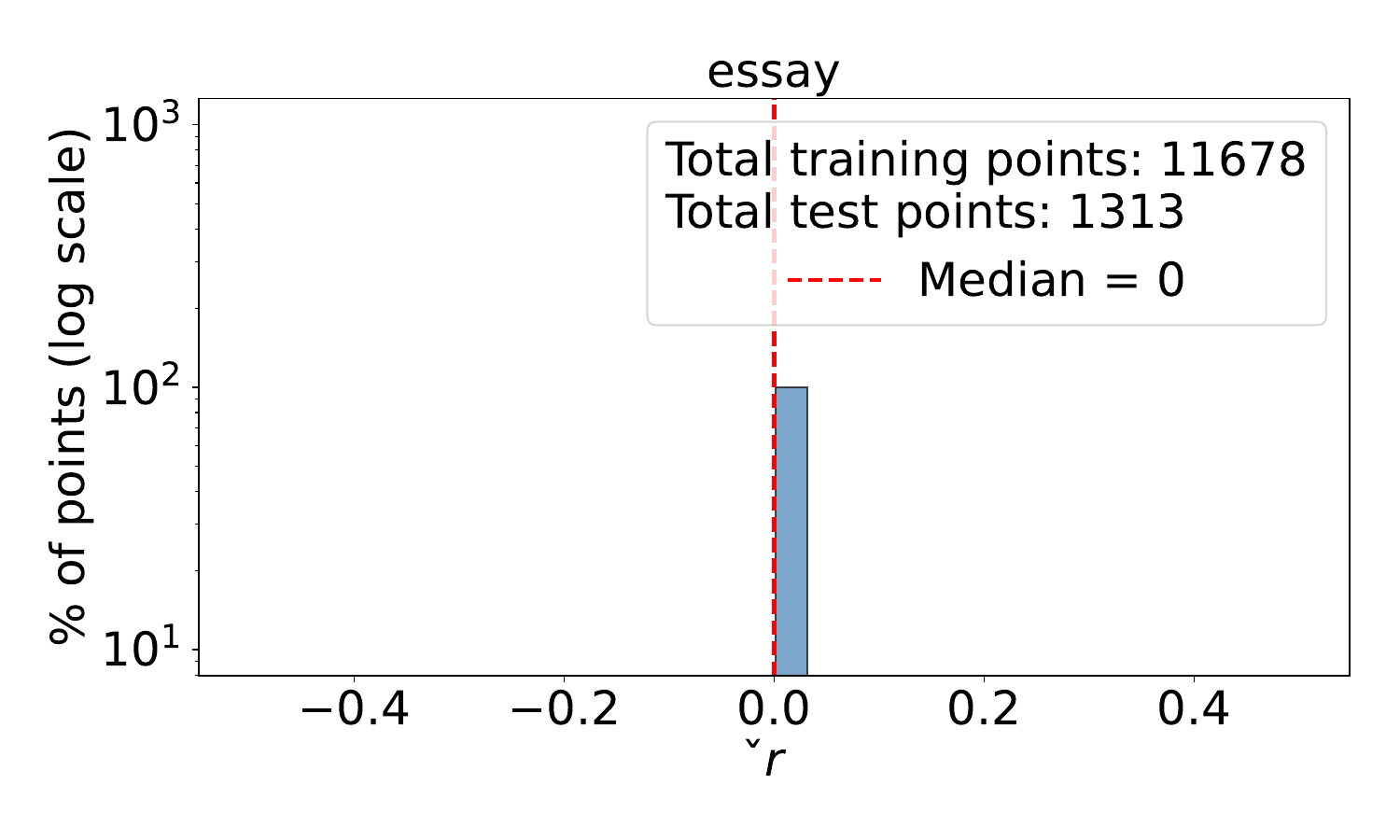}  
    \caption{Histogram of $\lowrob$ for Essays (BOW) dataset.}  
    \label{fig:lb_hist_essay}  
\end{figure}  
  
\begin{figure}[!htbp]  
    \centering  
    \includegraphics[width=0.45\textwidth]{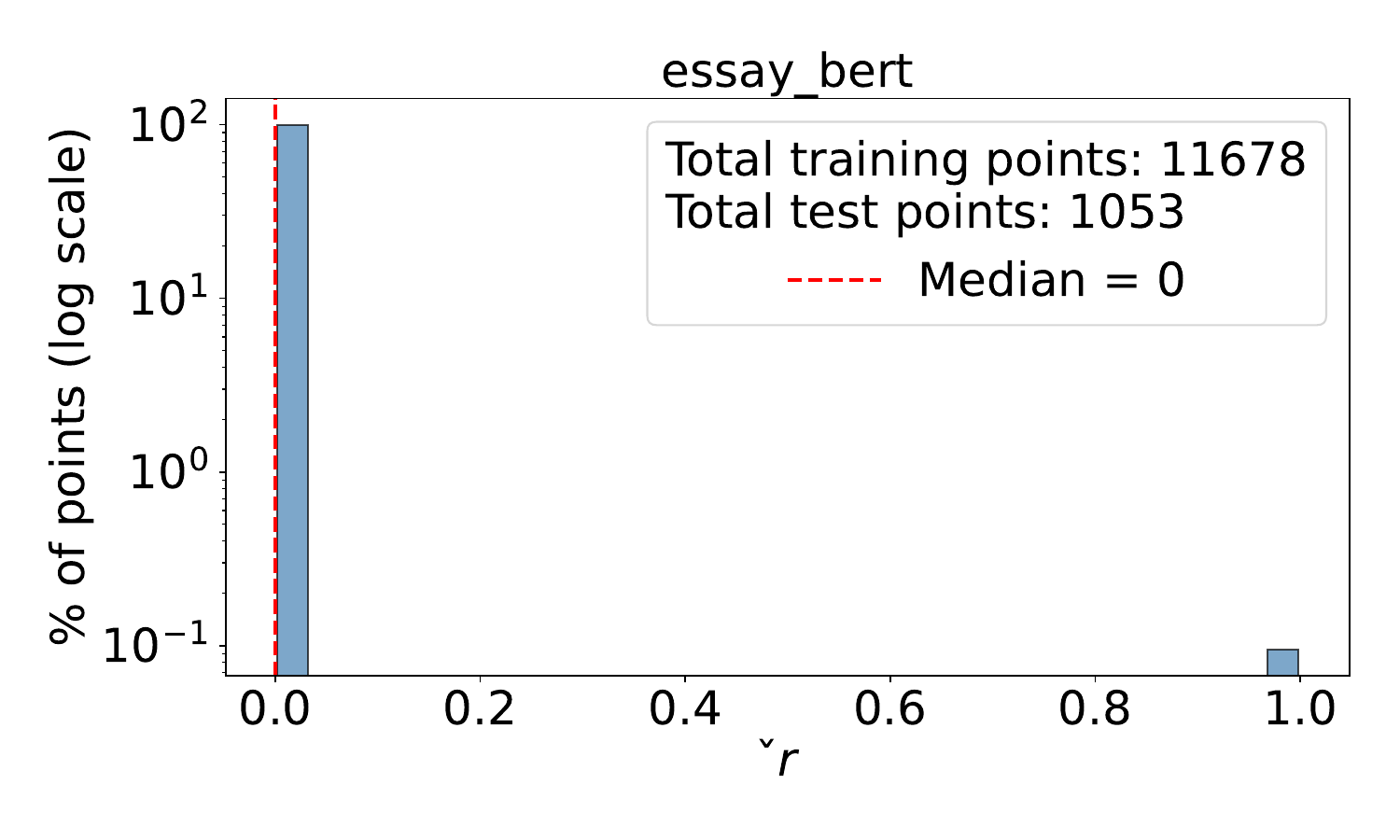}  
    \caption{Histogram of $\lowrob$ for Essays (BERT) dataset.}  
    \label{fig:lb_hist_essay_bert}  
\end{figure}  
  
\begin{figure}[!htbp]  
    \centering  
    \includegraphics[width=0.45\textwidth]{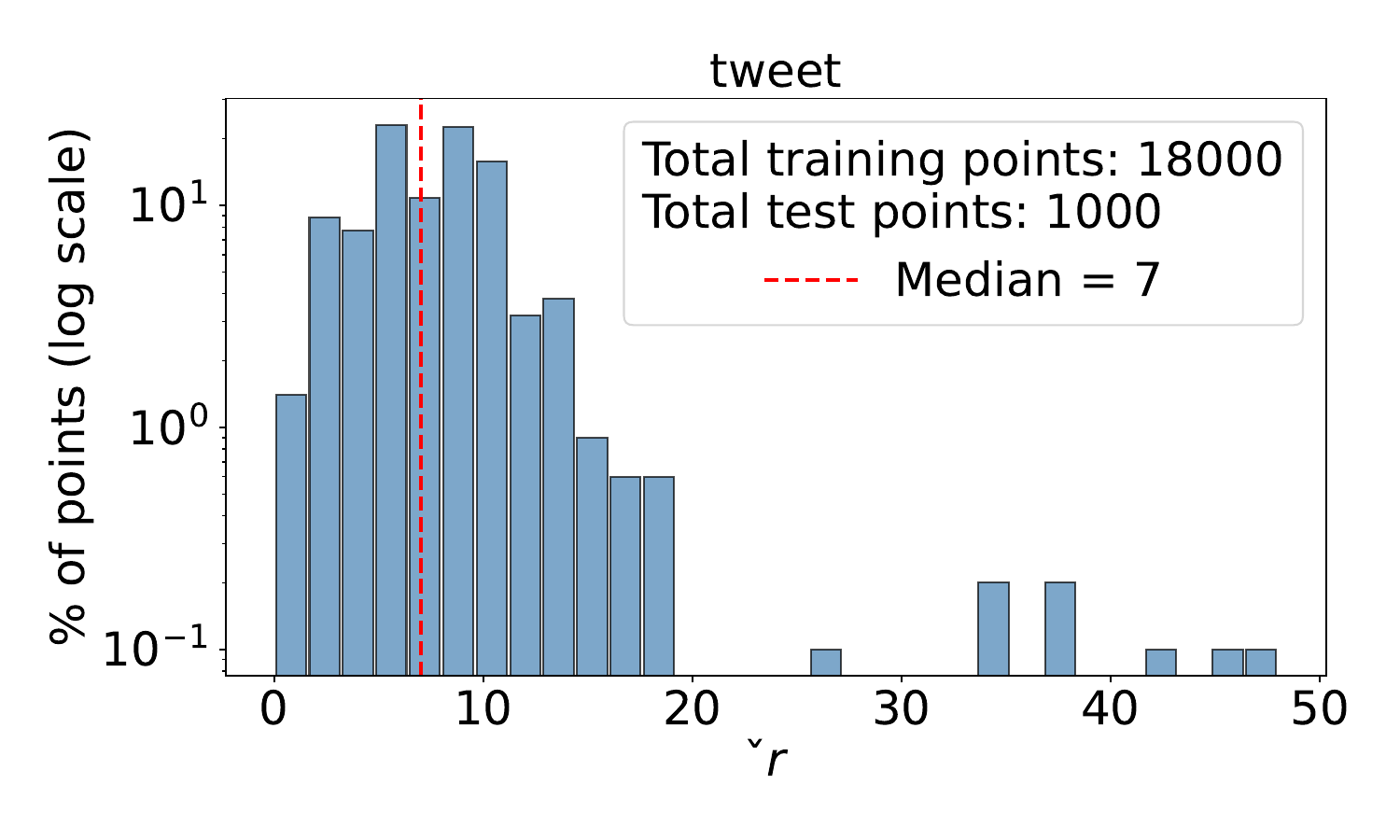}  
    \caption{Histogram of $\lowrob$ for Tweet (BOW) dataset.}  
    \label{fig:lb_hist_tweet}  
\end{figure}  
  
\begin{figure}[!htbp]  
    \centering  
    \includegraphics[width=0.45\textwidth]{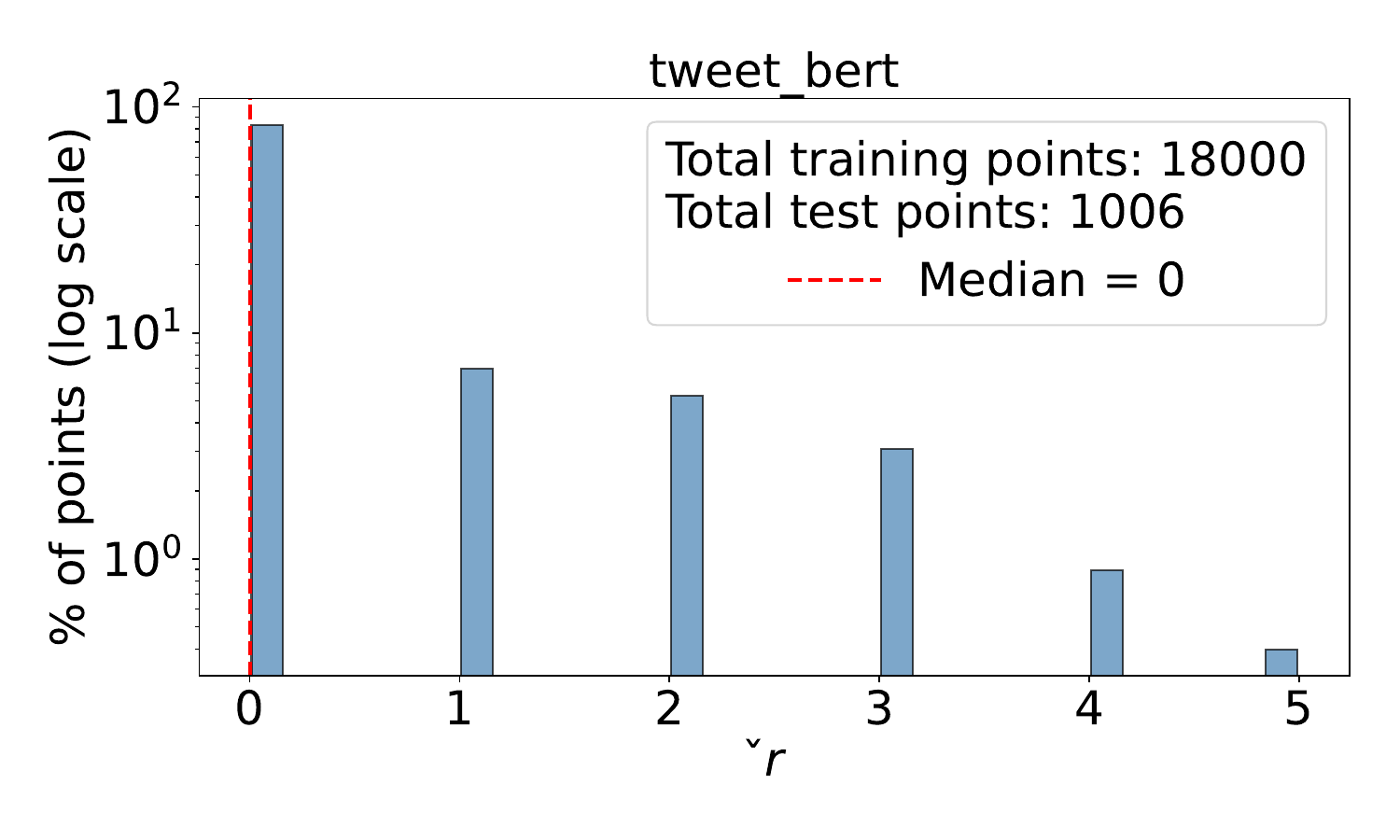}  
    \caption{Histogram of $\lowrob$ for Tweet (BERT) dataset.}  
    \label{fig:lb_hist_tweet_bert}  
\end{figure}  

\begin{figure}[!htbp]  
    \centering  
    \includegraphics[width=0.45\textwidth]{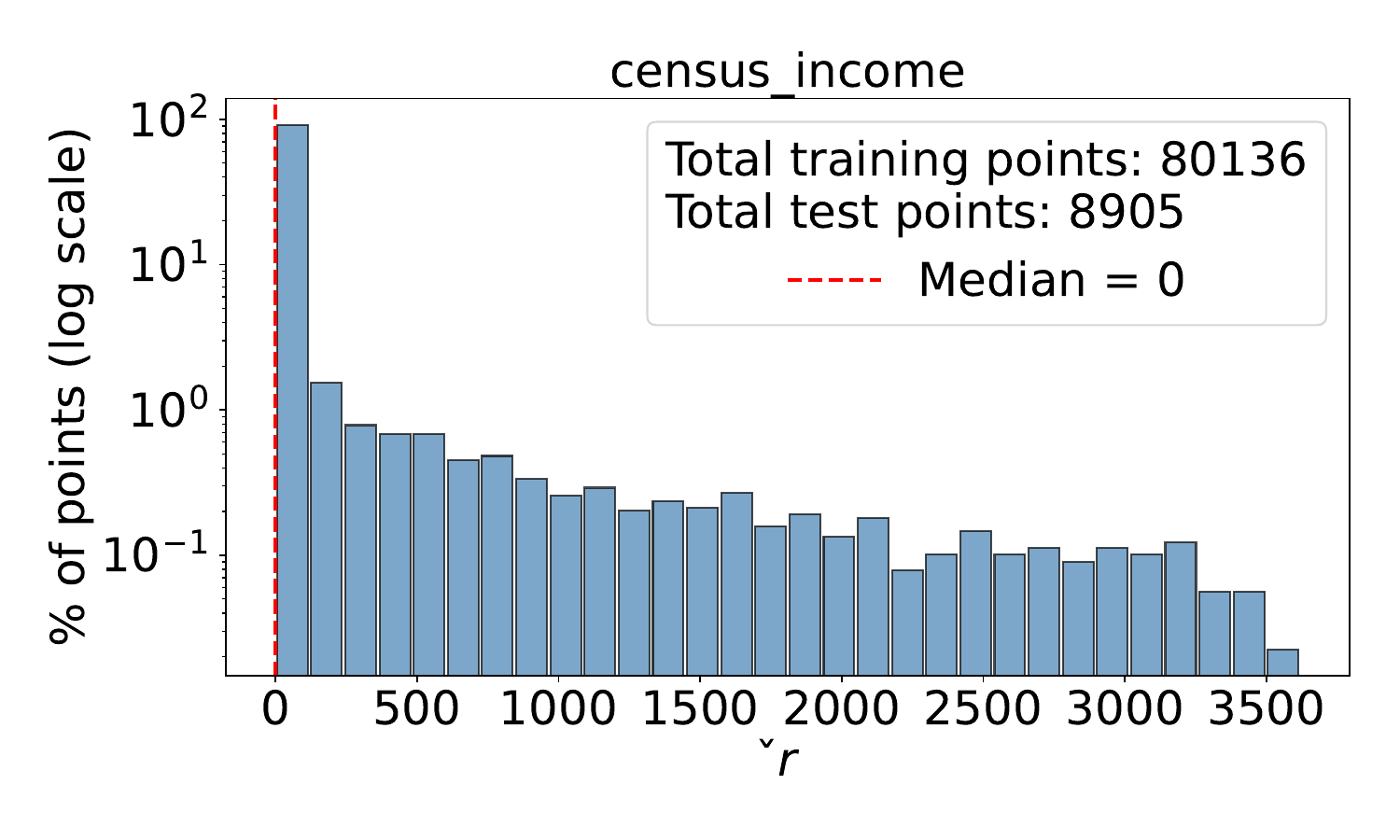}  
    \caption{Histogram of $\lowrob$ for Census Income dataset.}  
    \label{fig:lb_hist_census_income}  
\end{figure}  
 
\clearpage

% \vspace{-100em}
\subsection{Accuracy vs $\rho$ for all datasets}
This section presents comparison of average accuracy vs $\rho$ (average likelihood of getting desired classification for test points) for all datasets.

\begin{figure*}[!htbp]  
    \centering  
    \includegraphics[width=\textwidth]{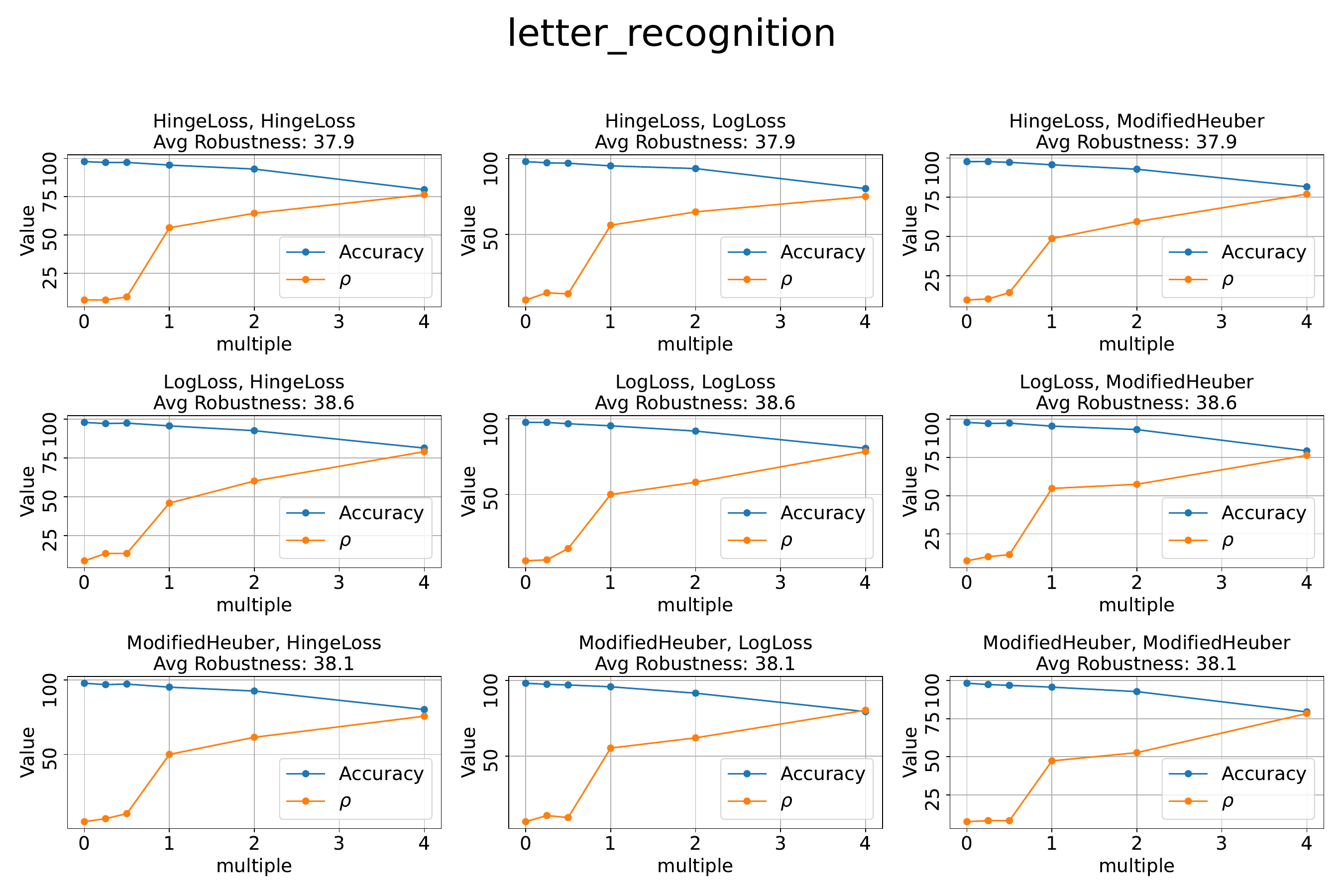}  
    \caption{Comparison of average accuracy vs $\rho$  for Letter Recognition dataset with $\{0, \frac{\uprob}{4}, \frac{\uprob}{2}, \uprob, 2\uprob, 4\uprob\}$ and loss functions.}  
    \label{fig:grid_letter}  
\end{figure*}  

\begin{figure*}[!htbp]  
    \centering  
    \includegraphics[width=\textwidth]{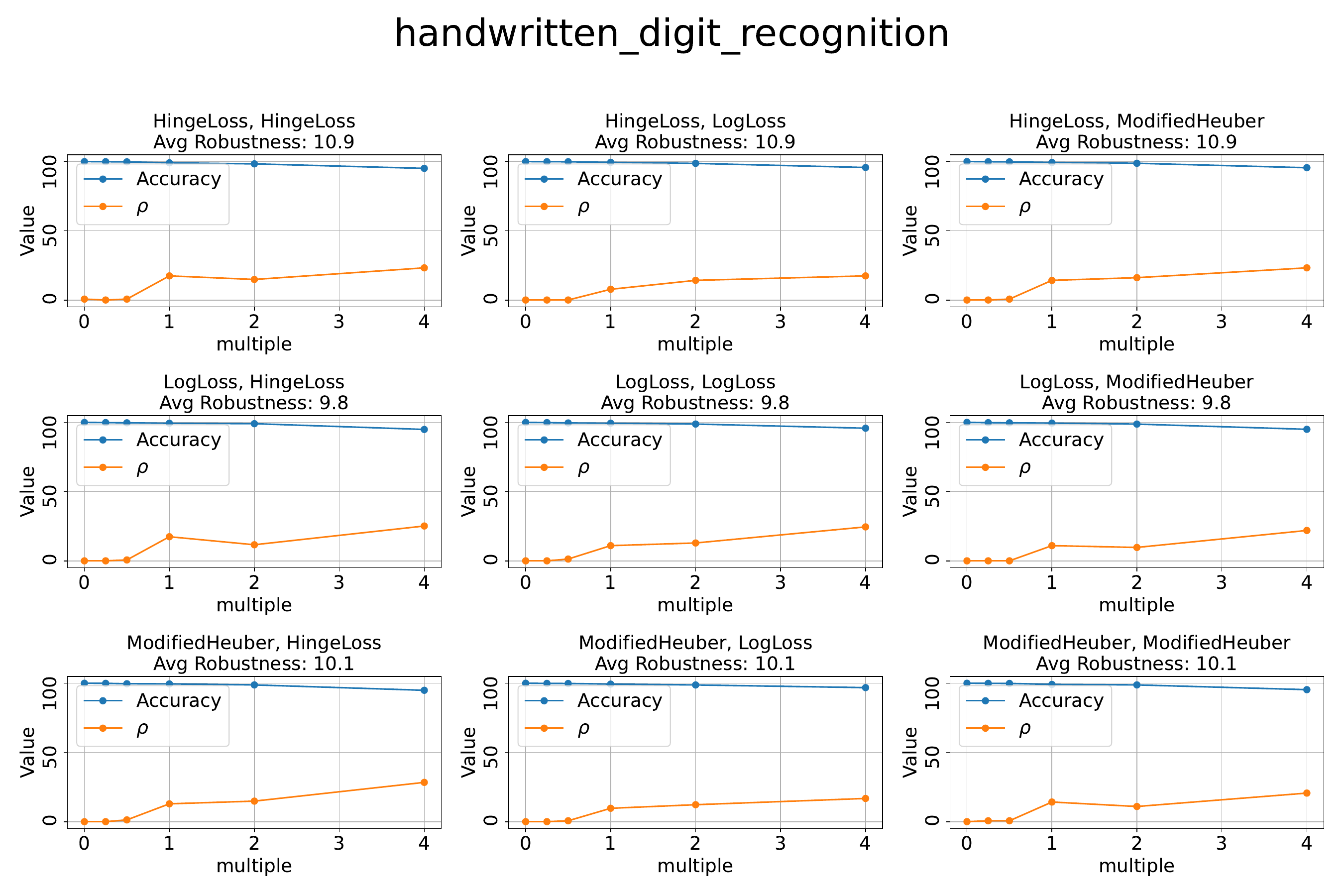}  
    \caption{Comparison of average accuracy vs $\rho$  for Digits Recognition dataset with $\{0, \frac{\uprob}{4}, \frac{\uprob}{2}, \uprob, 2\uprob, 4\uprob\}$ and loss functions.}  
    \label{fig:grid_handwritten}  
\end{figure*}

\begin{figure*}[!htbp]  
    \centering  
    \includegraphics[width=\textwidth]{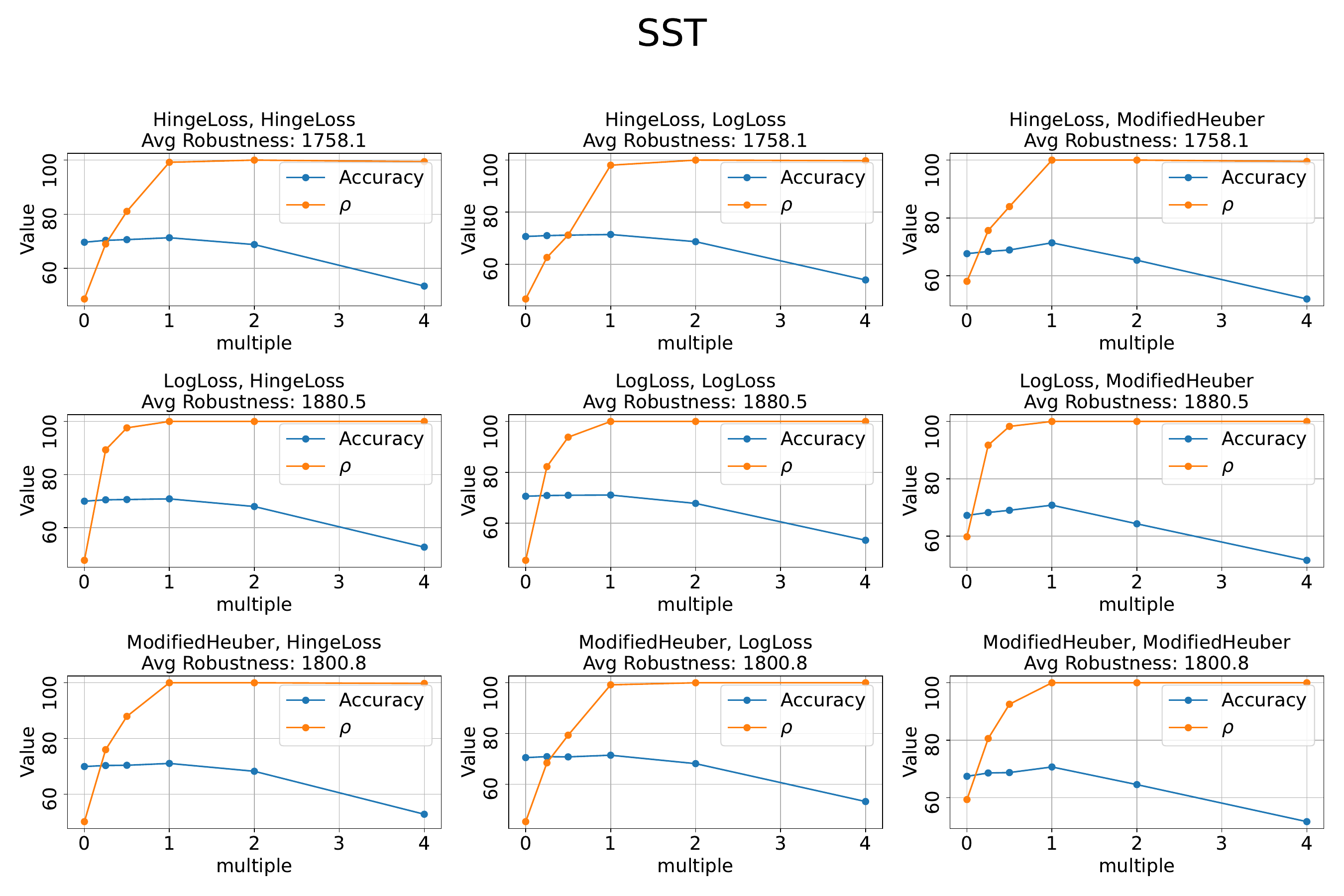}  
    \caption{Comparison of average accuracy vs $\rho$  for SST (BOW) dataset with $\{0, \frac{\uprob}{4}, \frac{\uprob}{2}, \uprob, 2\uprob, 4\uprob\}$ and loss functions.}  
    \label{fig:grid_SST}  
\end{figure*}

\begin{figure*}[!htbp]  
    \centering  
    \includegraphics[width=\textwidth]{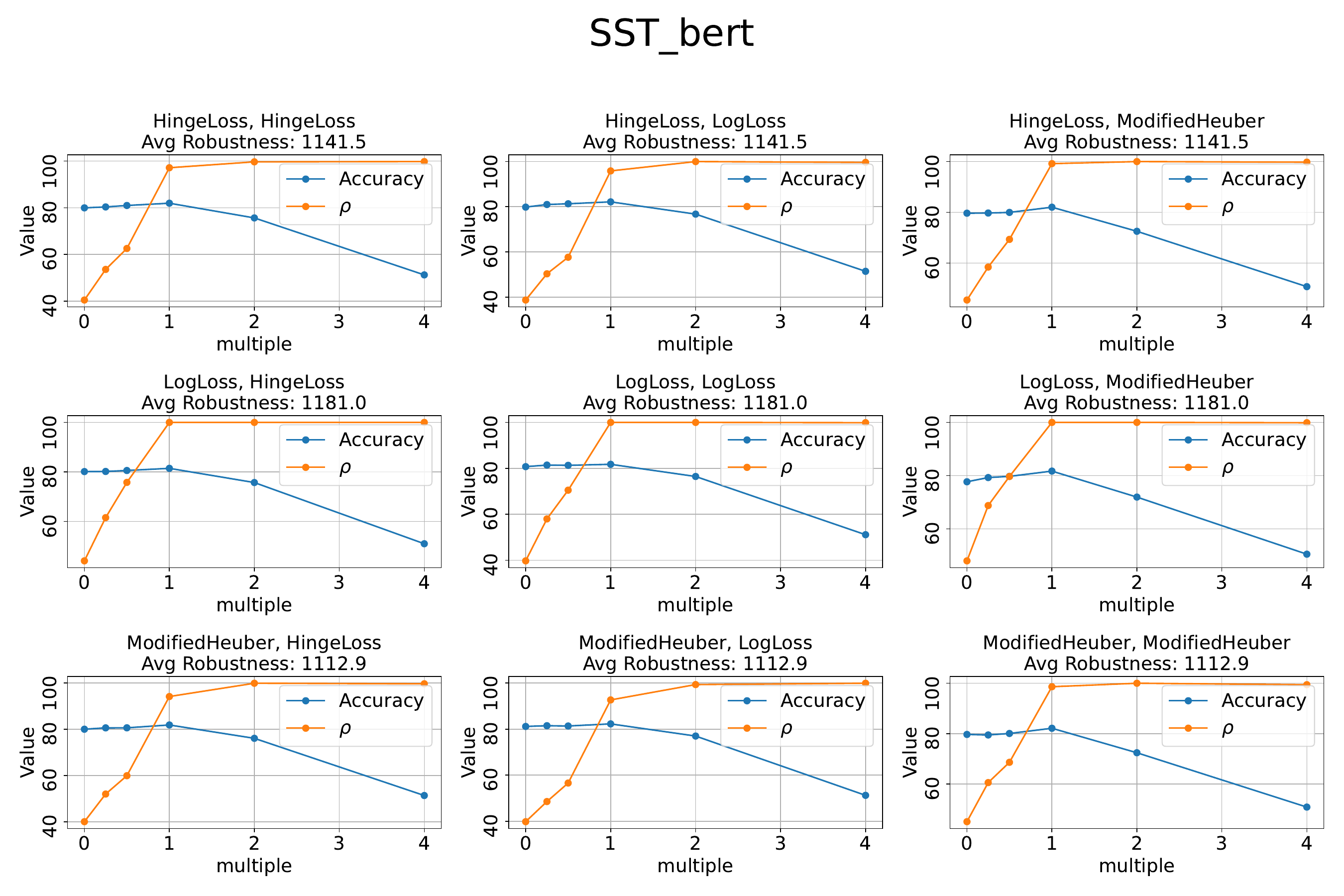}  
    \caption{Comparison of average accuracy vs $\rho$  for SST (BERT) dataset with $\{0, \frac{\uprob}{4}, \frac{\uprob}{2}, \uprob, 2\uprob, 4\uprob\}$ and loss functions.}  
    \label{fig:grid_SST_bert}  
\end{figure*}

\begin{figure*}[!htbp]  
    \centering  
    \includegraphics[width=\textwidth]{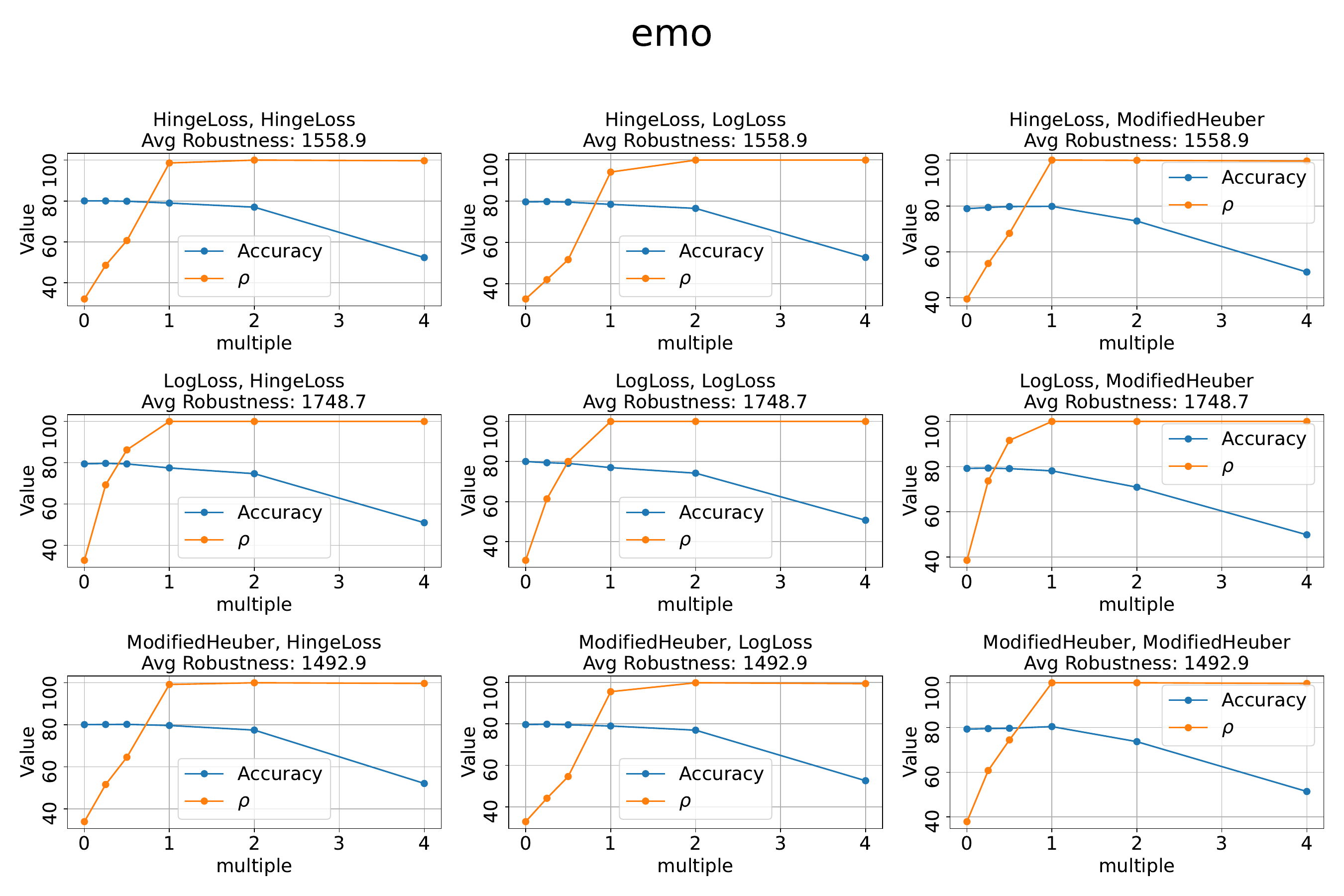}  
    \caption{Comparison of average accuracy vs $\rho$  for Emo (BOW) dataset with $\{0, \frac{\uprob}{4}, \frac{\uprob}{2}, \uprob, 2\uprob, 4\uprob\}$ and loss functions.}  
    \label{fig:grid_emo}  
\end{figure*}

\begin{figure*}[!htbp]  
    \centering  
    \includegraphics[width=\textwidth]{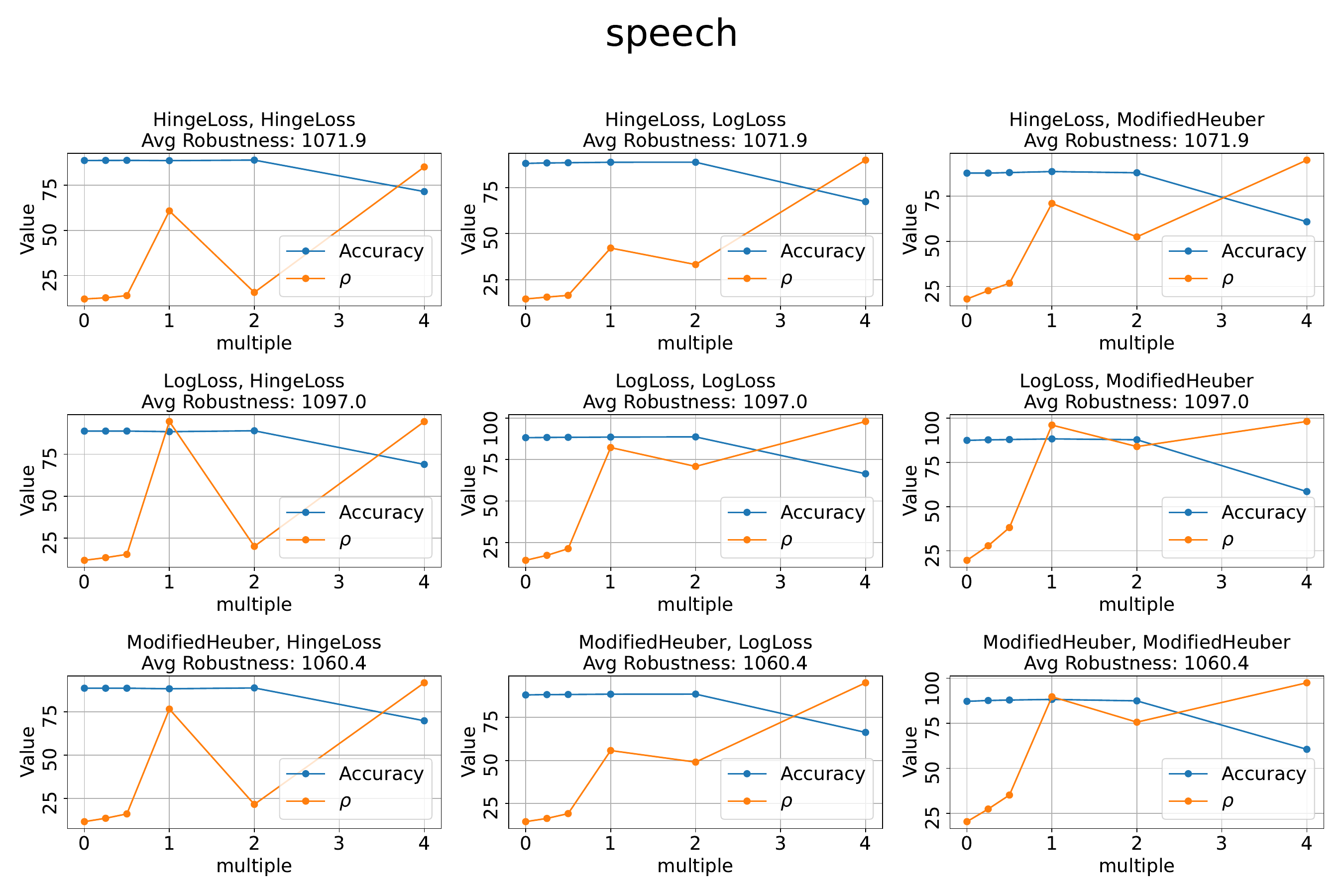}  
    \caption{Comparison of average accuracy vs $\rho$  for Speech (BOW) dataset with $\{0, \frac{\uprob}{4}, \frac{\uprob}{2}, \uprob, 2\uprob, 4\uprob\}$ and loss functions.}  
    \label{fig:grid_speech}  
\end{figure*}  

\begin{figure*}[!htbp]  
    \centering  
    \includegraphics[width=\textwidth]{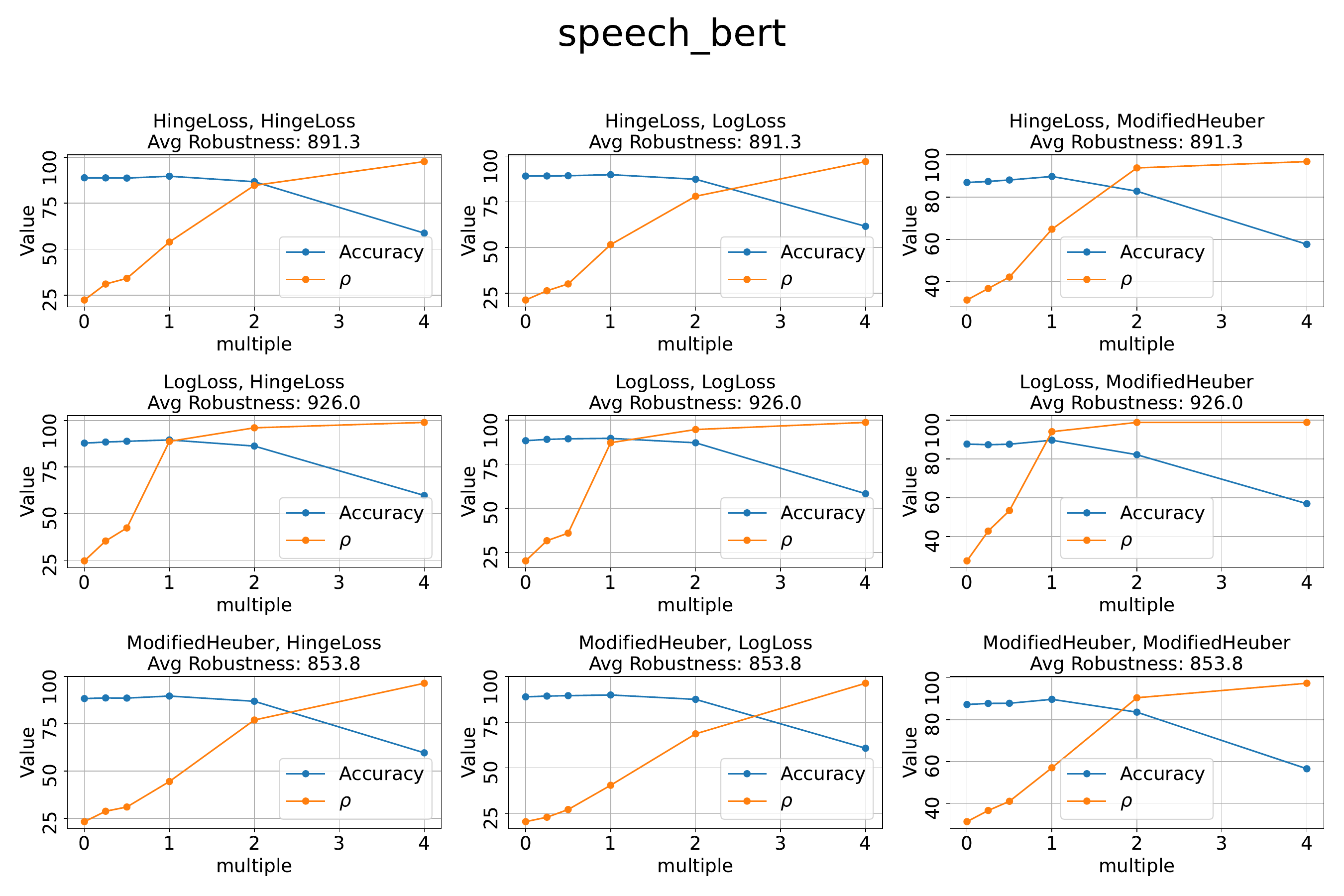}  
    \caption{Comparison of average accuracy vs $\rho$  for Speech (BERT) dataset with $\{0, \frac{\uprob}{4}, \frac{\uprob}{2}, \uprob, 2\uprob, 4\uprob\}$ and loss functions.}  
    \label{fig:grid_speech_bert}  
\end{figure*}  
  
\begin{figure*}[!htbp]  
    \centering  
    \includegraphics[width=\textwidth]{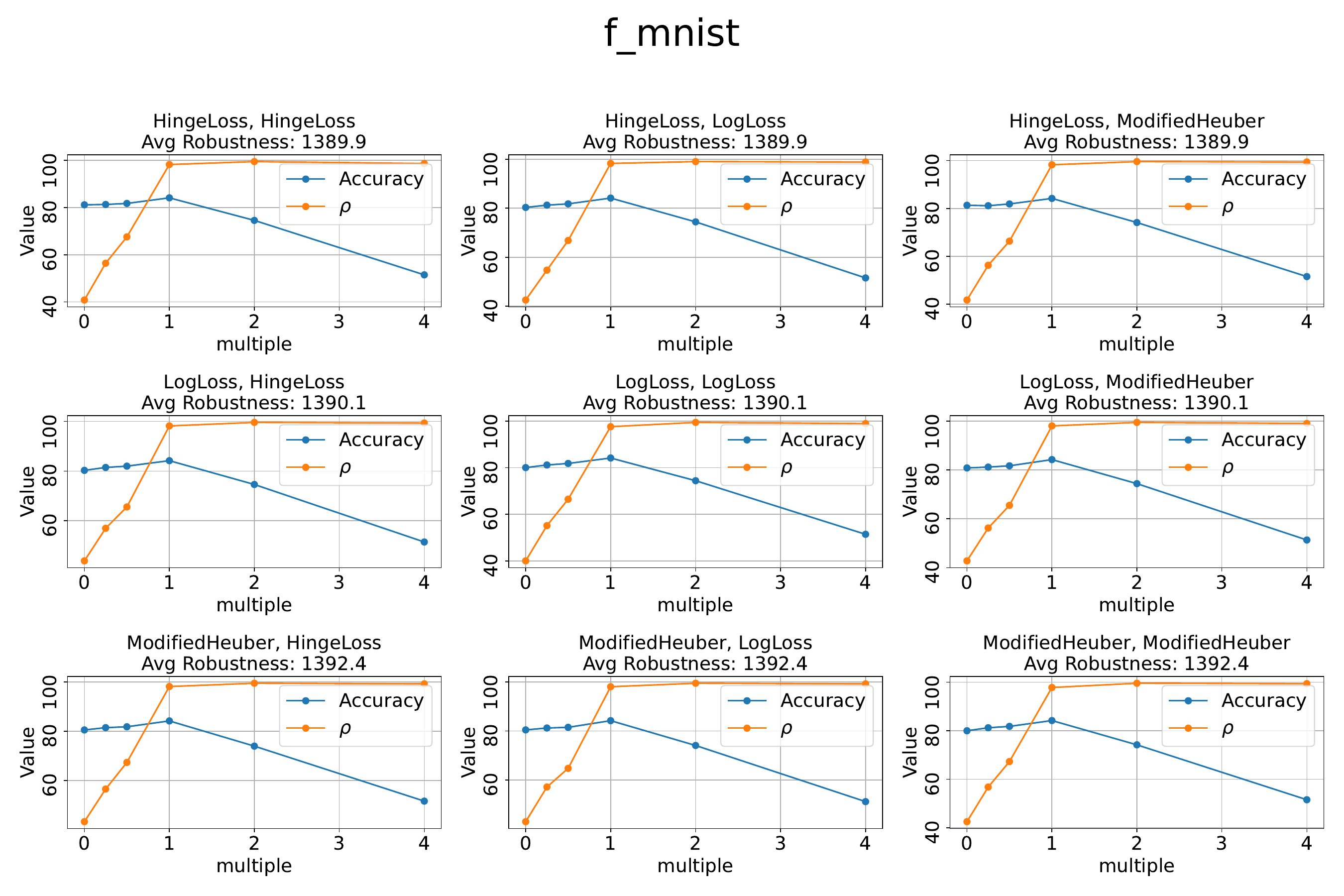}  
    \caption{Comparison of average accuracy vs $\rho$  for Fashion MNIST dataset with $\{0, \frac{\uprob}{4}, \frac{\uprob}{2}, \uprob, 2\uprob, 4\uprob\}$ and loss functions.}  
    \label{fig:grid_f_mnist}  
\end{figure*}  

\begin{figure*}[!htbp]  
    \centering  
    \includegraphics[width=\textwidth]{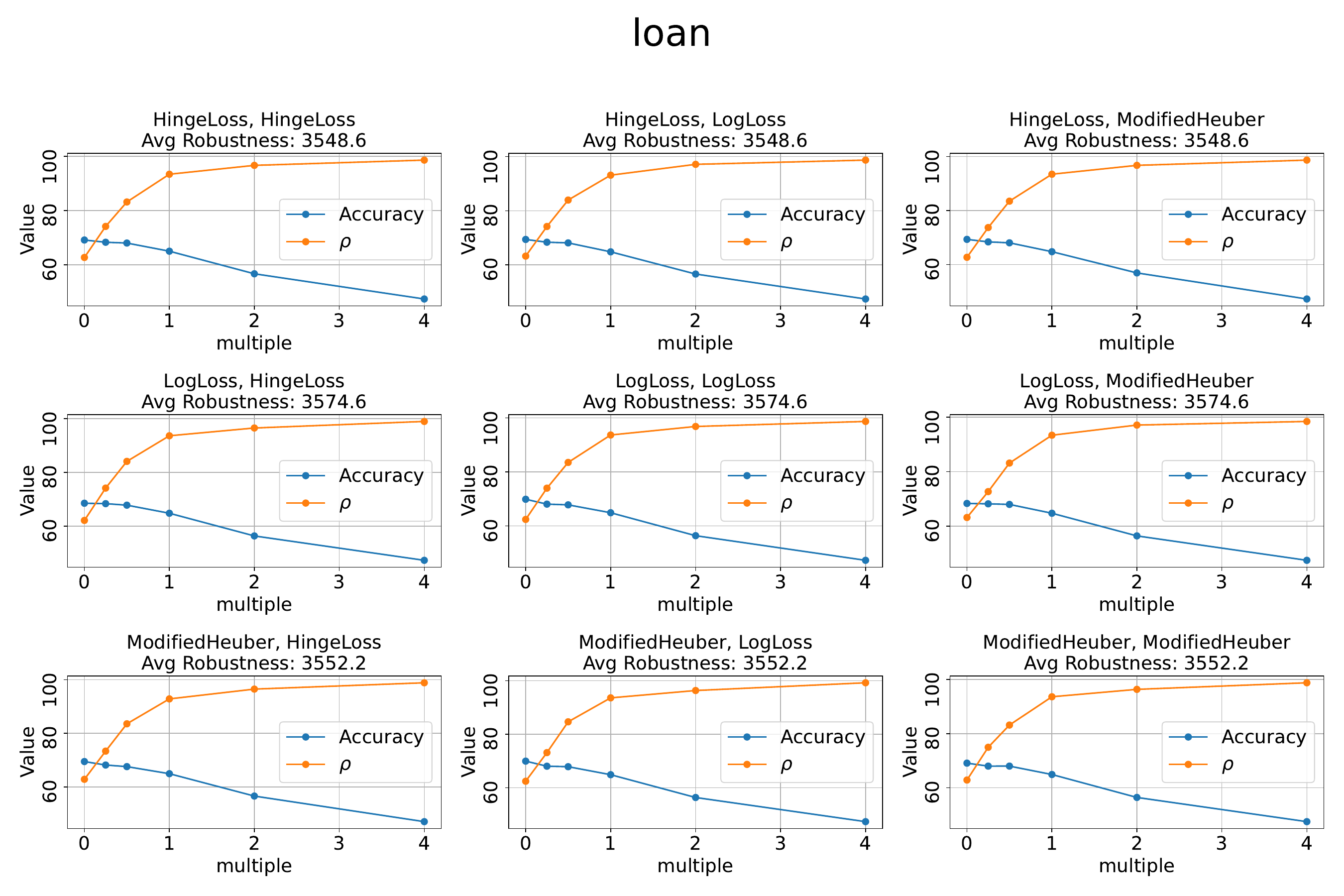}  
    \caption{Comparison of average accuracy vs $\rho$  for Loan (BOW) dataset with $\{0, \frac{\uprob}{4}, \frac{\uprob}{2}, \uprob, 2\uprob, 4\uprob\}$ and loss functions.}  
    \label{fig:grid_loan}  
\end{figure*}  

\begin{figure*}[!htbp]  
    \centering  
    \includegraphics[width=\textwidth]{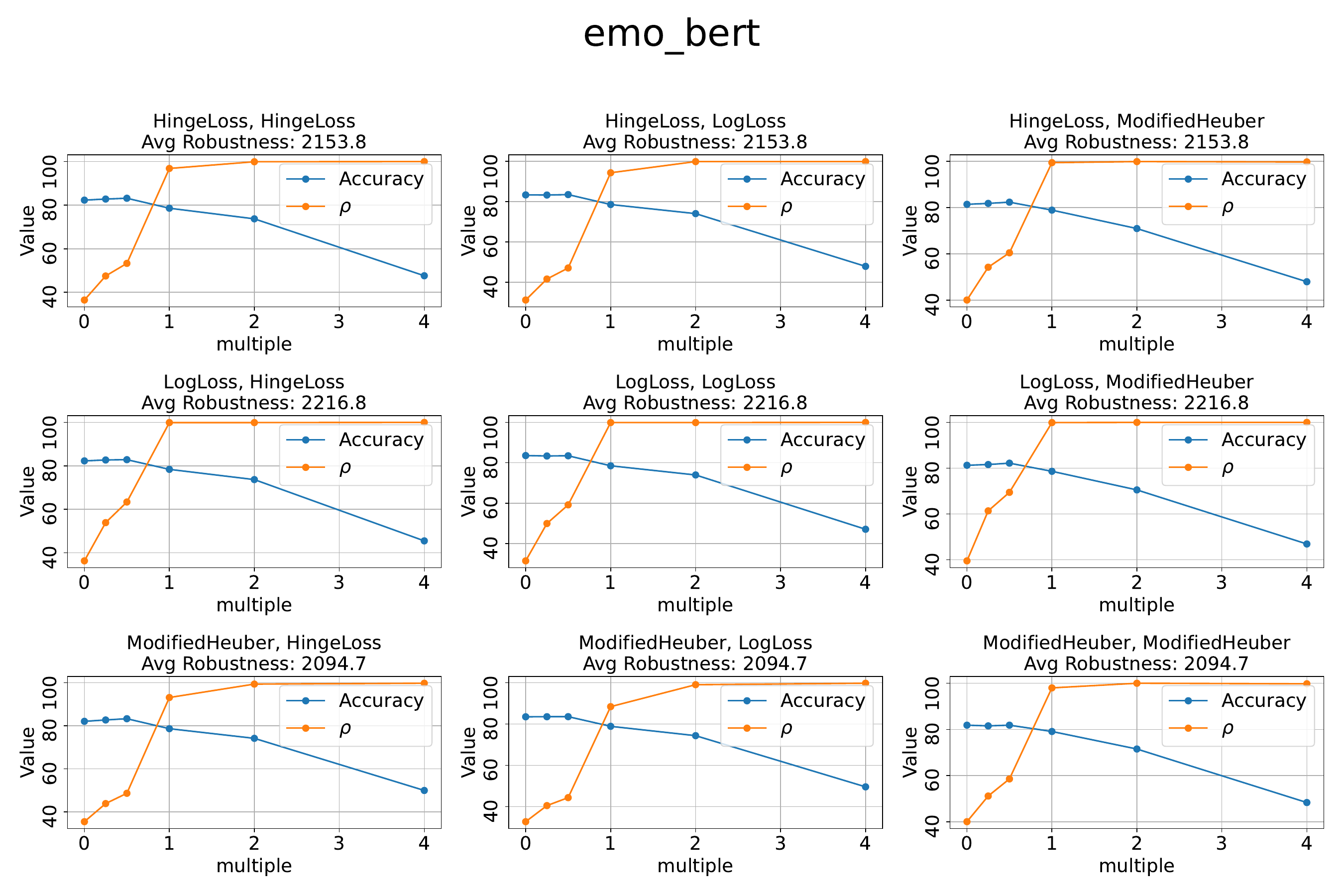}  
    \caption{Comparison of average accuracy vs $\rho$  for Emo (BERT) dataset with $\{0, \frac{\uprob}{4}, \frac{\uprob}{2}, \uprob, 2\uprob, 4\uprob\}$ and loss functions.}  
    \label{fig:grid_emo_bert}  
\end{figure*}  

\begin{figure*}[!htbp]  
    \centering  
    \includegraphics[width=\textwidth]{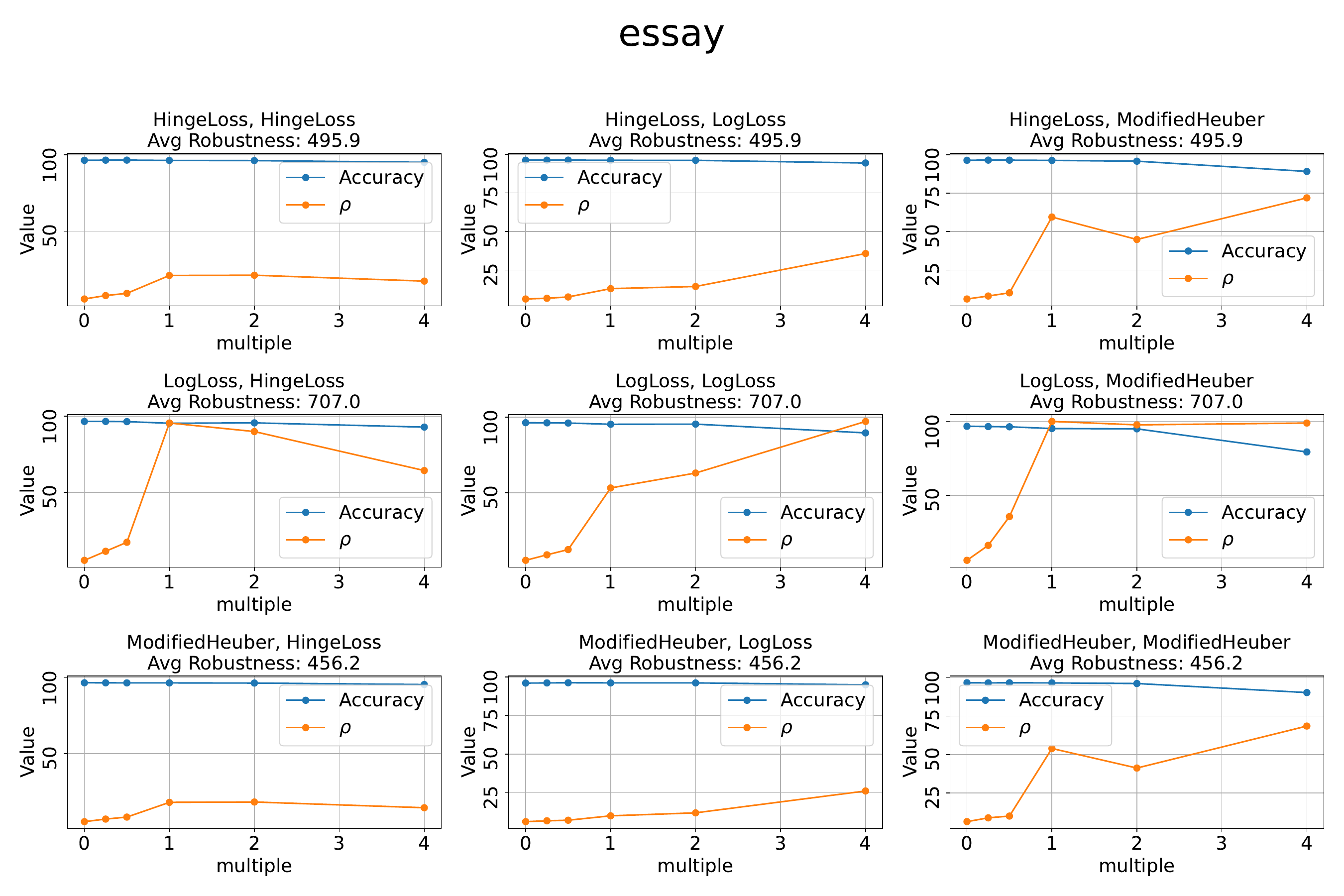}  
    \caption{Comparison of average accuracy vs $\rho$  for Essays (BOW) dataset with $\{0, \frac{\uprob}{4}, \frac{\uprob}{2}, \uprob, 2\uprob, 4\uprob\}$ and loss functions.}  
    \label{fig:grid_essay}  
\end{figure*}  

  \begin{figure*}[!htbp]  
    \centering  
    \includegraphics[width=\textwidth]{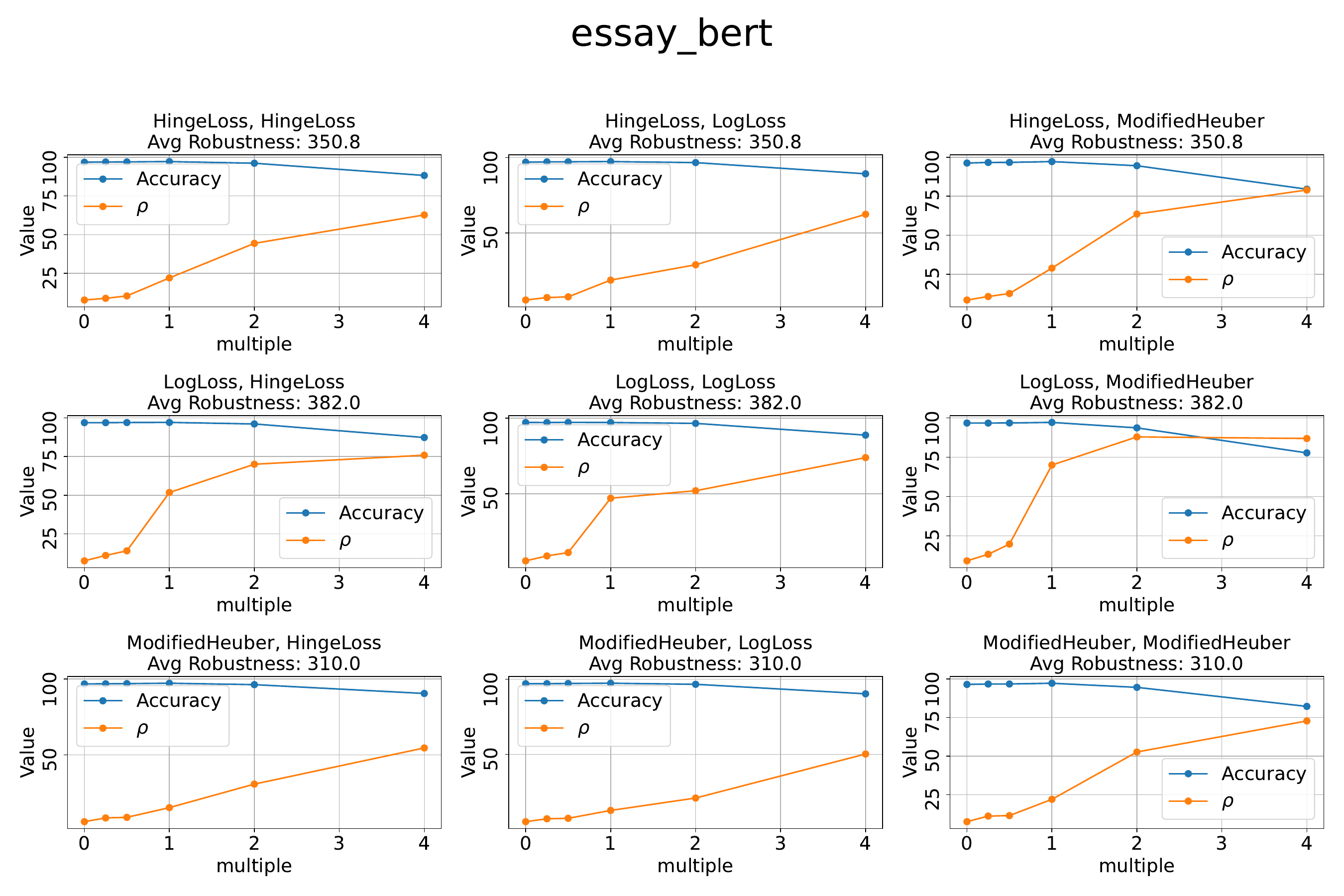}  
    \caption{Comparison of average accuracy vs $\rho$  for Essays (BERT) dataset with $\{0, \frac{\uprob}{4}, \frac{\uprob}{2}, \uprob, 2\uprob, 4\uprob\}$ and loss functions.}  
    \label{fig:grid_essay_bert}  
\end{figure*}

\begin{figure*}[!htbp]  
    \centering  
    \includegraphics[width=\textwidth]{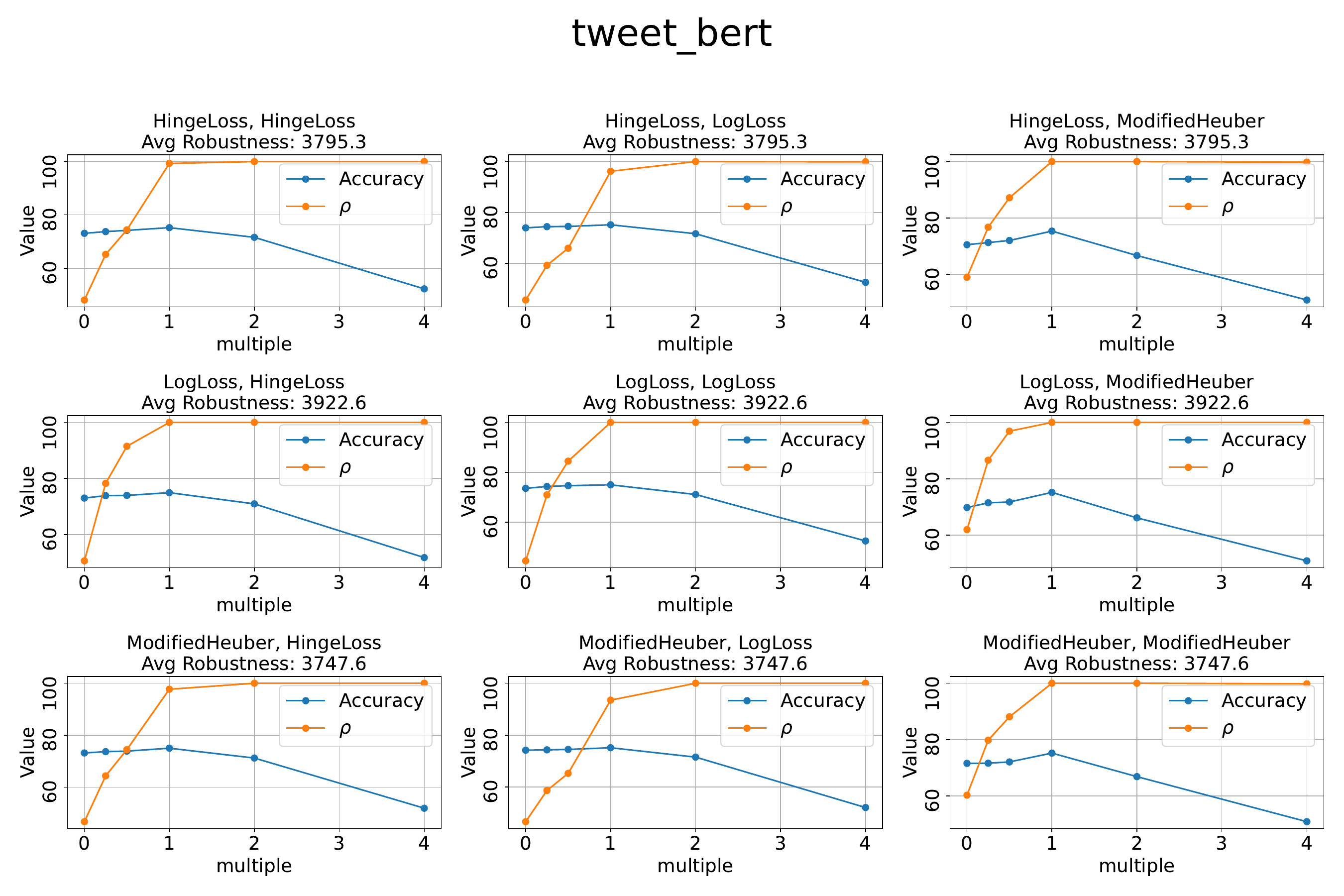}  
    \caption{Comparison of average accuracy vs $\rho$  for Tweet (BERT) dataset with $\{0, \frac{\uprob}{4}, \frac{\uprob}{2}, \uprob, 2\uprob, 4\uprob\}$ and loss functions.}  
    \label{fig:grid_tweet_bert}  
\end{figure*}  
  
\begin{figure*}[!htbp]  
    \centering  
    \includegraphics[width=\textwidth]{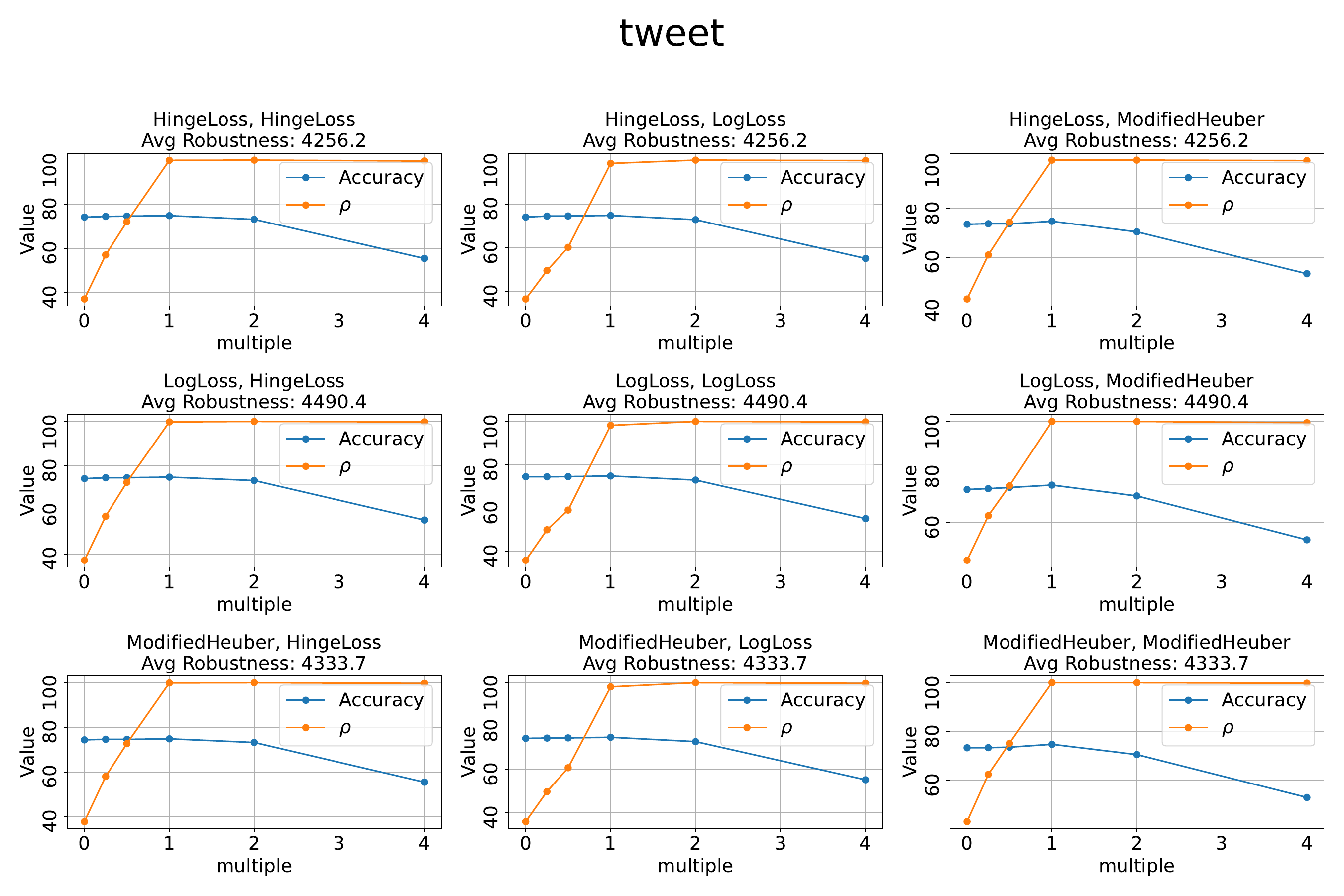}  
    \caption{Comparison of average accuracy vs $\rho$  for Tweet (BOW) dataset with $\{0, \frac{\uprob}{4}, \frac{\uprob}{2}, \uprob, 2\uprob, 4\uprob\}$ and loss functions.}  
    \label{fig:grid_tweet}  
\end{figure*}  
  
\begin{figure*}[!htbp]  
    \centering  
    \includegraphics[width=\textwidth]{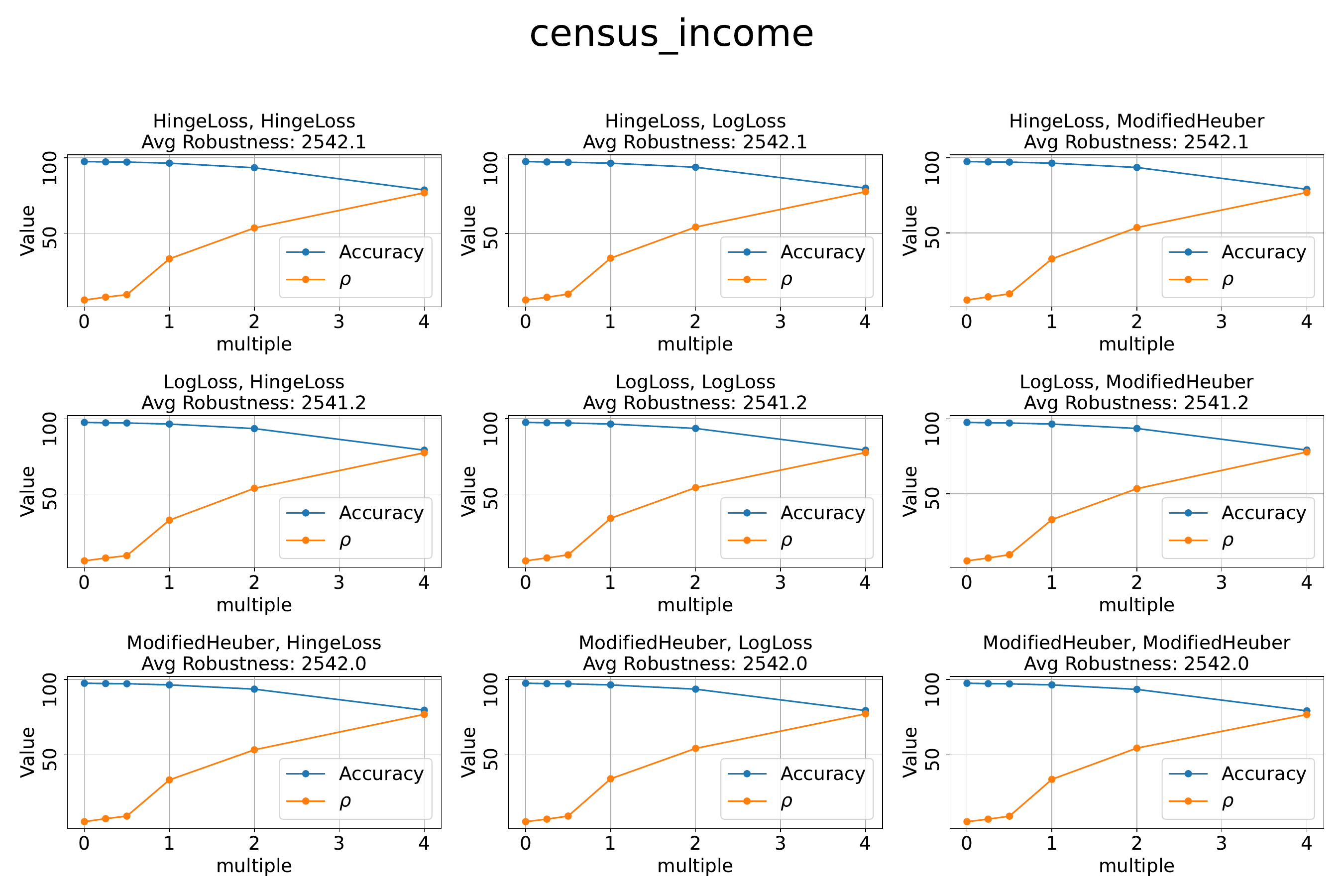}  
    \caption{Comparison of average accuracy vs $\rho$  for Census Income dataset with $\{0, \frac{\uprob}{4}, \frac{\uprob}{2}, \uprob, 2\uprob, 4\uprob\}$ and loss functions.}  
    \label{fig:grid_census_income}  
\end{figure*}  
\clearpage

\end{document}